\newcommand{\sX}{{\mathcal X}}
\newcommand{\sE}{\mathcal{E}}
\newcommand{\sN}{\mathcal{N}}
\newcommand{\sP}{\mathcal{P}}
\newcommand{\sC}{\mathcal{C}}
\newcommand{\bmtau}{\bm{\tau}}
\newcommand{\btau}{\bm{\tau}}
\newcommand{\bmeta}{\bm{\eta}}
\newcommand{\bgamma}{\bm{\gamma}}
\newcommand{\bpi}{\bm{\pi}}
\newcommand{\bmu}{\bm{\mu}}
\newcommand{\brho}{\bm{\rho}}
\newcommand{\blambda}{\bm{\lambda}}
\newcommand{\bPi}{\bm{\Pi}}
\newcommand{\ba}{\bm{a}}
\newcommand{\be}{\bm{e}}
\newcommand{\bn}{\bm{n}}
\newcommand{\bv}{\bm{v}}
\newcommand{\bw}{\bm{w}}
\newcommand{\bx}{\bm{x}}
\newcommand{\bC}{\bm{C}}
\newcommand{\bF}{\bm{F}}
\newcommand{\bH}{\bm{H}}
\newcommand{\bP}{\bm{P}}
\newcommand{\bbR}{\mathbb{R}} % real numbers
\newcommand{\bbZ}{\mathbb{Z}}
\newcommand{\si}{\Delta}
\newcommand{\supp}{\mathop{\mathrm{supp}}} % support
\newcommand{\argmin}{\operatornamewithlimits{arg\ min}}
\newcommand{\aff}{\mathop{\text{aff}}} % affine hull
\newcommand{\co}{\mathop { \textnormal{conv}}} % convex hull
\newcommand{\spa}{\mathop { \textnormal{span}}} % span
\newcommand{\rank}{\mathop { \textnormal{rank}}}
\newcommand{\unif}{\mathop { \text{unif}}}
\newcommand{\range}{\mathop { \text{range}}}
\newcommand{\sol}{\mathop { \textnormal{MultiResidue}}} % solution to multi-sample kappa^*
\newcommand{\ind}[1]{\bm{1}_{\{#1\}}} 
\newcommand{\set}[1]{\{#1\}}
\newcommand{\paren}[1]{\left(#1\right)}
\newcommand{\norm}[1]{\left\|#1\right\|}
\renewcommand{\vec}[1]{\bm{#1}}% Bold math
\renewcommand{\set}[1]{\{#1\}}
\newcommand{\p}[1]{\tilde{P}_#1}
\newcommand{\est}[1]{\widehat{#1}}
\newcommand{\ed}[1]{#1^\dagger} %for empirical estimator
\newcommand{\vc}{\gamma_{\vec{n}}} %for sum of vc inequality terms
\newtheorem{thm}{Theorem}
\newtheorem{prop}{Proposition}
\newtheorem{cor}{Corollary}
\newtheorem{defn}{Definition}
\begin{document}

\title{Decontamination of Mutual Contamination Models}

\author{\name Julian Katz-Samuels \email jkatzsam@umich.edu \\
			\addr Department of Electrical Engineering and Computer Science \\
			University of Michigan \\
			Ann Arbor, MI 48109-2122 USA
			\AND
			\name Gilles Blanchard \email gilles.blanchard@math.uni-potsdam.de \\
			\addr Universit\"{a}t Potsdam, Institut f\"{u}r Mathematik  \\
			D-14476 Potsdam, Germany 
			\AND
			\name Clayton Scott \email clayscot@umich.edu \\
			\addr Department of Electrical Engineering and Computer Science \\
			University of Michigan \\
			Ann Arbor, MI 48109-2122 USA}

\editor{Inderjit Dhillon}

\maketitle

\begin{abstract}%   <- trailing '%' for backward compatibility of .sty file
Many machine learning problems can be characterized by \emph{mutual contamination models}. In these problems, one observes several random samples from different convex combinations of a set of unknown base distributions and the goal is to infer these base distributions. This paper considers the general setting where the base distributions are defined on arbitrary probability spaces. We examine three popular machine learning problems that arise in this general setting: multiclass classification with label noise, demixing of mixed membership models, and classification with partial labels. In each case, we give sufficient conditions for identifiability and present algorithms for the infinite and finite sample settings, with associated performance guarantees.
\end{abstract}

\begin{keywords}
multiclass classification with label noise, classification with partial labels, mixed membership models, topic modeling, mutual contamination models\end{keywords}

\section{Introduction}
In many machine learning problems, the learner observes several random samples from different mixtures of unknown base distributions, with unknown mixing weights, and the goal is to infer these base distributions. Examples include binary classification with label noise, multiclass classification with label noise, classification with partial labels, and topic modeling. The goal of this paper is to develop a unified framework and set of tools to study statistical properties of these problems in a very general setting.

To this end, we use the general framework of mutual contamination models  \citep{blanchard2014}. In a mutual contamination model, there are $L$ distributions $P_1,\ldots, P_L$ called \emph{base distributions}. The learner observes $M$ random samples
\begin{align}
X^i_1, \ldots, X_{n_i}^i \overset{i.i.d.}{\sim} \tilde{P}_i = \sum_{j =1}^L \pi_{i, j} P_j \label{data_model}
\end{align}
where $i=1,\ldots, M$, $\pi_{i,j} \geq 0$, and $\sum_{j} \pi_{i,j} = 1$. Here $\pi_{i,j}$ is the probability that an instance of the \emph{contaminated distribution} $\tilde{P}_i$ is a realization of $P_j$. The $\pi_{i,j}$s and $P_j$s are unknown and the $\tilde{P}_i$s are observed through data. In this work, we avoid parametric models and assume that the sample space is arbitrary. The model can be stated concisely as
\begin{align}
\tilde{\vec{P}} = \vec{\Pi} \vec{P} \label{model}
\end{align}
where $\vec{P} = (P_1, \ldots, P_L)^T$, $\tilde{\vec{P}} = (\p{1},\ldots, \p{M})^T$, and $\vec{\Pi} = \begin{pmatrix}
\pi_{i,j}
\end{pmatrix}$ is an $M \times L$ matrix that we call the \emph{mixing matrix}.

In this paper we study {\em decontamination}  of mutual contamination models, which is the problem of recovering, or estimating, the base distributions $\vec{P}$ from the contaminated distributions $\tilde{\vec{P}}$ from which data are observed, without knowledge of the mixing matrix $\vec{\Pi}$. We focus our attention on three specific types of mutual contamination models, all of which describe modern problems in machine learning: multiclass classification with label noise, demixing of mixed membership models and classification with partial labels. We will demonstrate that these three decontamination problems can be addressed using a common set of concepts and techniques. Before elaborating our contributions in detail, we first offer an overview of the three specific mutual contamination models, and associated decontamination problems, that we study.

\textbf{Multiclass Classification with Label Noise:} In multiclass classification with label noise, $M = L$ and the goal is to recover $\vec{P}$. Each $P_i$ represents the distribution of a class of examples. The learner observes training examples with noisy labels, that is, realizations from the $\p{j}$s. This problem arises in nuclear particle classification \citep{scott2013}. When one draws samples of a specific particle, it is impossible to remove other types of particles from the background. Thus, each example is drawn from a mixture of the different types of particles.

\textbf{Demixing of Mixed Membership Models:} We consider the following decontamination problem in mixed membership models: given a sample from each $\p{i}$, recover $\vec{P}$ up to a permutation. We refer to this decontamination problem as \emph{demixing of mixed membership models}. This problem arises in the task of automatically uncovering the thematic topics of a corpus of documents. Under the mixed membership model approach, the words of each document are thought of as being drawn from a document-specific mixture of topics. Specifically, documents correspond to the $\p{i}$s and the topics to the $P_i$s. This approach is also referred to as topic modeling. As we discuss in the next section, our theory significantly generalizes existing topic modeling theoretical guarantees.

\textbf{Classification with Partial Labels:}\footnote{Classification with partial labels has also been referred to as the ``superset learning problem" or the ``multiple label problem" \citep{liu2014}.} In classification with partial labels, each data point is labeled with a \emph{partial label} $Y \subset \set{1, \ldots, L}$; the true label is in $Y$, but it is not known which label is the true one. In our setup, we view the $i$th random sample as having partial label $Y_i \coloneq \set{j: \bpi_{i,j}>0}$ and being distributed according to $\tilde{P}_i = \sum_{j \in Y_i} \pi_{i,j} P_j$. Thus, the learner observes training examples from the contaminated distributions $\tilde{\bP}$ and the \emph{partial label matrix} $\bPi^+ = (\ind{\vec{\Pi}_{i,j} > 0})$, and the goal is to recover $\vec{P}$. 

There are many applications of classification with partial labels because often abundant sources of data are naturally associated with information that can be interpreted as partial labels. For example, consider the task of face recognition. On the internet, there are many images with captions that indicate who is in the picture but do not indicate which face belongs to which person. A partial label could be formed by associating each face with the names of the individuals appearing in the same image \citep{cour2011}. 

Although our work emphasizes recovery of $\vec{P}$, it is also possible to think of decontamination of mutual contamination models as concerned with estimation of the mixing matrix $\bPi$. This estimate of $\bPi$ could be used as a plug-in for recently developed debiased losses for multiclass classification with label noise and classification with partial labels, which require knowledge of $\bPi$ \citep{cidsueiro2012, menon15icml, vanrooyan2015arxix, patrini2017}. 

In this paper, we make the following contributions: (i) We give sufficient conditions on $\vec{P}$, $\vec{\Pi}$, and $\bPi^+$ for identifiability of the three problems. (ii) We establish necessary conditions that in some cases match or are similar to the sufficient conditions. (iii) We introduce novel algorithms for the infinite and finite sample settings. These algorithms are nonparametric in the sense that they do not model $P_i$ as a probability vector or other parametric model. Our algorithmic contributions show that while all three problems can be described in a unified way, the special structure of multiclass classification with label noise allows for a substantially simpler algorithm. (iv) We develop novel estimators for distributions obtained by iteratively applying the $\kappa^*$ operator (defined below). (v) Finally, our framework gives rise to several novel geometric insights about each of these three problems and leverages concepts from affine geometry, multilinear algebra, and probability.

\subsection{Notation}
Let $\bbZ^+$ denote the positive integers. For $n \in \bbZ^+$, let $[n] = \set{1, \ldots, n}$. If $\vec{x} \in \bbR^K$, let $x_i$ denote the $i$th entry of $\vec{x}$. If $\vec{x}_j \in \bbR^K$, then $x_{j,i}$ denotes the $i$th entry of $\vec{x}_j$. Let $\vec{e}_i$ denote the length $L$ vector with $1$ in the $i$th position and zeros elsewhere. Let $\vec{\pi}_i \in \si_L \subset \bbR^L$ be the transpose of the $i$th row of $\vec{\Pi}$ where $\si_L$ denotes the $(L-1)$-dimensional simplex, i.e., $\si_L = \{ \vec{\mu} = (\mu_1,\ldots, \mu_L)^T \in \bbR^L \,| \, \sum_{i=1}^L \mu_i = 1 \text{ and }  \forall i:  \mu_i   \geq 0  \}$. Let $\Delta_L^M$ denote the product of $M$ $(L-1)$-dimensional simplices, viewed as the space of $M \times L$ row-stochastic matrices. Let $\sP$ denote the space of probability distributions on a measurable space $(\sX, \sC)$. Let $\supp(F)$ denote the support of a distribution $F$ on a Borel space.

\section{Related Work}

Our work makes various contributions to the statistical understanding of multiclass classification with label noise, demixing of mixed membership models, and classification with partial labels. In the following subsections, we discuss how our results improve upon and relate to previous results in the literature.

\subsection{Multiclass Classification with Label Noise}
\label{multiclass_label_noise_related}

There has not been much work on classification with multiclass label noise. By contrast, label noise in the binary setting has received a fair amount of attention. For a review of work prior to 2013, see \citet{scott2013}. More recently, \citet{natarajan2013} considered the binary label noise case where the label noise rates are known (in our case, the label noise rates are unknown). \citet{vanrooyan2015arxix} generalized the work of \citet{natarajan2013} to the multiclass case, but again assumed that the mixing proportions are known. Recent work has proposed various algorithms for the binary setting where the label noise rates are unknown \citep{scott2015,vanrooyen2015, aditya2015}, but these algorithms have not been generalized to the multiclass case. \citet{menon2016} consider the binary setting with instance-dependent corruption, but they assume that the class probability functions take the form of a single-index model, whereas we make no parametric assumptions on the $P_i$s. \citet{ghosh2017} consider multiclass label noise, but they make two restrictive assumptions: \emph{(i)} in the infinite sample setting, they assume that there exists some function belonging to the chosen hypothesis class that attains $0$ risk and \emph{(ii)} in the finite sample setting, they assume that the label noise is symmetric, i.e., there exists a constant $c \in (0,1)$ such that $\pi_{i,j} = \frac{c}{L-1}$ for all $i \neq j$. \citet{patrini2017} also study the multiclass setting, but they assume that if their neural network has access to sufficiently many samples, it can perfectly model $\Pr(\tilde{Y} = k \, | \, \bx)$ where  $\bx$ is a given feature vector and $\tilde{Y}$ is a corrupted label.  Unlike most previous work that aims to learn a classifier, our focus is on estimating the base distributions. Given these estimates, one could then design a classifier to optimize some performance measure. See, for example, Section 4.3 of our initial work on this subject \citep{blanchard2014}.

Another approach for modeling random label noise, in addition to the mutual contamination model, is the label flipping model. Indeed, several of the above-cited papers adopt this setting. In this model, the label $Y$ of a data point is flipped independently of its features $X$ and
\begin{align*}
\mu_{l,k} \coloneq \Pr(\tilde{Y} = k \, | \, Y = l)
\end{align*}
gives the probability that a data point with true label $Y=l$ is corrupted to have an observed label $\tilde{Y} = k$. Under the assumption that $Y$ and $X$ are jointly distributed, the $\mu_{l,k}$s can be related to the $\pi_{i,j}s$ via Bayes' rule. We choose to study the mutual contamination model because we find it more convenient to study the question of identifiability. 

In this paper, we extend \citet{scott2013}, which examined binary classification with label noise (the case where $M = L = 2$). The multiclass setting is significantly more challenging and, as such, requires novel sufficient conditions and mathematical notions. In particular, \citet{scott2013} use the notion of \emph{irreducibility} of distributions as one of their sufficient conditions.
\begin{defn}
For distributions $G$ and $H$, we say that $G$ is \emph{irreducible} with respect to $H$ if it is not possible to write $G = \gamma H + (1-\gamma) F$ where $F$ is a distribution and $0 < \gamma \leq 1$. 
\end{defn}
\begin{defn}
For distributions $G$ and $H$, we say that $G$ and $H$ are \emph{mutually irreducible} if $G$ is irreducible with respect to $H$ and $H$ is irreducible with respect to $G$. We denote
\begin{align*}
\text{IR} & = \set{(G,H) : G \text{ and } H \text{ are mutually irreducible distributions}}.
\end{align*}
\end{defn}
\citet{scott2013} require that $P_1$ and $P_2$ are mutually irreducible. To treat the multiclass setting, we introduce a generalization of mutual irreducibility, namely \emph{joint irreducibility}. 

The work presented below on multiclass label noise originally appeared in a conference paper \citep{blanchard2014}. The purpose of the present paper is to demonstrate that the framework developed in that paper can be extended to the other two decontamination problems, and to provide a unified presentation of the three settings. In particular, the joint irreducibility assumption plays a pivotal role in all three settings, as does the task of mixture proportion estimation. However, the decontamination procedures for the latter two problems are substantially more complicated than for multiclass classification with label noise.

\subsection{Demixing Mixed Membership Models}
\label{demix_problem_related}

Mixed membership models have become a powerful modeling tool for data where data points are associated with multiple distributions. Applications have appeared in a wide range of fields including image processing \citep{li}, population genetics \citep{pritchard}, document analysis \citep{blei2003}, and surveys \citep{berkman}. One particularly popular application is topic modeling on a corpus of documents, such as the articles published in the journal Science. Topic modeling is closely related to demixing of mixed membership models and our work may be viewed as studying topic modeling on general domains. 

In topic modeling, the base distributions $P_i$ correspond to topics and the contaminated distributions $\p{i}$ to documents, which are regarded as mixtures of topics. In most cases, the $P_i$s are assumed to have a finite sample space. A variety of approaches have been proposed for topic modeling. The most common approach assumes a generative model for a corpus of documents and determines the maximum likelihood fit of the model given data. However, because maximum likelihood is NP-hard, these approaches must rely on heuristics that can get stuck in local minima \citep{arora_beyond_2012}. 

Recently, a trend towards algorithms for topic modeling with provable guarantees has emerged. Most of these methods rely on the separability assumption \textbf{(SEP)} and its variants \citep{donoho, arora_beyond_2012, arora_practical_2012, ding_2013, ding2014, recht_2012, huang2016}. According to \textbf{(SEP)}, $P_1,\ldots, P_L$ are distributions on a finite sample space and for every $i \in \set{1,\ldots,L}$, there exists a word $x \in \supp(P_i)$ such that $x \not \in \cup_{j \neq i} \supp(P_j)$. Our requirement that $P_1,\ldots, P_L$ are jointly irreducible is a natural generalization of separability of $P_1,\ldots, P_L$, as we will argue below. Specifically, if $P_1,\ldots, P_L$ have discrete sample spaces, separability and joint irreducibility coincide; however, if $P_1,\ldots, P_L$ are continuous, under joint irreducibility, $P_1,\ldots, P_L$ can have the same support. 

A key ingredient in these algorithms is to use the assumption of a finite sample space to view the distributions as probability vectors in Euclidean space; this leads to approaches based on non-negative matrix factorization (NMF), linear programs, and random projections \citep{donoho, arora_beyond_2012, arora_practical_2012, ding_2013, ding2014, recht_2012, huang2016}. However, more general distributions cannot be viewed as finite-dimensional vectors. Therefore, topic modeling on general domains requires new techniques. Our work seeks to provide such techniques.

Topic modeling on general domains has several applications, including in high-energy physics \citep{metodiev2018, metodiev2018jet}. In collider data, quantum chromodynamics causes data samples to be a mixture of different types of particles, where the underlying fraction of the particle type is unknown. In this setting, it is of interest to recover information about each of the particles. Recently, \citet{metodiev2018jet} applied the Demix algorithm, Algorithm \ref{demix_alg} in the current paper, to this problem in the case $M=L=2$. 

Topic modeling on general domains is also relevant to recent empirical research on topic modeling with word embeddings, e.g., \citep{das2015gaussian, li2016, lichenliang2016, xun2017correlated, pmlr-v80-zhao18a}. Word embeddings map words to vectors in $\bbR^d$ in a semantically and syntactically meaningful way. Their use has been pivotal to the state-of-art performance of many algorithms in NLP \citep{luong2013}. Several algorithms for topic modeling with word embeddings model the topics as multivariate Gaussian distributions in order to handle words that do not belong to the vocabulary of the training dataset \citep{das2015gaussian, xun2017correlated}. Whereas current topic modeling algorithms with theoretical guarantees do not cover such a modeling approach, the generality of our algorithms does.

\subsection{Classification with Partial Labels}
Classification with partial labels has had two main formulations in previous work \citep{liu2014}. In  one formulation \textbf{(PL-1)}, instances from each class are drawn independently and the partial label for each instance is drawn independently from a set-valued distribution. In another formulation \textbf{(PL-2)}, training data are in the form of bags where each bag is a set of instances and the bag has a set of labels. Each instance belongs to a single class, and the set of labels associated with the bag is given by the union of the labels of the instances in the bag. Our framework is similar to \textbf{(PL-2)}, although it does not assume a joint distribution on the features of instances and the partial labels. 

Most work takes an empirical risk minimization approach to classification with partial labels \citep{jin2002, nyugen2008, cour2011, liu2012}. Typically, these algorithms aim to pick a classifier that minimizes the \emph{partial label error}: the probability that a given classifier assigns a label to a training instance that is not contained in the partial label associated with the training instance. By contrast, our approach is to estimate the base distributions. One could then use these estimates to train a classifier under some performance measure.

There has not been much theoretical work on developing a statistical understanding of classification with partial labels. \citet{cidsueiro2012} and \citet{vanrooyan2015arxix} develop methods for classification with partial labels that require knowledge of the mixing proportions, e.g., the probability that a label is in a partial label, given the true label. In this work, we make the more realistic assumption that the mixing proportions are unknown. 

\citet{liu2014} consider the question of learnability where the mixing proportions are unknown. They consider two main sufficient conditions for learnability of a partial label problem. First, they require that for every label $l \in [L]$, the probability that $l$ occurs with any particular distinct label $l^\prime$ is less than $1$. Our condition on the partial label (described in the next Section) is considerably weaker. For example, it permits the case where there are two labels $l \neq l^\prime$ such that whenever $l$ occurs in a partial label, $l^\prime$ also occurs. 

The second sufficient condition of \citet{liu2014} is based on the class distributions, partial label distributions \emph{and} the hypothesis class of choice. It requires that every hypothesis that attains zero partial label error also attains zero true error. While this condition may be useful for the selection of a suitable hypothesis class for an ERM approach, it is important to develop interpretable sufficient conditions that only depend on the characteristics of a partial label problem. Our work provides such conditions.

We also note that \citet{liu2014} consider the realizable case, that is, the case where the supports of $P_1,\ldots, P_L$ do not overlap. By contrast, we make the significantly weaker assumption that $P_1,\ldots, P_L$ are jointly irreducible, which allows $P_1,\ldots, P_L$ to have the same support. Thus, our work addresses the agnostic case in classification with partial labels.

\section{Sufficient Conditions for Identifiability}

We can think of each problem as requiring a specific factorization of $\tilde{\vec{P}}$ in terms of $\vec{P}$ and $\vec{\Pi}$. We say $\tilde{\vec{P}}$ is \emph{factorizable} if there exists $(\vec{\Pi}, \vec{P}) \in \Delta_L^M \times \sP^L$ such that $\tilde{\vec{P}} = \vec{\Pi} \vec{P}$; we call $(\vec{\Pi}, \vec{P})$ a \emph{factorization} of $\tilde{\vec{P}}$. Multiclass classification with label noise requires a specific ordering of the elements of $\vec{P}$; classification with partial labels requires that $\vec{\Pi}$ is consistent with $\bPi^+$ and a specific ordering of the elements of $\vec{P}$.

A factorization is not guaranteed to exist. For example, there is no factorization in the case where $M = 3$, $L=2$, and $\p{1}, \p{2}, \p{3}$ are linearly independent. When a factorization exists, in general it is not unique. For instance, consider the case where $L=M$, $(\vec{\Pi},\vec{P})$ solves (\ref{model}), and $\vec{\Pi}$ is not a permutation matrix. Then, another solution is $\tilde{\vec{P}} = \vec{I} \tilde{\vec{P}}$. Furthermore, there are infinitely many solutions in the following general case.

\begin{prop}
\label{many_factorizations}
Suppose that $\tilde{\vec{P}}$ has at least two distinct $\p{j}$s and has a factorization $(\vec{\Pi}, \vec{P})$. If there is some $\p{i}$ in the interior of $\co(P_1, \ldots, P_L)$, then there are infinitely many distinct non-trivial factorizations of $\tilde{\vec{P}}$.
\end{prop}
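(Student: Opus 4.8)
The plan is to reduce the claim to a statement in finite-dimensional affine geometry and then exhibit an explicit one-parameter family of factorizations. The key preliminary observation is that since $\sP$ is convex and each $P_j \in \sP$, the whole simplex $\co(P_1,\ldots,P_L)$ lies in $\sP$; hence any distribution chosen inside $\co(P_1,\ldots,P_L)$ is automatically a legitimate base distribution. Moreover every $\p{m}$ lies in $K := \co(P_1,\ldots,P_L)$, and $\p{i}$ lies in its relative interior. So it suffices to produce infinitely many $L$-tuples $(P_1',\ldots,P_L')$ of points of $K$ whose convex hull still contains every $\p{m}$, together with the resulting mixing matrices, and to check that the associated factorizations are mutually distinct and non-trivial.

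For the construction I would fix the strictly positive representation $\p{i} = \sum_{j} \pi_{i,j} P_j$ with all $\pi_{i,j} > 0$ guaranteed by relative interiority, and perturb a single base distribution toward $\p{i}$, setting
\[
P_1^{(t)} = (1-t) P_1 + t\,\p{i}, \qquad P_k^{(t)} = P_k \ (k \ge 2),
\]
for $t$ in a small interval $[0,\delta)$; each $P_1^{(t)} \in K \subseteq \sP$. Solving $P_1 = \tfrac{1}{a}\big(P_1^{(t)} - t\sum_{k\ge 2}\pi_{i,k}P_k\big)$ with $a = 1 - t\sum_{k\ge 2}\pi_{i,k} > 0$ and substituting into $\p{m} = \sum_j \pi_{m,j}P_j$ re-expresses each contaminated distribution as a combination of $P_1^{(t)},P_2,\ldots,P_L$; a direct computation shows the new weights always sum to one, so the only thing to verify is their nonnegativity. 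This produces a continuum of mixing matrices $\bPi^{(t)}$ and hence factorizations $(\bPi^{(t)}, \vec{P}^{(t)})$.

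To see that these are genuinely distinct and non-trivial, note that the hypothesis of two distinct $\p{j}$ forces the $P_j$ not to be all equal, so the proper mixture $\p{i}$ differs from the vertex $P_1$ and the map $t \mapsto P_1^{(t)}$ is injective; distinct values of $t$ therefore give distinct base vectors $\vec{P}^{(t)}$, and for $t>0$ the matrix $\bPi^{(t)}$ is not a permutation. This would yield infinitely many distinct non-trivial factorizations.

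The hard part is the nonnegativity check in the second step: after the perturbation the weight of $\p{m}$ on $P_k$ becomes $\pi_{m,k} - \tfrac{t}{a}\,\pi_{m,1}\pi_{i,k}$, which stays nonnegative for small $t$ precisely when no contaminated distribution puts positive mass on $P_1$ while omitting some other $P_k$. Thus one must choose the vertex to be perturbed (and, if necessary, the direction, moving toward or away from $\p{i}$) compatibly with the zero pattern of $\bPi$; the strict positivity of row $i$ supplies the slack that keeps $\p{i}$ itself feasible and, through the substitution, drives the argument, while the two-distinct hypothesis guarantees the perturbation is non-degenerate. I expect that verifying the existence of such a feasible vertex and direction under the stated hypotheses, and thereby ruling out configurations in which every vertex is pinned by some $\p{m}$, will be the main obstacle.
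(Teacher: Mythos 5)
Your construction has a genuine gap, and it is exactly the one you flag at the end: the nonnegativity of the perturbed mixing weights is not a technicality but is fatal to the vertex-perturbation scheme under the stated hypotheses. Concretely, take $L=3$, $P_j = \delta_j$ point masses on $\set{1,2,3}$, and $M=4$ contaminated distributions with mixing rows $(\tfrac12,\tfrac12,0)$, $(\tfrac12,0,\tfrac12)$, $(0,\tfrac12,\tfrac12)$, $(\tfrac13,\tfrac13,\tfrac13)$. The last row gives a $\p{i}$ in the interior of $\co(P_1,P_2,P_3)$ and there are two distinct $\p{j}$s, so the proposition applies; yet \emph{every} vertex is ``pinned'' in your sense: for each $j \in \set{1,2,3}$ there is a row $m$ with $\pi_{m,j}>0$ and $\pi_{m,k}=0$ for some $k$ with $\pi_{i,k}>0$, so the weight $\pi_{m,k} - \tfrac{t}{a}\pi_{m,j}\pi_{i,k}$ is strictly negative for every $t>0$ no matter which vertex you perturb. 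Your fallback of reversing direction ($t<0$, moving the vertex away from $\p{i}$) is also unavailable here: $(1-t)\delta_1 + t\,\p{i}$ assigns negative mass to the point $2$, so it is not a distribution, and nothing in the hypotheses prevents this (vertices may be extreme points of $\sP$ itself). So the configuration you hoped to rule out actually occurs, and the proposal does not go through as stated.

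The paper's proof avoids this entirely by never perturbing the unknown base distributions: it replaces them with the \emph{contaminated} distributions. With $i=1$ and $\p{1}\neq\p{2}$ (w.l.o.g.), interiority of $\p{1}$ gives $\delta>0$ such that $Q_\alpha = \alpha\p{1} + (1-\alpha)\p{2}$ remains a distribution for $\alpha \in (1,1+\delta)$; since $\p{1} = \tfrac{1}{\alpha}Q_\alpha + (1-\tfrac{1}{\alpha})\p{2}$ is a convex combination, every contaminated distribution lies in $\co(Q_\alpha,\p{2},\ldots,\p{L})$ with \emph{explicit} nonnegative weights (each $\p{m}$, $m\geq 2$, simply gets weight $1$ on itself), so there is nothing to check, and varying $\alpha$ yields infinitely many distinct non-trivial factorizations. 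Your overall strategy---reduce to affine geometry, use interiority to extend a segment slightly past an interior point---is the right instinct and is exactly how $Q_\alpha$ is built; the missing idea is that the new base tuple should be assembled from the $\p{m}$s, whose convex hull automatically contains all the data, rather than from perturbations of the $P_j$s, whose mixing-matrix zero pattern you cannot control.
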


\begin{proof}
Without loss of generality, suppose that $i = 1$ and $\p{1} \neq \p{2}$. Then, since $\p{1}$ is in the interior of $\co(P_1, \ldots, P_L)$, there is some $\delta > 0$ such that for any $\alpha \in (1, 1 + \delta)$, $Q_\alpha = \alpha \p{1} + (1- \alpha) \p{2}$ is a distribution. Then, $\co(\p{1}, \ldots, \p{L}) \subseteq \co(Q_\alpha, \p{2}, \ldots, \p{L})$   and, consequently, there is some $\vec{\Pi^\prime} \in \Delta_L^L$ such that $(\vec{\Pi^\prime}, (Q_\alpha, \p{2}, \ldots, \p{L})^T)$ solves (\ref{model}). Clearly, by varying $\alpha$, there are infinitely many solutions to (\ref{model}).
\end{proof}

Identifiability of each problem is equivalent to the existence of a unique factorization for that problem. Therefore, to establish identifiability for the three problems, we must impose conditions on $(\vec{\Pi}, \vec{P})$ and $\bPi^+$. To this end, we use the notion of joint irreducibility of distributions.
\begin{defn}
The distributions $\set{P_i}_{1 \leq i \leq L}$ are \emph{jointly irreducible} iff the following equivalent conditions hold
\begin{enumerate}
\item[(a)] For all $I \subset [L]$ such that $1 \leq |I| < L$, and $\epsilon_i$ such that $\epsilon_i \geq 0$ and $\sum_{i \in I} \epsilon_i = \sum_{i \not \in I} \epsilon_i = 1$, 
\begin{align*}
(\sum_{i \in I} \epsilon_i P_i, \sum_{i \not \in I} \epsilon_i P_i) \in \text{IR}.
\end{align*}
\item[(b)] $\sum_{i=1}^L \gamma_i P_i$ is a distribution implies that $\gamma_i \geq 0 \, \forall i$.
\end{enumerate}
\end{defn}
\noindent Conditions \emph{(a)} and \emph{(b)}, whose equivalence was established by \citet{blanchard2014}, give two ways to think about joint irreducibility. Condition \emph{(a)} says that every convex combination of a subset of the $P_i$s is irreducible (see Section \ref{multiclass_label_noise_related}) with respect to every convex combination of the other $P_i$s. Condition \emph{(b)} says that if a distribution is in the span of $P_1, \ldots, P_L$, it is in their convex hull. Joint irreducibility holds when each $P_i$ has a region of positive probability that does not belong to the support of any of the other $P_i$s; thus, separability (see Section \ref{demix_problem_related}) of the $P_i$s entails joint irreducibility of $P_1,\ldots, P_L$. However, the converse is not true: the $P_i$s can have the same support and still be jointly irreducible (e.g., $P_i$s Gaussian with a common variance and distinct means \citep{scott2013}). 

For all three problems, we assume that
\begin{description}
\item[(A)] $P_1, \ldots, P_L$ are jointly irreducible.
\end{description}
Henceforth, unless we say otherwise, $P_1, \ldots, P_L$ are assumed to be jointly irreducible. In Appendix \ref{experiments_section}, we provide experiments on real-world datasets that suggest that this assumption is reasonable.

We make different assumptions on $\vec{\Pi}$ for each of the three problems. For multiclass classification with label noise, we assume that 
\begin{description}
\item[(B1)] $\vec{\Pi}$ is invertible and $\vec{\Pi}^{-1}$ is a matrix with strictly positive diagonal entries and nonpositive off-diagonal entries. 
\end{description}
According to Lemma \ref{2_blanchard2014} below, this assumption essentially says that the problem has low noise in the sense that for each $i$, $\p{i}$ mostly comes from $P_i$. In particular, each $P_i$ can be recovered by subtracting small multiples of $\tilde{P}_j$, $j \neq i$ from $\tilde{P}_i$. For example, consider the following case where $\vec{\Pi}$ satisfies \textbf{(B1)}. Suppose that there is a ``common background noise" $\vec{c} \in \Delta_L$ that appears in different proportions in each of the distributions; formally, we have $\vec{\pi}_i = \gamma_i \vec{c} + (1-\gamma_i) \vec{e}_i$ with $\gamma_i \in [0,1)$. In other words, we shift each of the vertices $\vec{e}_i$ towards a common point $\vec{c}$ (see panel (iii) of Figure \ref{fig:A}). See \citet{blanchard2014} for a proof that this setup satisfies \textbf{(B1)}. In the binary case where $M = L = 2$, \textbf{(B1)} is equivalent to the simple condition that $\pi_{1,1} + \pi_{2,2} < 1$. This assumption roughly says that in expectation the majority of labels are correct. In Section \ref{multiclass_classification_alg}, we present Lemma~\ref{2_blanchard2014}, which gives a geometric interpretation of \textbf{(B1)}. 

For the demixing problem, we assume that
\begin{description}
\item[(B2)] $\vec{\Pi}$ has full column rank.
\end{description}
We note that \textbf{(B2)} is considerably weaker than \textbf{(B1)}, e.g., it allows $M > L$. Of course, it is natural to demand a weaker sufficient condition for demixing the mixed membership problem than muliticlass classification with label noise because the goal of the former problem is to recover any permutation of $\vec{P}$ while the goal of the latter is to recover $\vec{P}$ exactly. Nevertheless, the identifiability analysis to establish \textbf{(B2)} as a sufficient condition is also significantly more involved than the analysis of \textbf{(B1)}.

For classification with partial labels, we assume that  
\begin{description}
\item[(B3)] $\vec{\Pi}$ has full column rank and the columns of $\bPi^+$ are unique.
\end{description}
The assumption that the columns of $\bPi^+$ are unique says that there are no two classes that always appear together in the partial labels. In Appendix \ref{fact_res_app}, we argue that several of the above conditions are also necessary, or are not much stronger than what is necessary.

\section{Algorithms for the Population Case}
In this section, to establish that the above conditions are indeed sufficient for identifiability, we give a population case analysis of the three problems. The results on multiclass classification with label noise appeared in a conference paper \citep{blanchard2014}; we refer the reader to that paper for the proofs.

\subsection{Background}
This paper relies on the following quantity from \citet{blanchard2010}.
\begin{defn}
Given probability distributions $F_0, F_1$, define
\begin{align*}
\kappa^*(F_0 \, | \, F_1)  = \max \{ \kappa \in [0,1] | \, \exists \text{ a distribution } G \text{ s.t. } F_0 = (1-\kappa)G + \kappa F_1 \}.
\end{align*}
\end{defn}
The following Proposition from  \citet{blanchard2010} establishes some useful properties of $\kappa^*$.
\begin{prop}
\label{label_noise_bin_case}
Given probability distributions $F_0, F_1$ on a measurable space $(\sX, \sC)$, if $F_0 \neq F_1$, then $\kappa^*(F_0 \, | \, F_1) < 1$ and the above maximum is attained for a unique distribution $G$ (which we refer to as the \emph{residue of $F_0$ wrt. $F_1$}). Furthermore, the following equivalent characterization holds:
\begin{align*}
\kappa^*(F_0 \, | \, F_1) = \inf_{C \in \sC, F_1(C) > 0} \frac{F_0(C)}{F_1(C)}.
\end{align*}
\end{prop}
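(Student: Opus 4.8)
The plan is to reduce the entire statement to a single measure-theoretic observation: for $\kappa \in [0,1)$, a distribution $G$ with $F_0 = (1-\kappa)G + \kappa F_1$ exists if and only if the signed measure $\nu_\kappa := F_0 - \kappa F_1$ is nonnegative, and in that case $G$ is forced to equal $\nu_\kappa/(1-\kappa)$. Rearranging the defining equation gives $(1-\kappa)G = \nu_\kappa$; since $\nu_\kappa(\sX) = 1-\kappa$, the candidate $G = \nu_\kappa/(1-\kappa)$ automatically has total mass $1$, so the only remaining requirement is $G \geq 0$, i.e.\ $\nu_\kappa \geq 0$. This simultaneously delivers uniqueness of $G$ once $\kappa^* < 1$ is known, because $G$ is then determined by $\kappa^*$. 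The value $\kappa = 1$ must be treated separately: it is feasible only when $F_0 = F_1$, which we have excluded.

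Next I would translate nonnegativity of $\nu_\kappa$ into the infimum characterization. Writing $\beta := \inf_{C \in \sC,\, F_1(C) > 0} F_0(C)/F_1(C)$, the key step is that $\nu_\kappa \geq 0$ as a measure is equivalent to $\nu_\kappa(C) \geq 0$ for every $C \in \sC$ — this is precisely the definition of a nonnegative finite signed measure, justified via the Hahn decomposition if one wants to exhibit a witnessing set on which it fails — and $\nu_\kappa(C) \geq 0$ reads $F_0(C) \geq \kappa F_1(C)$. Requiring this for all $C$ is exactly $\kappa \leq F_0(C)/F_1(C)$ for every $C$ with $F_1(C) > 0$, i.e.\ $\kappa \leq \beta$. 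Hence the set of feasible $\kappa \in [0,1)$ is precisely $[0,1) \cap [0,\beta]$.

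It then remains to locate $\beta$ relative to $1$ and to conclude attainment. Taking $C = \sX$ shows $\beta \leq 1$. I would then show that $\beta = 1$ forces $F_0 = F_1$: the equality $\beta = 1$ means $F_0(C) \geq F_1(C)$ for all $C$ (trivially so when $F_1(C) = 0$), and since both measures have total mass $1$, applying this to $C$ and to $C^c$ yields $F_0(C) = F_1(C)$ everywhere. Consequently $F_0 \neq F_1$ gives $\beta < 1$, the feasible set is the closed interval $[0,\beta]$, and $\kappa^* = \max = \beta < 1$ is attained. Attainment in the required sense holds because $F_0(C) \geq \beta F_1(C)$ for all $C$ follows directly from the definition of the infimum $\beta$, so $\kappa = \beta$ is itself feasible; and the identity $\kappa^*(F_0 \mid F_1) = \beta$ is exactly what we derived.

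The only genuinely delicate point is the measure-theoretic equivalence in the second paragraph: ensuring that setwise nonnegativity of $F_0 - \kappa F_1$ coincides with its being a genuine nonnegative measure rather than merely a signed one, and that passing to the boundary value $\kappa = \beta$ keeps us inside the feasible set (closedness). Both are standard facts about finite signed measures, so I expect no real obstacle there; the remaining work is bookkeeping around the excluded case $\kappa = 1$ and the automatic mass normalization of $G$.
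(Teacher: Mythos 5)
Your proof is correct and complete in outline: the reduction to setwise nonnegativity of the signed measure $F_0 - \kappa F_1$ (which forces $G = (F_0-\kappa F_1)/(1-\kappa)$ and hence uniqueness), the identification of the feasible set with the closed interval $[0,\beta]$, and the complementation argument showing $\beta = 1$ would force $F_0 = F_1$ are all sound. The paper states this proposition without proof, citing \citet{blanchard2010}, and your argument is essentially the standard one from that reference, so there is nothing to reconcile.
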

\noindent \sloppy  Note that $\kappa^*( F_0 \, | \, F_1) = 0$ iff $F_0$ is irreducible wrt $F_1$. $\kappa^*(F_0 \, | \, F_1)$ can be thought of as the maximum possible proportion of $F_1$ in $F_0$. We can think of $1-\kappa^*(F_0 \, | \, F_1)$ as a statistical distance since it is non-negative and equal to zero if and only if $F_0 = F_1$. We refer to $\kappa^*$ as the two-sample $\kappa^*$ operator. To obtain the residue of $F_0$ wrt $F_1$, one computes Residue($F_0 \, | \, F_1$) (see Algorithm \ref{two_residue}); this is well-defined under Proposition \ref{label_noise_bin_case} when $F_0 \neq F_1$.

In order to gain intuition about $\kappa^*$, we briefly discuss how it can be used to recover $\bPi^{-1}$ in the case $L=2$. Under conditions discussed above \citep{scott2013}, it holds that 
\begin{align*}
\p{1} & = (1- \kappa_1) P_1 + \kappa_1 \p{2}, \text{ and} \\
\p{2} & = ( 1- \kappa_2) P_2 + \kappa_2 \p{1}.
\end{align*}
and $\kappa_1 = \kappa^*(\p{1} \, | \, \p{2})$ and $\kappa_2 = \kappa^*(\p{2} \, | \, \p{1})$. By rearranging this system of equations, we can write
\begin{align*}
\bP & = \bPi^{-1} \tilde{\bP} = \begin{pmatrix}
\frac{1}{1-\kappa_1} & - \frac{\kappa_1}{1-\kappa_1} \\
- \frac{\kappa_2}{1-\kappa_2} & \frac{1}{1-\kappa_2}
\end{pmatrix} \tilde{\bP}.
\end{align*}

Next, we turn to the multi-sample generalization of $\kappa^*$, which we call the multi-sample $\kappa^*$ operator.
\begin{defn}
Given distributions $F_0, \ldots, F_K$, define 
\begin{align}
& \kappa^*(F_0 \, | \, F_1, \ldots, F_K) = \max_{\bmu \in \si_K } \kappa^*(F_0 \, | \, \sum_{i=1}^K \mu_i F_i) \nonumber \\
= & \max \Big(  \sum_{i=1}^K \nu_i :   \nu_i \geq 0, \sum_{i=1}^K \nu_i \leq 1, \exists \text{ distribution } G \text{ s.t. } F_0 = (1 - \sum_{i=1}^K \nu_i) G + \sum_{i=1}^K \nu_i F_i \Big). \label{multisample_kappa}
\end{align}
\end{defn}
\noindent \citet{blanchard2014} establish the equivalence in line \eqref{multisample_kappa}, as well as Lemma \ref{A_1}, which shows that the outer maximum is always attained at some $\bmu  \in \si_K$, i.e., $\kappa^*$ is well-defined. Although there is always a $G$ achieving the max, it is not necessarily unique. Any $G$ attaining the maximum is called a \emph{maximizer} of $\kappa^*(F_0 \, | \, F_1, \ldots, F_K)$. The algorithm $\sol$($F_0 \, | \, \set{F_1, \ldots, F_K}$) returns one of these $G$ (see Algorithm \ref{multi_kappa_sol}). If $G$ is unique, we call $G$ the \emph{multi-sample residue of $F_0$ with respect to $\set{F_1, \ldots, F_K}$}. Under our proposed sufficient conditions, certain residues are shown to exist, and our decontamination methods compute such residues via Algorithm \ref{multi_kappa_sol}. In Section \ref{multiclass_classification_alg}, we discuss Lemma \ref{2_blanchard2014}, which establishes useful conditions under which a multi-sample residue exists and is equal to one of the vertices of $\si_L$.

In general, one cannot express the multi-sample version of $\kappa^*$ in terms of the two-sample version. However, it is possible in some special cases. For example, if one had access to feasible $\nu_1, \ldots, \nu_K$ that attain the optimum in \eqref{multisample_kappa}, then it holds that $\kappa^*(F_0 \, | \, F_1, \ldots, F_K) = \kappa^*(F_0 \, | \,\frac{ \sum_{i =1}^K \nu_i F_i}{\sum_{i=1}^K \nu_i})$. Further, it is possible to replace the multi-sample $\kappa^*$ with several calls of the two-sample $\kappa^*$ when $K = L-1$, $F_i = P_i$ for all $i \neq 0$ and $F_0 = \sum_{i=1}^L \alpha_i P_i$ where  $\sum_i \alpha_i = 1$ and $\forall i$ $\alpha_i > 0$ (see Lemmas \ref{single} and \ref{mod_ident_lemma}). 

We remark that in previous work that assumes $P_i$ are probability vectors, distributions are compared using $l_p$ distances. By contrast, in our setting of general probability spaces, we use $\kappa^*$ to compare different distributions. 

\begin{algorithm}[t]
\caption{Residue($F_0 \, | \, F_1$)}
\begin{algorithmic}[1]
\label{two_residue}
\STATE $\kappa \longleftarrow \kappa^*(F_0 \, | \, F_1)$
\RETURN $\frac{F_0 - \kappa F_1}{1 - \kappa}$
\end{algorithmic}
\end{algorithm}

\begin{algorithm}[t]
\caption{$\sol$($F_0 \, | \, \set{F_1, \ldots, F_K}$)}
\begin{algorithmic}[1]
\label{multi_kappa_sol}
\STATE 	$(\nu_1, \ldots, \nu_K)^T \longleftarrow (\nu^\prime_1, \ldots, \nu^\prime_K)^T \text{ achieving the maximum in } \kappa^*(F_0 \, | \, F_1, \ldots, F_K)$\\
\RETURN $\frac{F_0 - \sum_{i=1}^K \nu_i F_i}{1 - \sum_{i=1}^K \nu_i}$
\end{algorithmic}
\end{algorithm}

\subsection{Mixture Proportions}

Recall that we assume that $P_1,\ldots, P_L$ are jointly irreducible. If $\vec{\eta} \in \bbR^L$ and $Q = \vec{\eta}^T \vec{P}$, we say that $\vec{\eta}$ is the \emph{mixture proportion} of $Q$. Since by Lemma \ref{B_1}, joint irreducibility of $P_1,\ldots, P_L$ implies linear independence of $P_1,\ldots, P_L$, mixture proportions are well-defined, i.e., the mixture proportions are unique.

An important feature of our decontamination strategy is recovering various mixture proportions in the simplex $\si_L$. To make this precise, we introduce the following definitions. If $i \in [L]$, we say that $\co(\set{\vec{e}_j : j \neq i})$ is a \emph{face} of the simplex $\si_L$; if $A \subset [L]$ and $|A| = k$, we also say that $\co(\set{\vec{e}_j : j \in A})$ is a \emph{$k$-face} of $\si_L$. If $\vec{\eta} \in \bbR^L$, $Q$ is a distribution, and $Q = \vec{\eta}^T \vec{P}$, we say that $\sN(Q) = \sN(\vec{\eta}) = \set{j : \eta_j > 0}$ is the \emph{support set} of $\vec{\eta}$ or the support set of $Q$. Note that by joint irreducibity, $\sN(\vec{\eta})$ consists of the indices of all the nonzero entries in the mixture proportion $\vec{\eta}$. Finally, for $\vec{\eta}_i \in \si_L$, and $Q_i = \vec{\eta}_i^T \vec{P}$ for $i =1,2$, we say that the distributions $Q_1$ and $Q_2$ (or the mixture proportions $\vec{\eta}_1$ and $\vec{\eta}_2$) are \emph{on the same face} of the simplex $\si_L$ if there exists $j \in [L]$ such that $\vec{\eta}_1, \vec{\eta}_2 \in \co (\set{\vec{e}_k : k \neq j})$. 

The heart of our approach is that under joint irreducibility, one can interchange distributions $Q_1, \ldots, Q_K$ and their mixture proportions $\vec{\eta}_1, \ldots, \vec{\eta}_K$, as indicated by the following Proposition. We note that it is valid to to apply the $\kappa^*$ operator to $\vec{\eta}_1, \ldots, \vec{\eta}_K$ since they can be viewed as discrete probability distributions over $[L]$.
\begin{prop}
\label{equiv_opt}
Let $Q_i = \vec{\eta}_i^T \vec{P}$ for $i \in [L]$ and $\vec{\eta}_i \in \si_L$. Suppose $\vec{\eta}_1, \ldots, \vec{\eta}_L$ are linearly independent and $P_1, \ldots, P_L$ are jointly irreducible. Then,
\begin{enumerate}
\item for any $i \in [L]$ and $A \subseteq [L] \setminus \set{i}$, $\kappa^*(Q_i \, | \, \set{Q_j : j \in  A}) =  \kappa^*(\vec{\eta}_i \, | \, \set{\vec{\eta}_j : j \in  A})<1$,

\item for any $i \in [L]$ and $A \subseteq [L] \setminus \set{i}$, a maximizer of $\kappa^*( Q_i \, | \,  \set{Q_j : j \in  A})$ exists, and

\item $\vec{\gamma} \in \si_L$ is a maximizer to $\kappa^*(\vec{\eta}_i \, | \, \set{\vec{\eta}_j : j \in  A})$ if and only if $G = \vec{\gamma}^T \vec{P}$ is a maximizer to $\kappa^*( Q_i \, | \,  \set{Q_j : j \in  A})$. 
\end{enumerate}

\end{prop}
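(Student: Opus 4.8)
The plan is to transport everything through the linear map $\vec{\eta} \mapsto \vec{\eta}^T\vec{P}$, which I will abbreviate $T(\vec{\eta}) = \vec{\eta}^T\vec{P}$, and to show it carries the optimization problem behind $\kappa^*(\vec{\eta}_i \mid \{\vec{\eta}_j : j \in A\})$ onto the one behind $\kappa^*(Q_i \mid \{Q_j : j \in A\})$. Two consequences of joint irreducibility drive the argument. First, by Lemma \ref{B_1} the $P_k$ are linearly independent, so $T$ is injective and distinct mixture proportions give distinct distributions. Second — and this is the crucial equivalence — condition (b) of joint irreducibility shows that for $\vec{\gamma} \in \bbR^L$, $T(\vec{\gamma})$ is a probability distribution if and only if $\vec{\gamma} \in \si_L$: if $T(\vec{\gamma})$ is a distribution then (b) forces every $\gamma_k \geq 0$, and evaluating on $\sX$ (where each $P_k$ has total mass $1$) forces $\sum_k \gamma_k = 1$, while the converse is immediate since a convex combination of distributions is a distribution.

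With these in hand I would fix $i$ and $A \subseteq [L]\setminus\{i\}$ and prove a feasibility correspondence at each fixed weight vector $(\nu_j)_{j \in A}$ with $\nu_j \geq 0$ and $s := \sum_j \nu_j \leq 1$. For $s < 1$, a distribution $G$ satisfies $Q_i = (1-s)G + \sum_j \nu_j Q_j$ if and only if $\vec{\gamma} := (\vec{\eta}_i - \sum_j \nu_j \vec{\eta}_j)/(1-s)$ lies in $\si_L$, in which case $G = T(\vec{\gamma})$ and $\vec{\eta}_i = (1-s)\vec{\gamma} + \sum_j \nu_j \vec{\eta}_j$. The implication from the $\vec{\eta}$-side to the $Q$-side is immediate: apply $T$ and note $T(\vec{\gamma})$ is a distribution. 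The reverse implication is the main obstacle: one solves for $G$, writes it as $T(\vec{\gamma})$ with $\vec{\gamma}$ as above, and then uses the crucial equivalence to conclude $\vec{\gamma} \in \si_L$. This is exactly the step that fails without joint irreducibility, since a distribution lying in $\spa(P_1,\ldots,P_L)$ need not have a nonnegative mixture proportion. The boundary case $s = 1$ is infeasible on both sides, since it would force $\vec{\eta}_i = \sum_{j \in A} \nu_j \vec{\eta}_j$, contradicting the linear independence of $\vec{\eta}_1, \ldots, \vec{\eta}_L$.

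The three claims then fall out. Because the feasible set of weight vectors is literally the same for the two problems, their optima agree, which is the equality in item 1; and since the feasible set of pairs $((\nu_j), \vec{\gamma})$ is nonempty (take all $\nu_j = 0$, $\vec{\gamma} = \vec{\eta}_i$) and compact while $s = 1$ is infeasible, the common maximum is attained at some point with $s < 1$, giving $\kappa^* < 1$. For item 2, that attained maximizer $(\nu^*, \vec{\gamma}^*)$ on the $\vec{\eta}$-side yields the distribution $G = T(\vec{\gamma}^*)$ attaining the maximum on the $Q$-side, so a solution exists (this also follows directly from Lemma \ref{A_1}). For item 3, writing $\kappa^*$ for the common value, I would apply the correspondence at optimal weights: if $\vec{\gamma}$ solves the $\vec{\eta}$-problem with optimal $(\nu^*_j)$, applying $T$ shows $T(\vec{\gamma})$ solves the $Q$-problem; conversely, if $G = T(\vec{\gamma})$ solves the $Q$-problem with optimal $(\nu^*_j)$, the reverse implication produces $\vec{\gamma}' \in \si_L$ with $T(\vec{\gamma}') = G$ solving the $\vec{\eta}$-problem, and injectivity of $T$ gives $\vec{\gamma} = \vec{\gamma}'$.
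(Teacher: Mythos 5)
Your proof is correct and takes essentially the same approach as the paper's: both transport the optimization problem through the linear map $\vec{\eta} \mapsto \vec{\eta}^T \vec{P}$, using joint irreducibility to force the residual's mixture proportion into $\si_L$ and linear independence of $P_1, \ldots, P_L$ to identify the feasibility equations of the two problems, yielding a one-to-one correspondence between feasible (and hence optimal) points. The only cosmetic differences are that you establish attainment and strictness $\kappa^* < 1$ via a self-contained compactness and boundary argument on the $\vec{\eta}$-side, where the paper instead invokes Lemma \ref{A_1} for attainment and linear independence of $Q_1, \ldots, Q_L$ (via Lemma \ref{B_1}) for strictness.
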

\noindent In words, this proposition says that the optimization problem given by $\kappa^*(Q_i \, | \, \set{Q_j : j \in  A})$ is equivalent to the optimization problem given by $\kappa^*(\vec{\eta}_i \, | \, \set{\vec{\eta}_j : j \in  A})$. \emph{Thus, joint irreducibility of $P_1, \ldots, P_L$ and linear independence of the mixture proportions enable a reduction of each of the three problems to a geometric problem where the goal is to recover the vertices of a simplex by applying $\kappa^*$ to points (i.e., the mixture proportions) in the simplex.} This makes the figures below valid for general distributions (see Figures  \ref{fig:A}, \ref{2_case}, \ref{3_case}, and \ref{partial_label_figure}).

\subsection{Multiclass Classification with Label Noise}
\label{multiclass_classification_alg}

Our algorithm for multiclass classification with label noise is by far the simplest of the three. It simply computes a maximizer of $\kappa^*(\tilde{P}_i \, | \, \set{\tilde{P}_j : j \neq i})$ for every $i \in [L]$. 
\begin{thm}
\label{multiclass_identification}
Let $P_1, \ldots, P_L$ be jointly irreducible and $\vec{\Pi}$ satisfy \textbf{(B1)}. Then, Multiclass($\p{1}, \ldots, \p{L}$) returns $\vec{Q} \in \sP^L$ such that $\vec{Q} = \vec{P}$.
\end{thm}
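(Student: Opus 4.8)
The plan is to use Proposition \ref{equiv_opt} to transfer the entire computation from the space of distributions to the simplex $\si_L$, and then to solve the resulting finite-dimensional geometric problem explicitly using the sign structure of $\vec{\Pi}^{-1}$ supplied by \textbf{(B1)}. First I would record that $\p{i} = \vec{\pi}_i^T \vec{P}$, where $\vec{\pi}_i$ is the transpose of the $i$th row of $\vec{\Pi}$, and that under \textbf{(B1)} the matrix $\vec{\Pi}$ is invertible, so $\vec{\pi}_1, \ldots, \vec{\pi}_L$ are linearly independent. Together with assumption \textbf{(A)} this is exactly the hypothesis of Proposition \ref{equiv_opt} (with $M = L$, $Q_i = \p{i}$, $\vec{\eta}_i = \vec{\pi}_i$, and $A = [L] \setminus \set{i}$). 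By part (3) of that proposition, the distributions $G$ returned by $\sol(\p{i} \,|\, \set{\p{j} : j \neq i})$ are exactly the $\vec{\gamma}^T \vec{P}$ with $\vec{\gamma} \in \si_L$ a solution of the geometric problem $\kappa^*(\vec{\pi}_i \,|\, \set{\vec{\pi}_j : j \neq i})$. Hence it suffices to show that this geometric problem has the unique solution $\vec{\gamma} = \vec{e}_i$, since then $Q_i = \vec{e}_i^T \vec{P} = P_i$ for every $i$.

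Next I would unfold the geometric $\kappa^*$ into a linear program. A residue $\vec{\gamma} \in \si_L$ with weights $\nu_j \geq 0$ ($j \neq i$) is feasible iff $\vec{\pi}_i = (1-s)\vec{\gamma} + \sum_{j \neq i}\nu_j \vec{\pi}_j$ with $s = \sum_{j\neq i}\nu_j$. Because every $\vec{\pi}_k$ and every $\vec{e}_m$ sum to one, the sum-to-one constraint on $\vec{\gamma}$ is automatic, so the only real constraint is $\vec{\pi}_i - \sum_{j \neq i}\nu_j \vec{\pi}_j \geq \vec{0}$ componentwise, and linear independence forces any feasible $s < 1$; the objective is to maximize $s$. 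Setting $\vec{B} = \vec{\Pi}^{-1} = (b_{mk})$ and using the identity $\vec{e}_m = \sum_k b_{mk}\vec{\pi}_k$, I would exhibit the candidate $\vec{\gamma} = \vec{e}_i$ with $\nu_j = -b_{ij}/b_{ii}$: by \textbf{(B1)} these weights are nonnegative, and $\vec{\pi}_i - \sum_{j\neq i}\nu_j\vec{\pi}_j = b_{ii}^{-1}\vec{e}_i \geq \vec{0}$, so this point is feasible with value $s = 1 - b_{ii}^{-1}$ (obtained by summing coordinates).

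The main work is optimality together with uniqueness, which I would obtain from a single dual-type certificate built from the sign pattern of $\vec{B}$. Let $\phi_i$ be the linear functional returning the coefficient of $\vec{\pi}_i$ in the basis $\set{\vec{\pi}_1, \ldots, \vec{\pi}_L}$, so that $\phi_i(\vec{\pi}_j) = \delta_{ij}$ and $\phi_i(\vec{e}_m) = b_{mi}$. Applying $\phi_i$ to the feasibility equation annihilates every $\vec{\pi}_j$ with $j \neq i$ and yields $1 = (1-s)\phi_i(\vec{\gamma})$, hence $\phi_i(\vec{\gamma}) = (1-s)^{-1}$. Writing $\vec{\gamma} = \sum_m \gamma_m \vec{e}_m$ gives $\phi_i(\vec{\gamma}) = \gamma_i b_{ii} + \sum_{m \neq i}\gamma_m b_{mi} \leq b_{ii}$, since $\gamma_i \leq 1$, $b_{ii} > 0$, and each $b_{mi} \leq 0$ for $m \neq i$ by \textbf{(B1)}. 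Thus $s \leq 1 - b_{ii}^{-1}$, matching the value achieved by $\vec{e}_i$, and equality forces $\gamma_i = 1$ and $\sum_{m\neq i}\gamma_m b_{mi} = 0$, i.e. $\vec{\gamma} = \vec{e}_i$.

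This pins down $\vec{e}_i$ as the unique solution regardless of which maximizing weights $\nu_j$ the algorithm selects, so $\sol(\p{i} \,|\, \set{\p{j} : j \neq i})$ returns $P_i$ and $\vec{Q} = \vec{P}$. I expect the delicate point to be exactly this combined optimality/uniqueness step: the inequality $\phi_i(\vec{\gamma}) \leq b_{ii}$ and its equality case are precisely where the positive-diagonal/nonpositive-off-diagonal structure of \textbf{(B1)} is essential. One must also be careful to argue that, because the geometric residue $\vec{\gamma}$ is \emph{unique}, Proposition \ref{equiv_opt}(3) forces the algorithm's output to be $P_i$ even though the optimal weights $\nu_j$ themselves may fail to be unique.
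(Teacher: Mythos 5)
Your proof is correct and takes essentially the same route as the paper: the reduction through Proposition \ref{equiv_opt} is exactly the paper's first step, and your $\phi_i$-certificate is the implication $(3)\Rightarrow(1)$ of Lemma \ref{2_blanchard2014} proved inline --- indeed $\phi_i(\vec{\gamma}) = \vec{e}_i^T(\vec{T}^T)^{-1}\vec{\gamma}$ is precisely the linear-program objective appearing in the proof of the paper's Lemma \ref{single}, and your feasible point $\nu_j = -b_{ij}/b_{ii}$ reproduces its feasibility check via the row sums of $\vec{\Pi}^{-1}$. The only difference is presentational: the paper delegates the geometric step to \citet{blanchard2014} (including the uniqueness of the residue, which you rightly identify as the point that pins down the algorithm's output as $P_i$ despite possible non-uniqueness of the weights $\nu_j$), whereas you prove it self-containedly.
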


\begin{algorithm}[t]
\caption{Multiclass($\p{1}, \ldots, \p{L}$)}
\begin{algorithmic}[1]
\label{multiclass_alg}
\FOR{$i = 1, \ldots, L$}
\STATE $Q_i \longleftarrow \sol(\p{i} \, | \, \set{ \p{j} : j \neq i})$ 
\ENDFOR
\RETURN $(Q_1, \ldots, Q_L)^T$
\end{algorithmic}
\end{algorithm}

\noindent The proof of this result has two main ideas. First, it applies the one-to-one correspondence established in Proposition \ref{equiv_opt} between the maximizers of $\kappa^*( \p{i} \, | \,\set{ \p{j} : j \neq i})$ and the maximizers of $\kappa^*( \vec{\pi}_i \, | \, \set{ \vec{\pi}_j  : j \neq i})$. 

Second, the proof shows that $\kappa^*( \vec{\pi}_i \, | \, \set{ \vec{\pi}_j  : j \neq i})$ is well-behaved in the sense that the residue of $\bpi_i$ wrt $\set{\bpi_j: j \neq i}$ is $\be_i$. The key idea is encapsulated in the following Lemma from \citet{blanchard2014}.  
\begin{lemma}\emph{\textbf{\citep{blanchard2014}}}
\label{2_blanchard2014}
The following conditions on $\bpi_1, \ldots, \bpi_L$ are equivalent:
\begin{enumerate}
\item For each $i$, the residue of $\bpi_i$ with respect to $\set{\bpi_j, j \neq i}$ is $\be_i$.

\item For every $i$ there exists a decomposition $\bpi_i = \kappa_i \be_i + (1-\kappa_i) \bpi_i^\prime$ where $\kappa_i > 0$ and $\bpi_i^\prime$ is a convex combination of $\bpi_j$ for $j \neq i$.

\item $\vec{\Pi}$ is invertible and $\vec{\Pi}^{-1}$ is a matrix with strictly positive diagonal entries and nonpositive off-diagonal entries.

\end{enumerate}
\end{lemma}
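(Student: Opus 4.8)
The plan is to establish the three conditions as a cycle $(1)\Rightarrow(2)\Rightarrow(3)\Rightarrow(1)$. Throughout I would lean on one elementary but essential fact: since $\vec{\Pi}$ is row-stochastic we have $\vec{\Pi}\vec{1}=\vec{1}$, so whenever $\vec{\Pi}^{-1}$ exists it satisfies $\vec{\Pi}^{-1}\vec{1}=\vec{1}$, i.e. the inverse of a row-stochastic matrix has unit row sums. This is exactly what converts identities with nonnegative coefficients into genuine convex combinations, and the whole lemma is a self-contained statement about the points $\bpi_1,\ldots,\bpi_L$ in $\si_L$ (no distributions or joint irreducibility are needed).

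For $(1)\Rightarrow(2)$ I would simply unpack the definition of the residue. If the residue of $\bpi_i$ with respect to $\set{\bpi_j : j\neq i}$ is $\be_i$, then the maximizing decomposition reads $\bpi_i=(1-t^*)\be_i+t^*Q^*$ with $t^*=\kappa^*(\bpi_i\mid\set{\bpi_j:j\neq i})$ and $Q^*\in\co(\set{\bpi_j:j\neq i})$. Putting $\kappa_i=1-t^*$ and $\bpi_i'=Q^*$ gives precisely the decomposition in $(2)$; the only point to verify is $\kappa_i>0$, i.e. $t^*<1$. If $t^*=1$ then $\bpi_i\in\co(\set{\bpi_j:j\neq i})$, the residue enters with weight $0$, and every $G\in\si_L$ is a solution, so the residue is not unique, contradicting that it equals $\be_i$.

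For $(2)\Rightarrow(3)$ I would assemble the $L$ decompositions into one matrix identity. Writing the $i$-th row of $\vec{\Pi}$ as $\bpi_i^T=\kappa_i\be_i^T+(1-\kappa_i)\sum_{j\neq i}\lambda_{ij}\bpi_j^T$ with $\lambda_{ij}\geq 0$ and $\sum_{j\neq i}\lambda_{ij}=1$, define $\vec{C}$ by $C_{ii}=1$ and $C_{ij}=-(1-\kappa_i)\lambda_{ij}$ for $j\neq i$. Then $\vec{C}\vec{\Pi}=\vec{D}$ with $\vec{D}=\mathrm{diag}(\kappa_1,\ldots,\kappa_L)$. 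Since each $\kappa_i>0$, $\vec{D}$ is invertible, hence so is $\vec{\Pi}$, and $\vec{\Pi}^{-1}=\vec{D}^{-1}\vec{C}$ has $(i,i)$ entry $\kappa_i^{-1}>0$ and $(i,j)$ entry $-\kappa_i^{-1}(1-\kappa_i)\lambda_{ij}\leq 0$ for $j\neq i$, which is exactly $(3)$.

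The crux is $(3)\Rightarrow(1)$, where I must show $\be_i$ is not merely a feasible residue but the optimal and unique one. Write $\vec{A}=\vec{\Pi}^{-1}=(a_{ij})$ with $a_{ii}>0$ and $a_{ij}\leq 0$ for $j\neq i$. The $i$-th row of $\vec{A}\vec{\Pi}=\vec{I}$ gives $\sum_j a_{ij}\bpi_j=\be_i$; combined with the unit row sum $\sum_j a_{ij}=1$ (whence $a_{ii}=1+\sum_{j\neq i}(-a_{ij})\geq 1$) this rearranges into the convex combination $\bpi_i=a_{ii}^{-1}\be_i+\sum_{j\neq i}(-a_{ij}/a_{ii})\bpi_j$, exhibiting a feasible decomposition with weight $1-a_{ii}^{-1}$ on $\co(\set{\bpi_j:j\neq i})$ and residue $\be_i$. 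For optimality and uniqueness I would use the linear functional $f(\vec{x})=\sum_k a_{ki}x_k$ given by the $i$-th \emph{column} of $\vec{A}$, which is the dual coordinate to $\bpi_i$: from $\vec{\Pi}\vec{A}=\vec{I}$ we get $f(\bpi_j)=(\vec{\Pi}\vec{A})_{ji}=\delta_{ij}$, so $f\equiv 0$ on $\co(\set{\bpi_j:j\neq i})$, while $f(\be_k)=a_{ki}$ gives $f(G)\leq a_{ii}$ for all $G\in\si_L$ with equality iff $G=\be_i$ (since $a_{ii}\geq 1>0\geq a_{ki}$ for $k\neq i$). Applying $f$ to any feasible $\bpi_i=(1-t)G+tQ$ with $Q\in\co(\set{\bpi_j:j\neq i})$ yields $1=(1-t)f(G)$, so $f(G)=(1-t)^{-1}\leq a_{ii}$, i.e. $t\leq 1-a_{ii}^{-1}$, with equality forcing $f(G)=a_{ii}$ and hence $G=\be_i$. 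Thus the maximum is $1-a_{ii}^{-1}<1$, attained uniquely at $G=\be_i$, which is $(1)$. The main obstacle is exactly this uniqueness/optimality step; choosing the right test functional—a column of $\vec{\Pi}^{-1}$—makes it transparent, and the row-stochasticity of $\vec{\Pi}$ supplies the normalization $a_{ii}\geq 1$ that pins the maximizer down to the single vertex $\be_i$.
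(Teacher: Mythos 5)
Your proposal is correct and takes essentially the same route as the paper's argument (the paper defers the proof of Lemma \ref{2_blanchard2014} to \citet{blanchard2014}, but its generalization, Lemma \ref{single}, is proved by exactly this method): the easy directions are the explicit decomposition and matrix assembly, and the decisive direction $(3)\Rightarrow(1)$ is settled by maximizing the linear functional given by the $i$th column of $\vec{\Pi}^{-1}$ over the simplex, whose maximum is uniquely attained at $\vec{e}_i$, with row-stochasticity supplying the unit row sums of the inverse. Your dual-functional phrasing is just a slightly more self-contained rendering of the paper's equivalent linear program.
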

\noindent This lemma establishes that under \textbf{(B1)}, for each $i$, the residue of $\bpi_i$ with respect to $\set{\bpi_j, j \neq i}$ is $\be_i$. The main step in the proof of this Lemma is establishing that $\emph{3}$ implies $\emph{1}$. The argument identifies the residue of $\bpi_i$ with respect to $\set{\bpi_j}_{j \neq i}$ by reformulating the linear program in $\kappa^*(\bpi_i \, | \, \set{\bpi_j}_{j \neq i})$ such that the objective is to maximize $\be_i^t\bPi^{-1} \gamma$ subject to some appropriately defined constraint. By the structure of $\bPi^{-1}$ assumed in \textbf{(B1)}, it follows that the $\gamma \in \si_L$ that maximizes this objective is $\be_i$, and it can further be shown that this maximizer satisfies the other constraints.

Thus, combining the above two ideas yields the result. In addition, Lemma \ref{2_blanchard2014} provides geometric intuition as to when \textbf{(B1)} is satisfied through condition \emph{2}. Figure \ref{fig:A} illustrates the case $L = 3$. See Panel (i) for an example where condition (b) is satisfied and Panel (ii) for an example where (b) is not satisfied.

\begin{figure*}[t]
\centering
\begin{tabular}{ccc}
\includegraphics[width=1.5in]{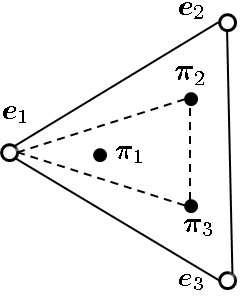} \hspace{5mm} &
\hspace{5mm} \includegraphics[width=1.5in]{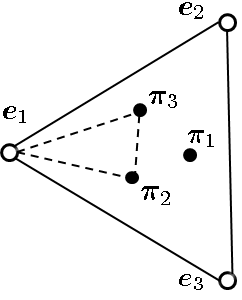} \hspace{5mm} &
\hspace{5mm} \includegraphics[width=1.5in]{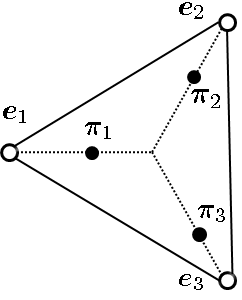} \\
(i) & (ii) & (iii)
\end{tabular}
\caption{Illustration of the \textbf{(B1)} when there are $L=3$ classes where $\be_i$ denotes the $i$th unit vector. Panel
(i): Low noise, $\vec{\Pi}$ recoverable. Each $\vec{\pi}_l$ can be written as a
convex combination of $\vec{e}_l$ and the other two $\vec{\pi}_j$ (with a
positive weight on $\vec{e}_l$), depicted here for $l = 1$. Panel
(ii): High noise, $\vec{\Pi}$ not recoverable. Panel (iii): The setting of
``common background noise" described in the text.}
\label{fig:A}
\end{figure*}

\subsection{Demixing Mixed Membership Models}
\label{demixing_mixed_membership_models_population_section}
%
%In essence, the Demix algorithm (see Algorithm \ref{demix_alg}) computes the inverse of $\vec{\Pi}$ in a sequential fashion using $\kappa^*$ (recall we are assuming $M=L$).

In this section, we assume that $M=L$; we consider the nonsquare case in the appendix. For certain simple cases of mixture proportions, a straightforward resampling strategy can be used to reduce the problem of demixing mixed membership models to multiclass classification with label noise. For example, suppose that there are $L = 3$ classes and
\begin{align}
\vec{\Pi} & = \begin{pmatrix}
\frac{1}{2} & \frac{1}{2} & 0 \\
\frac{1}{2} & 0 & \frac{1}{2} \\
0 & \frac{1}{2} & \frac{1}{2} 
\end{pmatrix}.
\label{example_problem}
\end{align}
Inspection shows that the inverse of $\bPi$ does not satisfy the condition in \textbf{(B1)} and, therefore, one cannot simply apply Algorithm \ref{multiclass_alg}. A simple procedure to circumvent this issue is to resample from the contaminated distributions to obtain the following distributions:
\begin{align*}
\tilde{Q}_1 & = \frac{1}{2} \p{1} + \frac{1}{2} \p{2}, \, \,
\tilde{Q}_2  = \frac{1}{2} \p{1} + \frac{1}{2} \p{3}, \, \, \text{and} \, \,
\tilde{Q}_3  = \frac{1}{2} \p{2} + \frac{1}{2} \p{3}.
\end{align*}
Then, it can be shown that the resulting mixing matrix
\begin{align*}
\tilde{\bPi} & = \begin{pmatrix}
\frac{1}{2} & \frac{1}{4} & \frac{1}{4} \\
\frac{1}{4} & \frac{1}{2} & \frac{1}{4} \\
\frac{1}{4} & \frac{1}{4} & \frac{1}{2}
\end{pmatrix}
\end{align*}
associated with the $\tilde{Q}_i$s satisfies the conditions of Lemma \ref{2_blanchard2014} so that Multiclass($\tilde{Q}_1, \tilde{Q}_2, \tilde{Q}_3$) gives the desired solution. However, this approach breaks down for most possible mixing matrices. Thus, the challenge is to develop an algorithm that works for a large class of mixture proportions and does not rely on knowledge of the mixture proportions. To meet this challenge, we propose the Demix algorithm.

The Demix algorithm is recursive. Let $S_1, \ldots, S_K$ denote $K$ contaminated distributions. In the base case, the algorithm takes as its input two contaminated distributions $S_1$ and $S_2$. It returns Residue($S_1 \, | \, S_2$) and Residue($S_2 \, | \, S_1$), which are a permutation of the two base distributions (see Figure \ref{2_case}). When $K > 2$, Demix uses a subroutine FindFace (see Algorithm \ref{findface_alg}) to find $K-1$ distributions $R_2,\ldots, R_K$ on the same $(K-1)$-face. FindFace iteratively generates candidates for distributions on the same $(K-1)$-face, which it tests using FaceTest (see Algorithm \ref{face_test_alg}). FaceTest$(S_1, \ldots, S_{K-1})$ determines whether a set of distributions $S_1, \ldots, S_{K-1}$ are on the interior of the same face by using the two-sample $\kappa^*$ operator; equivalently, it tests whether there exists a pair of distributions $S_i$ and $S_j$ such that $S_i$ is irreducible with respect to $S_j$. Once Demix finds $K-1$ distributions $R_2,\ldots, R_K$ on the same $(K-1)$-face, it recursively applies Demix to $R_2,\ldots, R_K$ to obtain distributions $Q_1,\ldots, Q_{K-1}$ that are a permutation of $K-1$ of the base distributions. Subsequently, the algorithm computes a maximizer $Q_K$ of $\kappa^*(\frac{1}{K} \sum_{i=1}^K S_i \, | \, Q_1, \ldots, Q_{K-1})$. Since $Q_1, \ldots, Q_{K-1}$ are a permutation of $K-1$ of the base distributions, the maximizer $Q_K$ is guaranteed to be unique and to be the remaining base distribution (see Figure \ref{3_case} for an execution of the algorithm).
\begin{algorithm}[t]
\caption{Demix($S_1, \ldots, S_K$)}
\textbf{Input: }$S_1, \ldots, S_K$ are distributions
\label{demix_alg}
\begin{algorithmic}[1]
\IF{$K = 2$}
%\STATE $Q_1 \longleftarrow$ Residue($S_1 \, | \, S_2$) \label{demix_res_1}
%\STATE $Q_2 \longleftarrow$ Residue($S_2 \, | \, S_1$) \label{demix_res_2}
\RETURN $(\text{Residue}(S_1 \, | \, S_2),  \text{Residue}(S_2 \, | \, S_1))^T$
\ELSE
\STATE $(R_2, \ldots, R_K)^T \longleftarrow \text{FindFace}(S_1, \ldots,S_K)$
\STATE $(Q_1, \ldots, Q_{K-1})^T \longleftarrow \text{Demix}(R_2, \ldots, R_K)$
\STATE $Q_K \longleftarrow \frac{1}{K} \sum_{i=1}^K S_i $ 
\STATE $Q_K \longleftarrow \sol(Q_K \, | \, Q_1, \ldots, Q_{K-1})$ \label{multi_sample_kappa_demix}
%\FOR{$i = 1, \ldots, K-1$} \label{last_loop_start_demix}
%\STATE $Q_K \longleftarrow \text{Residue}(Q_K \, | \, Q_i)$
%\ENDFOR \label{last_loop_end_demix}
\RETURN $(Q_1, \ldots, Q_K)^T$
\ENDIF
\end{algorithmic}
\end{algorithm}

\begin{algorithm}[t]
\caption{FindFace($S_1, \ldots, S_K$)}
\textbf{Input: }$S_1, \ldots, S_K$ are distributions
\label{findface_alg}
\begin{algorithmic}[1]
\STATE $Q \longleftarrow \text{ uniformly distributed element in} \co (S_2,\ldots,S_K)$
\FOR{$n=1,2,\ldots$}
\STATE Set $R_i \longleftarrow \text{Residue}(\frac{1}{n} S_i + \frac{n-1}{n} Q \, | \, S_1)$ for all $i \in \set{2,\ldots,K}$ \label{findface_res_line}
\IF{$\text{FaceTest}(R_2,\ldots,R_K)$}
\RETURN $(R_2, \ldots, R_K)^T$
\ENDIF
\ENDFOR
\end{algorithmic}
\end{algorithm}

\begin{algorithm}[t]
\caption{FaceTest($S_1,\ldots, S_K$)}
\begin{algorithmic}[1]
\label{face_test_alg}
\STATE Set $\vec{Z}_{i,j} \coloneq \vec{1}\{\kappa^*(S_i \, | \, S_j) > 0\}$ for all $i$ and $j$
\IF{$\vec{Z}$ has a zero off-diagonal entry}
\RETURN $0$
\ELSE
\RETURN $1$
\ENDIF
\end{algorithmic}
\end{algorithm}

A number of remarks are in order regarding the Demix algorithm. First, although we compute the residue of $\frac{1}{n} S_i + \frac{n-1}{n} Q$ wrt $S_1$ for each $i \neq 1$, there is nothing special about the distribution $S_1$. We could replace $S_1$ with any $S_j$ where $j \in [K]$, provided that we adjust the rest of the algorithm accordingly. Second, we can replace the sequence $\set{\frac{n-1}{n}}_{n=1}^\infty$ with any sequence $\alpha_n \nearrow 1$. Finally, we could replace line \ref{multi_sample_kappa_demix} with the following sequence of steps: for $i = 1, \ldots, K-1$, compute $Q_K \longleftarrow \text{Residue}(Q_K \, | \, Q_i)$ (see Lemma \ref{mod_ident_lemma}). Then, the algorithm would only use the two-sample $\kappa^*$ operator. We use such an algorithm in the finite-sample setting.

We also remark that a simplified version of Demix solves the demixing mixed membership models problem if we assume \textbf{(B1)} from the multiclass label noise setting. In that case, finding $L-1$ distributions on the same face can be accomplished by simply computing $Q_i \longleftarrow \sol$($\p{i} \, | \, \set{\p{j}}_{j \neq i}$) for $i = 2, \ldots, L$. Indeed, then, each $Q_i$ is equal to $P_i$ and $P_1$ can be obtained by computing $\sol(\p{1} \, | \, \set{Q_j}_{j =2, \ldots, L})$. 

We establish the following theorem.
\begin{thm}
\label{demix_identification}
Let $P_1, \ldots, P_L$ be jointly irreducible and $\vec{\Pi}$ have full column rank. Then, with probability $1$, Demix$(\tilde{\vec{P}})$ returns a permutation of  $\vec{P}$.
\end{thm}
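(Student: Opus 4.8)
The plan is to induct on the number $K$ of input distributions, using Proposition \ref{equiv_opt} to transfer every Residue and $\kappa^*$ computation from the distribution level to the level of mixture proportions in $\si_L$. Because $M = L$ and $\vec{\Pi}$ has full column rank, $\vec{\Pi}$ is invertible, so the mixture proportions $\vec{\pi}_1, \ldots, \vec{\pi}_L$ (its rows) are linearly independent; together with joint irreducibility of $P_1, \ldots, P_L$ this is exactly the hypothesis of Proposition \ref{equiv_opt}. I would prove the more general statement tailored to the recursion: for every $B \subseteq [L]$ with $|B| = K$, if $\set{P_j : j \in B}$ are jointly irreducible and $S_1, \ldots, S_K$ are mixtures of $\set{P_j : j \in B}$ whose mixture proportions $\vec{s}_1, \ldots, \vec{s}_K$ (supported on $B$) are linearly independent, then with probability one Demix$(S_1, \ldots, S_K)$ returns a permutation of $\set{P_j : j \in B}$. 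This closes up under the recursion provided any subcollection of jointly irreducible distributions is itself jointly irreducible, a fact I would record first.

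For the base case $K = 2$, the proportions $\vec{s}_1, \vec{s}_2$ are distinct points on the edge $\co(\vec{e}_a, \vec{e}_b)$ with $B = \set{a,b}$, and by Proposition \ref{equiv_opt} the two residues correspond to extending the line through them to the two endpoints $\vec{e}_a, \vec{e}_b$; hence Residue$(S_1 \mid S_2)$ and Residue$(S_2 \mid S_1)$ recover $\set{P_a, P_b}$. For the inductive step the geometric reading of each operation is what matters. Writing $\vec{q}$ for the mixture proportion of the random $Q \in \co(S_2, \ldots, S_K)$ and $\vec{\xi}_i^{(n)} = \tfrac1n \vec{s}_i + \tfrac{n-1}{n}\vec{q}$ for that of $\tfrac1n S_i + \tfrac{n-1}{n} Q$, the geometry of the two-sample residue (Proposition \ref{label_noise_bin_case}) identifies the mixture proportion $\vec{r}_i$ of $R_i = \text{Residue}(\, \cdot \mid S_1)$ as the point where the ray from $\vec{s}_1$ through $\vec{\xi}_i^{(n)}$ exits the subsimplex, i.e. the central projection of $\vec{\xi}_i^{(n)}$ away from $\vec{s}_1$ onto a facet. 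Granting that the loop halts with $\vec{r}_2, \ldots, \vec{r}_K$ on a common $(K-1)$-face $\co(\set{\vec{e}_j : j \in B \setminus \set{k}})$ and linearly independent, the induction hypothesis applied to the recursive call returns a permutation of $\set{P_j : j \in B \setminus \set{k}}$. It then remains to verify the final step: $\tfrac1K \sum_i S_i$ has proportion $\bar{\vec{s}} = \tfrac1K \sum_i \vec{s}_i$, whose $k$-th coordinate is strictly positive (the $\vec{s}_i$ cannot jointly vanish in coordinate $k$, being linearly independent), and a short computation shows $\kappa^*(\bar{\vec{s}} \mid \set{\vec{e}_j : j \neq k})$ has the unique solution $\vec{e}_k$; by part (3) of Proposition \ref{equiv_opt}, $\sol$ then returns $P_k$, completing the permutation.

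The main obstacle is the analysis of the FaceTest loop, which is where ``with probability one'' enters. As $n \to \infty$, $\vec{\xi}_i^{(n)} \to \vec{q}$ for every $i$, so all the rays from $\vec{s}_1$ through $\vec{\xi}_i^{(n)}$ converge to the single ray from $\vec{s}_1$ through $\vec{q}$; for generic $\vec{q}$ this limiting ray exits the subsimplex $\co(\set{\vec{e}_j : j \in B})$ through the relative interior of one facet (reaching a lower face would force two coordinates to vanish at once, a measure-zero event), so by continuity all $\vec{r}_i$ lie on that facet once $n$ is large and FaceTest returns $1$. For linear independence I would argue that on this facet the projection from $\vec{s}_1$ is a central (projective) map, injective and affine-independence preserving as long as $\vec{s}_1 \notin \aff\set{\vec{\xi}_2^{(n)}, \ldots, \vec{\xi}_K^{(n)}}$; since $\vec{\xi}_i^{(n)}$ is an injective affine image of the affinely independent $\vec{s}_i$, this exclusion fails only on a measure-zero set of $\vec{q}$ for each fixed $n$, hence is avoided simultaneously for all $n$ with probability one. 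Thus $\vec{r}_2, \ldots, \vec{r}_K$ are affinely independent, and since each lies in $\si_L$ (so $\sum_j r_{i,j} = 1$) this upgrades to the linear independence the recursion needs. The residual work—confirming FaceTest's correctness and checking that the reduction of Proposition \ref{equiv_opt} legitimately applies at each level (the relevant proportions staying linearly independent and supported on the current index set)—is routine once the projection analysis is in hand.
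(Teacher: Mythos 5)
Your proposal is correct and follows essentially the same route as the paper's proof: induction on $K$, transfer to simplex geometry via Proposition \ref{equiv_opt}, a measure-zero genericity argument for the random $Q$ (the paper's Step 1, using the sets $B_{i,j} = \co(\set{\bgamma_1} \cup \set{\vec{e}_k : k \neq i,j})$, makes precise your claim that the limiting ray exits through the relative interior of a facet), continuity of the residue to place the $R_i$ on a common facet for large $n$, and recovery of the final vertex after the recursive call. Your two local substitutions---central-projection reasoning in place of the determinant computations of Lemma \ref{pres_lin_ind}, and the direct computation of the multi-sample residue against the already-recovered vertices in place of Lemma \ref{single}---are both valid and equivalent in content to the paper's lemmas.
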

We briefly sketch three key aspects of the proof. First, in the FindFace subroutine, sampling $Q$ uniformly at random from $\co (S_2,\ldots,S_K)$ ensures that w.p. 1 $\text{Residue}( Q \, | \, S_1)$ is on the interior of a face of the simplex. Then, conditional on this event, we show that by a continuity property of $\text{Residue}( \cdot \, | \, S_1)$ there is a large enough $n$ such that $R_2, \ldots, R_K$ are on the same face of the simplex $\si_{K-1}$ (see panels (c) and (d) of Figure \ref{3_case}). Second, Proposition \ref{face_test} in Appendix \ref{ident_app} establishes that the subroutine $ \text{FaceTest}(R_2,\ldots,R_K)$ returns $1$ if and only if $R_2, \ldots, R_K$ are on the same face of the simplex. Combining the above two observations implies that eventually FindFace($S_1, \ldots, S_K$) terminates at which point $\set{R_k}_{k \in [K] \setminus \set{1}} \subset \set{P_k}_{k \in [K] \setminus \set{l}}$ for some $l \in [K]$.  The final key observation is that $\set{R_k}_{k \in [K] \setminus \set{1}}$ and $\set{P_k}_{k \in [K] \setminus \set{l}}$ form an instance of the demixing problem that satisfies the sufficient conditions \textbf{(A)} and \textbf{(B2)} (see Figure \ref{2_case}). Therefore, this instance can be solved recursively.

\begin{figure}[t]
\centering
\setlength{\textfloatsep}{5pt}
%\vspace{-2em}
\includegraphics[scale = .55]{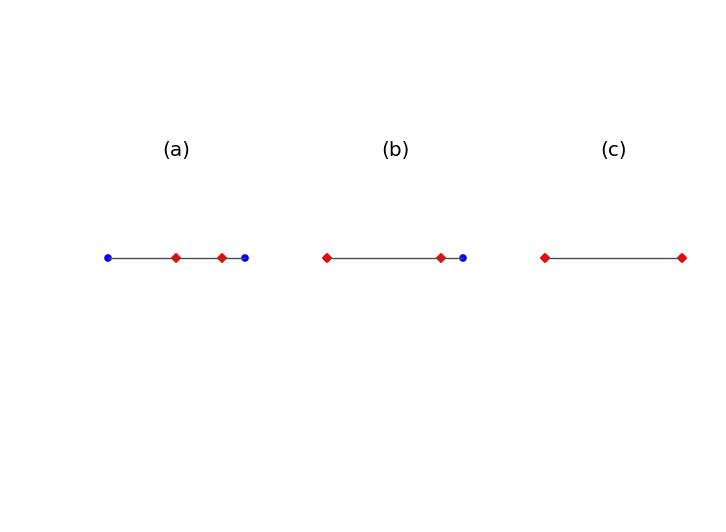}
\vspace{-8em}
\caption{In (a), we consider a demixing problem where there are two classes and $M=L$ (the base case of Algorithm \ref{demix_alg}). The diamonds represent the mixture proportions of $\p{1}$ and $\p{2}$. The circles represent the base distributions. In (b), the residue of a contaminated distribution wrt the other contaminated distribution is computed (line 3), yielding a base distribution. In (c), the residue is computed again switching the roles of the contaminated distributions (line 4); this yields the remaining base distribution.}
\label{2_case}
\end{figure}
\begin{figure}[t]
\centering
\setlength{\textfloatsep}{5pt}
\includegraphics[scale = .55]{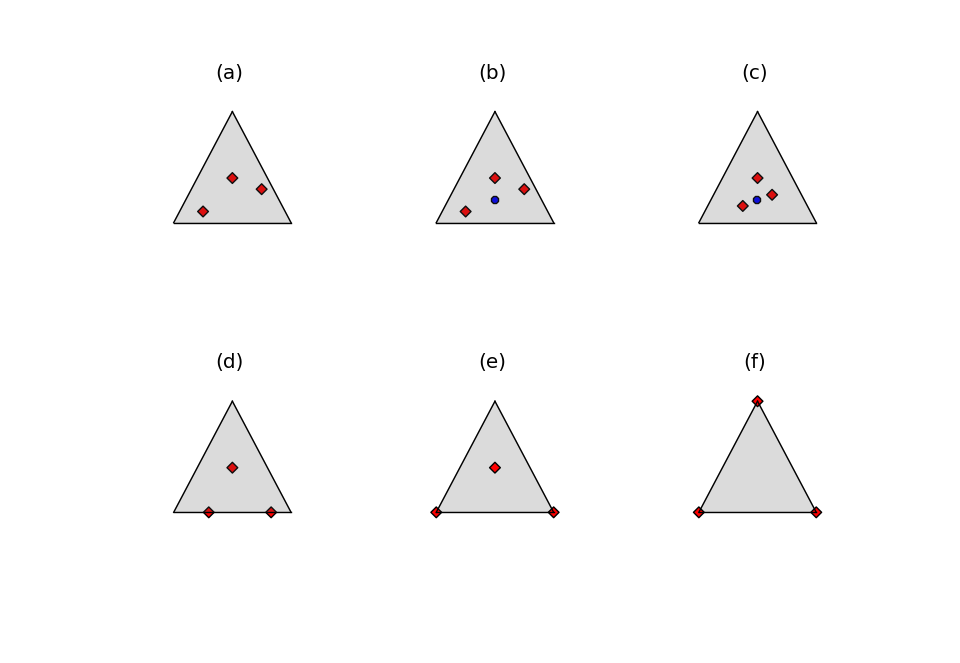}
\vspace{-4em}
\caption{In (a), we consider a demixing problem where there are three classes and $M=L$. The diamonds represent the mixture proportions of $\p{1},\p{2}$ and $\p{3}$. In (b), the blue circle is a random distribution chosen in the convex hull of two of the distributions (line 7). In (c), two of the distributions are resampled so that their residues wrt the other distribution are on the same face of the simplex (lines 12-15). In (d), these particular residues are computed (lines 12-15). In (e), two of the distributions are demixed (lines 3-5). In (f), the residue of the final distribution wrt the final two demixed distributions is computed to obtain the final demixing (line 18-21).}
\label{3_case}
\end{figure}

\subsection{Classification with Partial Labels}

As in the case of demixing mixed membership models, a simple resampling strategy works in certain nice settings of classification with partial labels. For example, consider an instance of classification with partial labels with the mixing matrix from equation \eqref{example_problem}. The resampling procedure that yields $\tilde{Q}_1, \tilde{Q}_2, \tilde{Q}_3$ (described in Section \ref{demixing_mixed_membership_models_population_section}) also works here. Nevertheless, as in demixing mixed membership models, this approach does not meet our goal of an algorithm that solves a broad class of mixing matrices and partial labels. 

Indeed, we observe that the partial labels do not provide enough information for choosing the resampling weights. Consider another instance of the problem with the same partial label matrix:
\begin{align}
\vec{\Pi} & = \begin{pmatrix}
\frac{1}{10} & \frac{9}{10} & 0 \\
\frac{9}{10} & 0 & \frac{1}{10} \\
0 & \frac{1}{10} & \frac{9}{10} 
\end{pmatrix}.
\label{example_problem_2}
\end{align}
Applying the resampling approach to \eqref{example_problem_2} can be shown to fail by observing that the inverse of the resampled mixing matrix does not satisfy condition \emph{3} of Lemma \ref{2_blanchard2014}. Thus, although the problem instances \eqref{example_problem} and \eqref{example_problem_2} have the same partial label matrix, the resampling procedure only works for one of these. 

\begin{algorithm}[t]
\caption{PartialLabel($\bPi^+, (\p{1}, \ldots, \p{M})^T$)}
\begin{algorithmic}[1]
\label{partial_label_alg}
\FOR{$i = 1, \ldots, L$}
\STATE $Q_i \longleftarrow \text{ uniformly random distribution in } \co(\p{1}, \ldots, \p{M})$ \label{randomization_partial_labels_line}
\ENDFOR
\FOR{$k=2,3,\ldots$}
\STATE $(W_1, \ldots, W_L)^T \longleftarrow \text{GenerateCandidates}(k, (Q_1, \ldots, Q_L)^T)$
\STATE $(\text{FoundVertices, } \vec{C}) \longleftarrow \text{VertexTest}(\bPi^+, \p{1}, \ldots, \p{M}, W_1, \ldots, W_L)$
\IF{FoundVertices}
\RETURN $\vec{C} (W_1, \ldots, W_L)^T$
\ENDIF
\ENDFOR
\end{algorithmic}
\end{algorithm}

\begin{algorithm}[t]
\caption{GenerateCandidates($k, (Q_1, \ldots, Q_L)^T$)}
\begin{algorithmic}[1]
\label{generate_candidate_alg}
\STATE Set $W_i \longleftarrow Q_i$ for all $i \in [K]$
\FOR{$i = 1, \ldots, L$}
\STATE $\bar{Q}_i \longleftarrow \frac{1}{L-1} [\sum_{j > i} Q_j + \sum_{j < i} W_j]$ \label{partial_label_alg_average}
\STATE $W_i \longleftarrow \sol(\frac{1}{k} Q_i + (1- \frac{1}{k}) \bar{Q}_i \, | \,   \set{Q_j }_{j > i} \cup \set{W_j}_{j < i})$ \label{solve_kappa_partial_labels_line}
\ENDFOR
\RETURN $ (W_1, \ldots, W_L)^T$
\end{algorithmic}
\end{algorithm}

\begin{algorithm}[t]
\caption{$\text{VertexTest}(\bPi^+, (\p{1}, \ldots, \p{M})^T, (Q_1, \ldots, Q_L)^T)$}
\begin{algorithmic}[1]
\label{vertex_test_alg}
\STATE Form the matrix $Y_{i,j} \coloneq \vec{1}\{\kappa^*(Q_i \, | \, Q_j) >0 \}$
\IF{$\vec{Y}$ has a non-zero off-diagonal entry}
\RETURN $(0, \vec{0})$
\ENDIF
\STATE Form the matrix $Z_{i,j} \coloneq \vec{1}\{\kappa^*(\p{i} \, | \, Q_j) > 0\}$
\STATE Use any algorithm that finds a permutation matrix $\vec{C}$ such that $\vec{Z} \vec{C} = \bPi^+$ (if it exists)
\IF{such a permutation matrix $\vec{C}$ exists}
\RETURN $(1, \vec{C}^T)$
\ELSE
\RETURN $(0, \vec{0})$
\ENDIF
\end{algorithmic}
\end{algorithm}
Next, we turn to presenting an algorithm that solves classification with partial labels for a wide class of mixing matrices and partial labels. We propose the PartialLabel algorithm (see Algorithm \ref{partial_label_alg}). PartialLabel proceeds by iteratively creating sets of candidate distributions $\vec{W} \coloneq (W_1, \ldots, W_L)^T$ via the subroutine CreateCandidates (see Algorithm \ref{generate_candidate_alg}). Given each $\vec{W}$, it runs an algorithm VertexTest (see Algorithm \ref{vertex_test_alg}) that uses $\tilde{\vec{P}}$ and the partial label matrix $\bPi^+$ to determine whether $\vec{W}$ is a permutation of the base distributions $\vec{P}$. If $\vec{W}$ is a permutation of $\vec{P}$, VertexTest constructs the corresponding permutation matrix for relating these distributions. If not, it reports failure and the PartialLabel algorithm increments $k$ and finds another set of candidate distributions. 

The VertexTest algorithm proceeds as follows on a vector of candidate distributions $\vec{Q} \coloneq (Q_1, \ldots, Q_L)^T$. First, it determines whether there are two distinct distributions $Q_i, Q_j$ such that $Q_i$ is not irreducible wrt $Q_j$, in which case $\vec{Q}$ cannot be a permutation of $\vec{P}$. If there is such a pair, it reports failure. Otherwise, it forms the matrix $Z_{i,j} \coloneq \vec{1}\{\kappa^*(\p{i} \, | \, Q_j) > 0\}$ and uses any algorithm that finds a permutation $\vec{C}$ (if it exists) of the columns of $\vec{Z}$ to match the columns of $\bPi^+$. If such a permutation $\vec{C}$ exists, it returns $\vec{C}^T$ and, as we show in Lemma \ref{vertex_test}, $\vec{C}^T \vec{Q} = \vec{P}$; otherwise, VertexTest reports failure.

We remark that finding the permutation in line 6 of Algorithm \ref{vertex_test_alg} is not NP-hard. One algorithm (but most likely not the most efficient) proceeds as follows: define a total ordering on the columns of binary matrices. Sort the columns of $\vec{Z}$ and $\bPi^+$ according to this total ordering. Check whether the resulting matrices are equal.

The following theorem gives our identification result for classification with partial labels.
\begin{thm}
\label{partial_identification}
Suppose that $P_1, \ldots, P_L$ are jointly irreducible, $\vec{\Pi}$ has full column rank, and the columns of $\bPi^+$ are unique. Then, $\text{PartialLabel}(\bPi^+, (\tilde{\vec{P}})^T)$ returns $\vec{R} \in \sP^L$ such that $\vec{R} = \vec{P}$. 
\end{thm}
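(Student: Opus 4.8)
The plan is to transfer the entire analysis from distributions to mixture proportions in the simplex $\si_L$ via Proposition \ref{equiv_opt}, and to treat the algorithm as a deterministic geometric iteration there. First I would use the randomization in line \ref{randomization_partial_labels_line}: since $\vec{\Pi}$ has full column rank, the polytope $\co(\bpi_1,\ldots,\bpi_M)$ is $(L-1)$-dimensional inside $\si_L$, so with probability $1$ the initial mixture proportions $\bw_1,\ldots,\bw_L$ of $Q_1,\ldots,Q_L$ are affinely --- hence linearly --- independent and avoid the finitely many measure-zero affine subspaces on which the residue computations are degenerate. Conditioning on this generic event and invoking Proposition \ref{equiv_opt}, every call to $\sol$ and every $\kappa^*$ test in Algorithm \ref{partial_label_alg} is replaced by the corresponding operation on points of $\si_L$, where $\sol(\cdot\,|\,\text{refs})$ solves the linear program of subtracting as much total mass of the $L-1$ reference points as possible while remaining in $\si_L$.

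Second, I would establish the correctness of VertexTest independently of how the candidates $\vec{W}$ are produced; this is essentially the content of Lemma \ref{vertex_test}. The key computation is that if a candidate $W_j$ equals a base distribution $P_{\sigma(j)}$, then by joint irreducibility (condition (a) of the definition) $\kappa^*(\p{i}\,|\,W_j)>0$ holds exactly when $\pi_{i,\sigma(j)}>0$, i.e.\ exactly when $S_{i,\sigma(j)}=1$; hence the matrix $\vec{Z}$ with entries $Z_{i,j}=\ind{\kappa^*(\p{i}\,|\,W_j)>0}$ is a column permutation of $\vec{S}$. Because the columns of $\vec{S}$ are unique (assumption \textbf{(B3)}), the permutation $\vec{C}$ matching the columns of $\vec{Z}$ to those of $\vec{S}$ is unique, so VertexTest returns $\vec{C}$ with $\vec{C}^T\vec{W}=\vec{P}$ (completeness). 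I would also verify soundness: the preliminary pairwise-irreducibility screen together with the column-matching step ensures VertexTest reports success only when its returned output equals $\vec{P}$, so the while loop can never terminate with an incorrect answer.

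Third --- and this is the main obstacle --- I would prove that for $k$ large enough the sweep produces candidates $W_1,\ldots,W_L$ equal to a permutation of the vertices $\be_1,\ldots,\be_L$ of $\si_L$, so that the exact tests in VertexTest can succeed at a finite stage. On the generic event, the linear program defining $\bw_i=\sol(\tfrac1k\bq_i+(1-\tfrac1k)\bar{\bq}_i\,|\,\text{refs})$ activates $L-1$ of the $L$ nonnegativity constraints, so its optimum is exactly a vertex of $\si_L$; which vertex is selected is governed, once $\tfrac1k$ is small, by the sign pattern of the perturbation direction $\bq_i-\bar{\bq}_i$, and this binding set stabilizes as $k$ grows, so the outcome is attained exactly for all $k\ge k^\ast$. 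The delicate points are to show that (a) the binding set indeed stabilizes to $L-1$ coordinates with no degeneracy (guaranteed on the generic event), and (b) the Gauss--Seidel ordering --- which feeds each already-recovered vertex $\bw_j$, $j<i$, back in as a reference --- forces the $L$ recovered vertices to be distinct and hence to exhaust $\{\be_1,\ldots,\be_L\}$. For (b) I would argue inductively in the spirit of the proof of Theorem \ref{demix_identification}: a residue never lies in the convex hull of its references, so $\bw_i$ cannot coincide with any previously recovered vertex, and the full-rank and general-position hypotheses keep the reduced configuration non-degenerate at each step.

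Finally I would assemble the pieces: on the probability-one event, the inner loop yields candidates equal to a permutation of $\vec{P}$ at the finite stage $k^\ast$ (and, by soundness, never triggers a spurious early termination), at which point VertexTest returns the unique matching permutation $\vec{C}$, and Algorithm \ref{partial_label_alg} outputs $\vec{C}^T\vec{W}=\vec{P}$, as claimed.
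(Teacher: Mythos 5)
Your overall architecture matches the paper's proof: pass to mixture proportions via Proposition \ref{equiv_opt}, invoke Lemma \ref{vertex_test} for VertexTest (citing it for both directions is legitimate, as the paper's proof does the same), and argue inductively along the Gauss--Seidel sweep that for $k$ large enough each $\sol$ call returns a fresh vertex of $\si_L$. Steps one, two, and four are sound. But your third step --- the heart of the proof --- has a genuine gap.

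The claim that the linear program defining $W_i$ ``activates $L-1$ of the $L$ nonnegativity constraints, so its optimum is exactly a vertex of $\si_L$'' is false as a general-position fact. With $L-1$ reference points, a basic optimal solution binds constraints drawn from \emph{both} families $\nu_j \geq 0$ and $\gamma_m \geq 0$, and generically the residue of an arbitrary point with respect to $L-1$ references is a non-vertex boundary point of $\si_L$: for $L=3$, take the two references near the centroid; the union of the three cones $\co(\be_j, \btau_2, \btau_3)$ does not cover the triangle, and any $\blambda$ outside that union has its residue in the relative interior of an edge. The reason a vertex is obtained in the algorithm is not LP genericity but the specific geometry created by mixing toward $\bar{\bq}_i$: the paper shows that the hyperplane $\aff(\be_{i_1},\ldots,\be_{i_{n-1}},\btau_{n+1},\ldots,\btau_L)$ through the already-recovered vertices and remaining references has an open halfspace $\bH$ containing $\btau_n$ and at least one \emph{unrecovered} vertex (this existence needs its own short argument --- otherwise $\btau_n \not\in \si_L$), and that for $k \geq K_n$ the point $\blambda_k$ lies in $\co(\be_j, \be_{i_1},\ldots,\be_{i_{n-1}},\btau_{n+1},\ldots,\btau_L)^\circ$ for \emph{every} $\be_j \in \bH$; Lemma \ref{single} then certifies that the residue is exactly the $\be_j$ with the minimal $\kappa_j$, \emph{provided that minimizer is unique}. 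That uniqueness is not covered by your ``generic event'': it requires the dedicated probability-one events that $\be_l - \be_j$ together with the remaining $\btau$'s and the recovered vertices are linearly independent, for all pairs $l \neq j$ and all candidate recovered subsets $\set{i_1,\ldots,i_{n-1}}$, intersected in advance since which vertices get recovered is itself random. Without this tie-breaking, $\sol$ may have multiple maximizers and return a non-vertex solution. Your distinctness observation (a residue cannot lie in the convex hull of its references, and the previously recovered $W_j$ are references) is correct on the generic event and reaches the same conclusion as the paper's halfspace argument, but it only bites once the residue is known to be a vertex, which is exactly what you have not established. Finally, your ``binding set stabilizes as $k$ grows'' claim is neither proved nor needed: the paper allows the selected vertex to vary with $k$, requiring only that for all $k \geq K_n$ the first $n$ outputs are distinct base distributions.
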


There are two key ideas to the proof of Theorem \ref{partial_identification}. First, the randomization in line \ref{randomization_partial_labels_line} of Algorithm \ref{partial_label_alg} ensures through a linear independence argument that with probability $1$, the operation $\sol(\frac{1}{k} Q_i + (1- \frac{1}{k}) \bar{Q}_i \, | \,   \set{Q_j }_{j > i} \cup \set{W_j}_{j < i})$ in line \ref{solve_kappa_partial_labels_line} of Algorithm \ref{generate_candidate_alg} is well-defined. Second, in the GenerateCandidates algorithm, let $Q_j = \btau_j^T \bP$ and $W_j = \bgamma_j^TP$. We make the simple observation that the affine hyperplane given by $\bgamma_1, \ldots, \bgamma_{i-1}, \btau_{i+1}, \ldots, \btau_L$ bisects $\Delta_L$ such that $\btau_i$ and a nonempty subset of $\set{\be_1, \ldots, \be_L} \setminus \set{\bgamma_1,\ldots, \bgamma_{i-1}}$ are in the same halfspace. Using this observation, we show that for large enough $k$, $W_i$ is one of the base distributions and is distinct from all $W_j $ with $j < i$. 

The VertexTest algorithm connects the demixing problem and classification with partial labels by showing that any algorithm that solves the demixing problem can be used as a subroutine to solve classification with partial labels. For example, consider the following algorithm for classification with partial labels. First, use the Demix algorithm to obtain a permutation $\vec{Q}$ of the base distributions $\vec{P}$. Second, use VertexTest to find the permutation matrix relating $\vec{Q}$ and $\vec{P}$. This alternate algorithm is the basis of our finite sample algorithm for classification with partial labels (see Section \ref{partial_label_est_sec} for a more thorough discussion).

\begin{figure}[t]
\centering
\includegraphics[scale = .6]{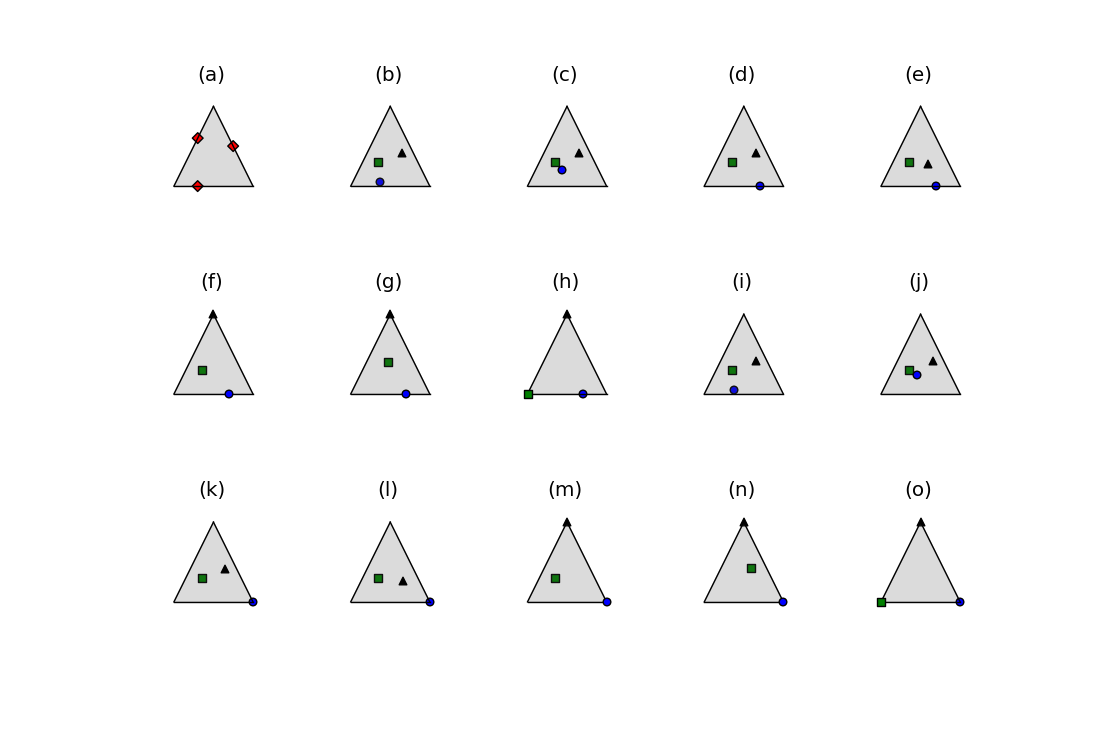}
\vspace{-4em}
\caption{(a) depicts an instance of the partial label problem where there are $L=3$ classes, $M=3$ partial labels, and each partial label only contains two of the classes. In (a), the red diamonds represent the mixture proportions of the distributions $\p{1}, \p{2}, \p{3}$. In (b), three distributions $Q_1, Q_2, Q_3$ are sampled uniformly randomly from the convex hull of $\p{1}, \p{2}, \p{3}$; the blue circle, black triangle, and green square represent their mixture proportions. Figures (c)-(h) show how the algorithm generates a set of candidate distributions $(W^{(2)}_1,W^{(2)}_2,W^{(2)}_3)^T$ with $k=2$. In (h), PartialLabel runs VertexTest on $(W^{(2)}_1,W^{(2)}_2,W^{(2)}_3)^T$ and determines that $(W^{(2)}_1,W^{(2)}_2,W^{(2)}_3)^T$ is not a permutation of $(P_1, P_2,P_3)^T$. In (i)-(o), PartialLabel begins again with $Q_1, Q_2, Q_3$ and executes the same series of steps with $k=3$, generating $(W^{(3)}_1,W^{(3)}_2,W^{(3)}_3)^T$. In (o), it runs VertexTest on $(W^{(3)}_1,W^{(3)}_2,W^{(3)}_3)^T$ and determines that $(W^{(3)}_1,W^{(3)}_2,W^{(3)}_3)^T$ is a permutation of $(P_1, P_2, P_3)^T$.}
\label{partial_label_figure}
\end{figure}

\section{Estimators for the Finite Sample Setting}

In this section, we develop the estimation theory to treat the three problems in the finite sample setting. Let $\sX = \bbR^d$ be equipped with the standard Borel $\sigma$-algebra $\sC$ and $\p{1}, \ldots, \p{L}$ be probability distributions on this space.  Suppose that we observe for $i = 1, \ldots, L$, 
\begin{align*}
X_1^i, \ldots, X_{n_i}^i \overset{i.i.d.}{\sim} \p{i}.
\end{align*}
Let $\sE$ be any Vapnik-Chervonenkis (VC) class with VC-dimension $V < \infty$, containing the set of all open balls, all open rectangles, or some other collection of sets that generates the Borel $\sigma$-algebra $\sC$. For example, $\sE$ could be the set of all open balls wrt the Euclidean distance, in which case $V = d+1$. Define $\epsilon_i(\delta_i) \equiv 3 \sqrt{\frac{V \log(n_i +1) - \log(\delta_i /2)}{n_i}}$ for $i = 1, \ldots, L$. Our estimators are based on the VC inequality \citep{devroye96}. This inequality says that for each $i \in [L]$, and $\delta >0$, the following holds with probability at least $1 - \delta$:
\begin{align*}
\sup_{E \in \sE}|\p{i}(E) - \ed{\tilde{P}}_i(E) | \leq \epsilon_i(\delta)
\end{align*}
where the \emph{empirical distribution} is given by $\ed{\tilde{P}}_i(E) = \frac{1}{n_i} \sum_{j=1}^{n_i} \ind{X^i_{j} \in E}$. 

\subsection{Multiclass Classification with Label Noise}

Let $\ed{F}_0, \ldots, \ed{F}_M$ denote the empirical distributions based on i.i.d. random samples from respective distributions $F_0, \ldots, F_M$. We introduce the following estimator of the multi-sample $\kappa^*$:
\begin{align}
\label{multisample_kappa_estimator}
\widehat{\kappa}(\ed{F}_0 \, | \, \ed{F}_1, \ldots, \ed{F}_M) & = \max_{\vec{\mu} \in \Delta_M} \inf_{E \in \sE} \frac{\ed{F}_0(E) + \epsilon_0(\frac{1}{n_0}) }{(\sum_{i=1}^M \mu_i \ed{F}_i(E) - \sum_i \mu_i \epsilon_i(\frac{1}{n_i}) )_+}
\end{align}
where the ratio is defined to be $\infty$ when the denominator is zero. This estimator arises from applying the VC inequality to the following expression:
\begin{align*}
\kappa^*(F_0 \, | \, F_1, \ldots, F_M) & = \max_{\bmu \in \si_M} \kappa^*(F_0 \, | \, \sum_{i=1}^M \mu_i F_i)  = \max_{\bmu \in \si_M} \inf_{E \in \sE,  \sum_{i=1}^M \mu_i F_i(E) > 0} \frac{F_0(E)}{ \sum_{i=1}^M \mu_i F_i(E)},
\end{align*} 
where the last equality uses Proposition \ref{label_noise_bin_case}. Let $\widehat{\vec{\mu}}$ denote a point where the maximum is achieved in \eqref{multisample_kappa_estimator}. Then, $\widehat{\vec{\nu}} \coloneq \widehat{\kappa} \widehat{\vec{\mu}}$ estimates the vector $(\nu_1, \ldots, \nu_M)$ attaining the maximum in (\ref{multisample_kappa}). See Proposition 2 of \citet{blanchard2014} to find a proof that the proposed estimator is consistent.

Based on this estimator, we introduce estimators that under the assumptions of Theorem \ref{multiclass_identification} converge to the base distributions uniformly in probability. Let $\vec{n} \equiv (n_1,\ldots, n_L)$; we write $\vec{n} \longrightarrow \infty$ to indicate that $\min_{i} n_i \longrightarrow \infty$.
\begin{thm}
\label{multiclass_estimation}
Let $(\widehat{\nu}_{i,j})_{j \neq i}$ be a vector attaining the maximum in the definition of $\widehat{\kappa}_i \coloneq \widehat{\kappa}(\ed{\tilde{P}}_i \, | \, \set{\ed{\tilde{P}}_j : j \neq i})$ and
\begin{align*}
\est{Q}_i = \frac{\ed{\tilde{P}}_i - \sum_{j \neq i} \widehat{\nu}_{i,j} \ed{\tilde{P}}_j }{1- \widehat{\kappa}_i}.
\end{align*}
Then, under the assumptions of Theorem \ref{multiclass_identification}, $\forall i = 1, \ldots, L$, $\sup_{E \in \sE} |\est{Q}_i(E) - P_i(E)| \overset{i.p.}{\longrightarrow} 0$ as $\vec{n} \longrightarrow \infty$. 
\end{thm}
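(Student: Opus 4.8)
The plan is to reduce the claim to three facts---two about the population problem and one already cited---and then glue them with an elementary algebraic identity. Write $\kappa_i = \kappa^*(\p{i} \mid \set{\p{j} : j \neq i})$ and let $(\nu_{i,j})_{j \neq i}$ be the population weights attaining this maximum, so that $\kappa_i = \sum_{j\neq i}\nu_{i,j}$. By Theorem \ref{multiclass_identification} together with the form of $\sol$, the population residue satisfies
\[
\frac{\p{i} - \sum_{j\neq i}\nu_{i,j}\p{j}}{1 - \kappa_i} = P_i .
\]
Two features of this population object will be used repeatedly. First, by part 1 of Proposition \ref{equiv_opt} we have $\kappa_i < 1$, so $1-\kappa_i$ is a fixed positive constant. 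Second, the weights $(\nu_{i,j})$ are \emph{unique}: $\kappa_i$ is the unique maximal value, the residue $P_i$ is the unique solution (Lemma \ref{2_blanchard2014} forces the residue of $\bpi_i$ with respect to $\set{\bpi_j : j\neq i}$ to be $\be_i$), and the distributions $\set{\p{j} : j\neq i}$ are linearly independent---their mixture proportions are $L-1$ rows of the invertible matrix $\bPi$, and joint irreducibility converts linear independence of proportions into linear independence of distributions. Hence $\sum_{j\neq i}\nu_{i,j}\p{j} = \p{i} - (1-\kappa_i)P_i$ has a unique representation, pinning down each $\nu_{i,j}$.

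With uniqueness of the population solution in hand, I would invoke the cited consistency of the estimator (Proposition 2 of \citet{blanchard2014}) to conclude $\est{\kappa}_i \overset{i.p.}{\longrightarrow}\kappa_i$ and $\est{\nu}_{i,j}\overset{i.p.}{\longrightarrow}\nu_{i,j}$ as $\vec{n}\longrightarrow\infty$; uniqueness of the population optimizer is precisely what upgrades consistency of the optimal value to consistency of the optimal weights. Separately, the Vapnik--Chervonenkis inequality gives uniform convergence of each empirical measure, $\sup_{S\in\sS}\abs{\ed{\tilde{P}}_j(S) - \p{j}(S)}\overset{i.p.}{\longrightarrow}0$, at the rate controlled by $\epsilon_j$; this is the only place the VC hypothesis on $\sS$ enters.

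Finally, I would combine these. Set $B = 1-\est{\kappa}_i$ and $b = 1-\kappa_i$, and for a set $S$ let $A_S = \ed{\tilde{P}}_i(S) - \sum_{j\neq i}\est{\nu}_{i,j}\ed{\tilde{P}}_j(S)$ and $a_S = \p{i}(S) - \sum_{j\neq i}\nu_{i,j}\p{j}(S)$ denote the empirical and population numerators, so that $\est{Q}_i(S) = A_S/B$ and $P_i(S) = a_S/b$. Then
\[
\est{Q}_i(S) - P_i(S) = \frac{A_S - a_S}{B} + \frac{a_S(b - B)}{Bb}.
\]
Because every $\ed{\tilde{P}}_j(S)$ and $\p{j}(S)$ lies in $[0,1]$ and the weights $\est{\nu}_{i,j}$ are nonnegative with $\sum_{j\neq i}\est{\nu}_{i,j}=\est{\kappa}_i\le 1$, I can bound $\sup_S\abs{A_S - a_S}$ by $\sup_S\abs{\ed{\tilde{P}}_i - \p{i}} + \sum_{j\neq i}\paren{\sup_S\abs{\ed{\tilde{P}}_j - \p{j}} + \abs{\est{\nu}_{i,j} - \nu_{i,j}}}$, which tends to $0$ in probability by the previous paragraph, while $\abs{a_S}\le 1+\kappa_i\le 2$ and $\abs{b-B} = \abs{\est{\kappa}_i - \kappa_i}\to 0$. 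The point is that all of these bounds are uniform in $S$, so the supremum over $\sS$ inherits the convergence.

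I expect the main obstacle to be the random denominator $B = 1-\est{\kappa}_i$. Since $\est{\kappa}_i\to\kappa_i<1$, the event $\set{B \ge b/2}$ has probability tending to $1$, and on it $1/(Bb)\le 2/b^2$ is a fixed constant; off this event the quantity could blow up, but that event has vanishing probability and is absorbed directly into the $\overset{i.p.}{\longrightarrow}$ statement. The only other delicate point is the passage from consistency of the optimal value $\est{\kappa}_i$ to consistency of the optimal weights $\est{\nu}_{i,j}$, which rests entirely on the uniqueness established in the first paragraph; without it one could only control $\est{\kappa}_i$ and not the individual coefficients.
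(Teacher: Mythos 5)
Your proof is correct and follows essentially the route the paper intends: the paper defers this result to \citet{blanchard2014}, whose argument likewise combines consistency of $(\widehat{\kappa}_i,\widehat{\nu}_{i,j})$ (Proposition 2 there, valid because the population maximizer is unique under \textbf{(B1)}, which you verify via Lemma \ref{2_blanchard2014}, Proposition \ref{equiv_opt}, and linear independence of the $\p{j}$), the VC inequality for uniform convergence of the empirical measures, and exactly your plug-in decomposition with the denominator kept away from zero on an event of probability tending to one. The only cosmetic slip is asserting $\widehat{\kappa}_i \leq 1$ outright---the estimator in \eqref{multisample_kappa_estimator} is not capped at $1$ in finite samples---but since $\widehat{\kappa}_i \overset{i.p.}{\longrightarrow} \kappa_i < 1$, this holds with probability tending to one and is absorbed the same way you handle the denominator, so no genuine gap remains.
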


\subsection{Demixing Mixed Membership Models}

In this section, we develop a novel estimator that can be used to extend the Demix algorithm to the finite sample case. Uniform convergence results typically assume access to i.i.d. samples. The challenge of developing an estimator for Demix is that because of the recursive nature of the Demix algorithm, we cannot assume access to i.i.d. samples to estimate every distribution that arises. Nonetheless, we show that uniform convergence of distributions propagates through the algorithm if we employ an estimator of $\kappa^*$ with a known rate of convergence. 

Let $\est{F}$ and $\est{H}$ be estimates of distributions $F$ and $H$, respectively. We introduce the following estimator: 
\begin{align*}
\est{\kappa}( \est{F} \, | \, \est{H}) = \inf_{E \in \sE} \frac{\est{F}(E) + \vc }{(\est{H}(E) - \vc)_+}
\end{align*}
where $\vc = \sum_{i=1}^L \epsilon_i(\frac{1}{n_i})$. Our estimator is closely related to the estimator from \citet{blanchard2010}: if $\est{F}$ and $\est{H}$ are empirical distributions, e.g., $\est{F} = \ed{\tilde{P}}_i$ and $\est{H} = \ed{\tilde{P}}_j$, then their estimator for $\kappa^*(F \, | \, H)$ is $\inf_{E \in \sE} \frac{\est{F}(E) + \epsilon_i(\frac{1}{n_i})  }{(\est{H}(E) - \epsilon_j(\frac{1}{n_j}))_+}$. Note that our proofs only require that $\vc$ include the terms $\epsilon_i(\frac{1}{n_i})$ corresponding to $\p{i}$ that the estimators $\est{F}$ and $\est{H}$ use samples from; to simplify presentation, however, we include all the terms, which leads to bounds that are looser by only a constant factor. 

Based on the estimator $\est{\kappa}$, we introduce the following estimator of the residue of $F$ wrt $H$.
\begin{algorithm}[t]
\caption{ResidueHat($\est{F} \, | \, \est{H}$)}
\label{residue_hat}
\textbf{Input: }$\est{F}, \est{H} \text { are estimates of } F,H$
\begin{algorithmic}[1]
\STATE $\est{\kappa} \longleftarrow \est{\kappa}(\est{F} \, | \, \est{H})$
\RETURN $\frac{\est{F} - \est{\kappa}(1 - \est{H})}{1 - \est{\kappa}}$
\end{algorithmic}
\end{algorithm}

\begin{defn}
Let $\est{F}$ and $\est{H}$ be estimators of $F$ and $H$, respectively, where $F \neq H$ and let $G \longleftarrow \text{Residue}(F \, | \, H)$ and $\est{G} \longleftarrow \text{ResidueHat}(\est{F} \, | \, \est{H})$. We call $\est{G}$ a \emph{ResidueHat estimator} of order $k \geq 1$ if (i) $F, H \in \co(P_1, \ldots, P_L)$, and (ii) at least one of $\est{F}$ and $\est{H}$ is a ResidueHat estimator of order $k-1$ and the other is either an empirical distribution or a ResidueHat estimator of order less than or equal to $k-1$. We call $\est{G}$ a \emph{ResidueHat estimator} of order $0$ if (i) holds, and $\est{F}$ and $\est{H}$ are empirical distributions.
\end{defn}
\noindent Note that the above definition is recursive and matches the recursive structure of the Demix algorithm. We suppress the qualifier ``of order $k$" when it is not relevant. 

To use ResidueHat estimators to estimate the $P_i$s, we build on the rate of convergence result from \citet{scott2015}. In \citet{scott2015}, a rate of convergence was established for an estimator of $\kappa^*$ using empirical distributions; we extend these results to our setting of recursive estimators and achieve the same rate of convergence. To ensure that this rate of convergence holds for every estimate in our algorithm, we introduce the following condition.
\begin{description}
\item[(A$''$)] $P_1, \ldots, P_L$ are such that $\forall i$ $\supp(P_i) \not \subseteq \cup_{j \neq i} \supp(P_j) $. 
\end{description}
Note that this assumption implies joint irreducibility and is a natural generalization of the separability assumption.

\begin{algorithm}[t]
\caption{DemixHat($\est{S}_1, \ldots, \est{S}_K \, | \, \epsilon$)}
\textbf{Input: }$\est{S}_1, \ldots, \est{S}_K$ are ResidueHat estimates
\begin{algorithmic}[1]
\label{demix_hat_alg}
\IF{$K = 2$}
%\STATE $\est{Q}_1 \longleftarrow$ ResidueHat($\est{S}_1 \, | \, \est{S}_2$)
%\STATE $\est{Q}_2 \longleftarrow$ ResidueHat($\est{S}_2 \, | \, \est{S}_1$)
\RETURN $(\text{ResidueHat}(\est{S}_1 \, | \, \est{S}_2), \text{ResidueHat}(\est{S}_2 \, | \, \est{S}_1))^T$
\ELSE
\STATE $(R_2,\ldots, R_K)^T \longleftarrow \text{FindFaceHat}(\est{S}_1, \ldots, \est{S}_K \, | \, \epsilon)$
\STATE $(\est{Q}_1, \cdots, \est{Q}_{K-1})^T \longleftarrow \text{DemixHat}(\est{R}_2,\cdots,\est{R}_K)$\\
\STATE $\est{Q}_K \longleftarrow \frac{1}{K} \sum_{i=1}^K \est{S}_i$\\
\FOR{$i = 1, \ldots, K-1$}
\STATE $\est{Q}_K \longleftarrow \text{ResidueHat}(\est{Q}_K \, | \, \est{Q}_i)$
\ENDFOR
\RETURN $(\est{Q}_1, \cdots, \est{Q}_K)^T$
\ENDIF
\end{algorithmic}
\end{algorithm}

The following result establishes sufficient conditions under which ResidueHat estimates converge uniformly.
\begin{prop}
\label{uniform_convergence}
If $P_1, \ldots, P_L$ satisfy \textbf{(A$''$)} and $\est{G}$ is a ResidueHat estimator of a distribution $G \in \co(P_1, \ldots, P_L)$, then $\sup_{E \in \sE} | \est{G}(E) - G(E)| \overset{i.p.}{\longrightarrow} 0$ as $\vec{n} \longrightarrow \infty$.
\end{prop}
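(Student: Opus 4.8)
The plan is to proceed by induction on the order $k$ of the ResidueHat estimator $\est{G}$, mirroring its recursive definition. Crucially, the induction should not merely assert consistency but should track a \emph{rate}: I would prove the stronger statement that every ResidueHat estimator $\est{G}'$ of a distribution $G' \in \co(P_1, \ldots, P_L)$ of order at most $k$ satisfies $\sup_{S \in \sS} |\est{G}'(S) - G'(S)| = O_p(\vc)$, where $\vc = \sum_{i=1}^L \epsilon_i(\frac{1}{n_i}) \to 0$. The reason for carrying a rate rather than bare consistency is that, when bounding the plug-in estimator $\est{\kappa}(\est{F} \, | \, \est{H})$, the deviations $\|\est{F} - F\|_\infty$ and $\|\est{H} - H\|_\infty$ must be comparable to the buffer $\vc$ appearing inside $\est{\kappa}$; consistency alone leaves open the possibility that these deviations decay strictly slower than $\vc$, which would break the argument.

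For the base case ($k = 0$), $\est{F}$ and $\est{H}$ are empirical distributions, so the VC inequality gives $\sup_S |\est{F}(S) - F(S)| \le \epsilon_i(\frac{1}{n_i}) \le \vc$ and similarly for $\est{H}$, with high probability. The convergence of $\est{\kappa}(\est{F} \, | \, \est{H})$ to $\kappa^*(F \, | \, H)$ at a known rate is then exactly the content of the rate-of-convergence result of \citet{scott2015}, invoked under assumption \textbf{(A$''$)}. Since $F \neq H$ by the definition of a ResidueHat estimator, Proposition \ref{label_noise_bin_case} guarantees $\kappa^*(F \, | \, H) < 1$, so $1 - \est{\kappa} \ge c > 0$ with high probability for some constant $c$. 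Writing $\est{G} = \frac{\est{F} - \est{\kappa}\,\est{H}}{1 - \est{\kappa}}$ and $G = \frac{F - \kappa^* H}{1 - \kappa^*}$ and expanding the difference, the bound $\sup_S|\est{G}(S) - G(S)| = O_p(\vc)$ follows from the three facts $\|\est{F}-F\|_\infty = O_p(\vc)$, $\|\est{H}-H\|_\infty = O_p(\vc)$, and $|\est{\kappa} - \kappa^*| = O_p(\vc)$, together with $1 - \est{\kappa}$ bounded away from zero.

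For the inductive step, $\est{G} = \text{ResidueHat}(\est{F} \, | \, \est{H})$ where at least one of $\est{F}, \est{H}$ has order $k-1$ and both have order at most $k-1$ (or are empirical); thus the induction hypothesis applies to both and yields $\|\est{F} - F\|_\infty, \|\est{H} - H\|_\infty = O_p(\vc)$. The argument then splits into two parts: (a) show $\est{\kappa}(\est{F} \, | \, \est{H}) \to \kappa^*(F \, | \, H)$ in probability at rate $O_p(\vc)$; and (b) deduce $\sup_S|\est{G}(S) - G(S)| = O_p(\vc)$ exactly as in the base case, again using $\kappa^*(F\,|\,H) < 1$ to keep $1 - \est{\kappa}$ away from zero. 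Part (b) is a routine algebraic propagation once (a) is in hand.

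The main obstacle is part (a): establishing the rate for $\est{\kappa}(\est{F} \, | \, \est{H})$ when $\est{F}, \est{H}$ are \emph{not} empirical distributions but recursively-built estimators. My plan is to re-derive the two-sided bound of \citet{scott2015} directly in terms of the uniform deviations $\|\est{F} - F\|_\infty$ and $\|\est{H} - H\|_\infty$ rather than the raw VC bounds, absorbing the extra constant factor introduced by the recursion into $\vc$ (as the remark following the definition of $\est{\kappa}$ anticipates). The upper bound $\est{\kappa} \le \kappa^* + O_p(\vc)$ is obtained by evaluating the ratio at a near-minimizing set $S^*$ for $\kappa^*(F\,|\,H) = \inf_{S : H(S)>0} F(S)/H(S)$; assumption \textbf{(A$''$)}, which guarantees each $P_i$—and hence each convex combination in $\co(P_1, \ldots, P_L)$—has a distinguishing region of positive mass, ensures such $S^*$ can be taken with $H(S^*)$ bounded below, so the $\vc$-perturbations in numerator and denominator are negligible. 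The lower bound $\est{\kappa} \ge \kappa^* - O_p(\vc)$, which must hold uniformly over all $S$, is the delicate direction: on sets where $H(S)$ is small the ratio is unstable, but the $+\vc$ in the numerator and $-\vc$ in the denominator of $\est{\kappa}$ are precisely calibrated to dominate the deviations there and keep the ratio above $\kappa^* - O_p(\vc)$. Verifying that this calibration survives the inflation of the deviations from $O(\epsilon_i(\frac{1}{n_i}))$ to $O(\vc)$ across the recursion, uniformly in $S \in \sS$, is the crux of the argument.
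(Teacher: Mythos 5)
Your overall architecture coincides with the paper's: the paper proves Proposition \ref{uniform_convergence} via Lemma \ref{main_est_ineq}, an induction on the order $k$ that carries exactly the quantitative statement you insist on (the uniform deviation inequality \textbf{(UDI)}, a bound of the form $A_{1,\epsilon}\vc + \epsilon$ with high probability, rather than bare consistency), and the inductive step splits, as yours does, into a rate for the plug-in $\est{\kappa}$ (Lemma \ref{rate}, which re-derives the two-sided bound of \citet{scott2015} with the raw VC deviations replaced by \textbf{(UDI)} deviations) followed by algebraic propagation through the ResidueHat formula (Lemma \ref{vc_type_ineq}). Your points that consistency alone would not suffice for the recursion and that $\kappa^*(F\,|\,H) < 1$ keeps $1 - \est{\kappa}$ bounded away from zero also match the paper.

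The genuine gap is in how you license the Scott-style rate at each level of the recursion. You assert that \textbf{(A$''$)} guarantees that ``each convex combination in $\co(P_1, \ldots, P_L)$ has a distinguishing region of positive mass.'' That is false: any mixture placing positive weight on every $P_i$ has support equal to $\cup_i \supp(P_i)$, hence no region distinguishing it from any other element of the hull. What the rate actually requires is the support condition \textbf{(SC)}: writing $F = (1-\gamma) G + \gamma H$ with $G$ the residue of $F$ wrt $H$ and $\gamma = \kappa^*(F \, | \, H)$, one needs $\supp(H) \not\subseteq \supp(G)$, so that there is a fixed set $O$ with $G(O) = 0$ and $H(O) > 0$, whence $F(O)/H(O) = \kappa^*$ exactly; this is what produces your near-minimizing $S^*$ with $H(S^*)$ bounded below, and without it the infimum defining $\kappa^*$ may be approached only along sets of vanishing $H$-mass, in which case the upper bound $\est{\kappa} \leq \kappa^* + O_p(\vc)$ fails and the estimator can converge arbitrarily slowly. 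Deriving \textbf{(SC)} from \textbf{(A$''$)} for \emph{arbitrary} distinct $F, H \in \co(P_1, \ldots, P_L)$ is the content of the paper's Lemma \ref{sup_cond}, and it is not a one-liner: one passes to a minimal sub-hull $\co(P_1, \ldots, P_K)$ containing $F$ and $H$, uses statement \emph{2} of Lemma \ref{facts} together with Proposition \ref{equiv_opt} to place the residue $G$ on a proper face of that sub-hull, deduces that $H$ must put positive weight on a vertex $P_j$ outside that face, and only then invokes \textbf{(A$''$)} --- applied to that single $P_j$, whose distinguishing region lies outside $\supp(G)$. Your plan needs this lemma (or an equivalent argument) inserted where you currently invoke \textbf{(A$''$)} directly; with it in place, the remainder of your outline goes through essentially as in the paper.
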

Based on the ResidueHat estimators, we introduce an empirical version of the Demix algorithm---DemixHat (see Algorithm \ref{demix_hat_alg}). The main differences are that \emph{(i)} we replace the Residue function with the ResidueHat function, \emph{(ii)} we replace line \ref{multi_sample_kappa_demix} in the Demix algorithm with a sequence of applications of the two-sample $\kappa^*$ operator, as mentioned just before Theorem \ref{demix_identification}, and \emph{(iii)} DemixHat requires specification of a hyperparameter $\epsilon \in (0,1)$. We replace the multi-sample $\kappa^*$ with the two-sample $\kappa^*$ because there is no known estimator with a rate of convergence for the multi-sample $\kappa^*$, and the rate of convergence is essential to our consistency proof. The hyperparameter $\epsilon$ gives a tradeoff between runtime and accuracy. The runtime increases with increasing $\epsilon$, but the amount of uncertainty about whether DemixHat executes successfully decreases with increasing $\epsilon$.

\begin{algorithm}[t]
\caption{FindFaceHat($\est{S}_1, \ldots, \est{S}_K \, | \, \epsilon$)}
\textbf{Input: }$\est{S}_1, \ldots, \est{S}_K$ are ResidueHat estimates
\begin{algorithmic}[1]
\label{findface_hat_alg}
\STATE $\est{Q} \longleftarrow \text{ uniformly distributed random element from } \co (\est{S}_2, \ldots, \est{S}_K)$
\FOR{$n=2,3, \ldots$} 
\STATE Set $\est{R}_i \longleftarrow \text{ResidueHat}(\frac{1}{n}\est{S}_i +\frac{n-1}{n} \est{Q}) \, | \, \est{S}_1)$ for all $i \in \set{2,\ldots, K}$\label{R_hat_demixhat}
\IF{$\text{FaceTestHat}(\est{R}_2, \cdots, \est{R}_K \, | \, \epsilon)$}	
\RETURN $(\est{R}_2, \cdots, \est{Q}_K)^T$
\ENDIF
\ENDFOR
\end{algorithmic}
\end{algorithm}

\begin{algorithm}[t]
\caption{FaceTestHat($\est{Q}_1,\cdots, \est{Q}_K \, | \, \epsilon$)}
\begin{algorithmic}[1]
\label{face_test_hat_alg}
\STATE Set $\vec{Z}_{i,j} \coloneq \vec{1}\{\est{\kappa}(\est{Q}_i \, | \, \est{Q}_j) > \epsilon \}$ for $i\neq j$
\IF{$\vec{Z}$ has a zero off-diagonal entry}
\RETURN $0$
\ELSE
\RETURN $1$
\ENDIF
\end{algorithmic}
\end{algorithm}

We now state our main estimation result. 
\begin{thm}
\label{demix_estim}
Let $\delta > 0$ and $\epsilon \in (0,1)$. Suppose that $P_1, \ldots, P_L$ satisfy \textbf{(A$''$)} and $\vec{\Pi}$ has full rank. Then, with probability tending to $1$ as $\vec{n} \longrightarrow \infty$, DemixHat($\ed{\tilde{P}}_1, \ldots, \ed{\tilde{P}}_L \, | \, \epsilon$) returns $(\est{Q}_1,\ldots, \est{Q}_L)$ for which there exists a permutation $\sigma: [L] \longrightarrow [L]$ such that for every $i \in [L]$,
\begin{align*}
\sup_{E \in \sE} |\est{Q}_i(E) - P_{\sigma(i)}(E)| < \delta.
\end{align*}
\end{thm}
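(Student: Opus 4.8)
The plan is to establish Theorem~\ref{demix_estim} by combining the population-case correctness of Demix (Theorem~\ref{demix_identification}) with the uniform-convergence guarantee for ResidueHat estimators (Proposition~\ref{uniform_convergence}), propagating both the structural decisions and the numerical accuracy through the recursion. First I would fix the ``good event'' under which the population algorithm succeeds: by the proof of Theorem~\ref{demix_identification}, with probability $1$ over the randomization the sampled $Q \in \co(S_2,\ldots,S_K)$ avoids the measure-zero bad set, and there is a finite $n^*$ at which the residues $R_2,\ldots,R_K$ land on a common face of the simplex. Since \textbf{(A$''$)} implies joint irreducibility and $\vec{\Pi}$ has full (column) rank, the hypotheses of Theorem~\ref{demix_identification} hold, so the population run is well-defined and correct. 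The role of the hyperparameter $\epsilon$ is to quantize this face-detection decision: DemixHat must declare two residues ``on the same face'' using only the empirical distributions, and $\epsilon$ controls how large the perturbation index $k$ (the empirical analogue of $n$) is driven before a decision is locked in, trading runtime against the probability of a correct structural match.

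Next I would run an induction on the recursion depth (equivalently on $K$, descending from $L$). The base case $K=2$ is a direct application of Proposition~\ref{uniform_convergence}: since $\est{Q}_1 = \text{ResidueHat}(\est{S}_1 \mid \est{S}_2)$ and $\est{Q}_2 = \text{ResidueHat}(\est{S}_2 \mid \est{S}_1)$ are ResidueHat estimators of order $0$ of the two distinct base distributions $\text{Residue}(S_1 \mid S_2)$ and $\text{Residue}(S_2 \mid S_1)$, they converge uniformly in probability. For the inductive step I would argue two things in tandem. \emph{(i) Structural correctness:} on the good event and for all $\vec{n}$ large enough, the empirical FaceTest (using the $\est\kappa$ estimator, with its known rate of convergence inherited from \citet{scott2015}) reproduces the same face decision as its population counterpart, so the same index $k$ is selected and the empirical residues $\est{R}_2,\ldots,\est{R}_K$ correspond to the same face $\co(P_2,\ldots,P_K)$ as the population $R_2,\ldots,R_K$. \emph{(ii) Numerical accuracy:} each $\est{R}_i = \text{ResidueHat}(\frac{1}{n}\est{S}_i + \frac{n-1}{n}\est{Q} \mid \est{S}_1)$ is, by construction, a ResidueHat estimator of one order higher, and $\frac{1}{n}S_i + \frac{n-1}{n}Q, S_1 \in \co(P_1,\ldots,P_L)$, so Proposition~\ref{uniform_convergence} gives $\sup_{S\in\sS}|\est{R}_i(S)-R_i(S)| \to 0$ in probability. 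The recursive call Demix$(\est{R}_2,\ldots,\est{R}_K)$ then returns, by the induction hypothesis, a permutation of $K-1$ of the base distributions up to uniform error $<\delta$, and the final distribution is recovered by the sequence of two-sample Residue operations (replacing line~\ref{multi_sample_kappa_demix}, per Lemma~\ref{mod_ident_lemma}), each again a ResidueHat estimator covered by Proposition~\ref{uniform_convergence}.

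The main obstacle I anticipate is the coupling between the \emph{discrete} structural decisions and the \emph{continuous} estimation error. The population proof relies crucially on the exact event that $R_2,\ldots,R_K$ lie on a common face, detected by FaceTest; empirically this is a threshold decision on estimated quantities, and near the decision boundary a vanishing estimation error need not yield a correct decision. The delicate point is that the population run only reaches a common face at a specific finite $n^*$, and the empirical run must (a) not halt prematurely at some $k<n^*$ due to estimation noise spuriously passing the face test, and (b) reliably pass the test once $k\ge n^*$ and $\vec{n}$ is large. This is precisely where $\epsilon$ enters: I would show that for fixed $\epsilon$ there is a ``margin'' in the population face geometry, that DemixHat only commits to a face decision once the empirical margin exceeds a threshold governed by $\epsilon$, and that the probability of a correct commitment tends to $1$ as $\vec{n}\to\infty$ because $\vc \to 0$ and the $\est\kappa$ estimator is consistent with a known rate. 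A secondary technical subtlety is that the recursive estimators are \emph{not} based on i.i.d.\ samples, so the raw VC inequality cannot be applied at each recursion level; this is exactly the gap that Proposition~\ref{uniform_convergence} is designed to close, and I would lean on it rather than re-deriving concentration at each step. Finally, since the recursion depth and the number of distinct distinctness/face decisions are bounded by a function of $L$ alone, a union bound over these finitely many events, each of probability tending to $1$, yields the stated conclusion with probability tending to $1$.
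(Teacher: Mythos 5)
Your proposal is correct and follows essentially the same route as the paper's proof: both reduce the problem to (i) showing the empirical face decisions match the population run's with probability tending to $1$ --- the paper instantiates your ``margin'' argument via Lemma~\ref{face_test_hat} together with statement~\emph{3} of Lemma~\ref{continuity}, which shows the pairwise $\kappa^*$ values of the residues tend to $1$ and hence eventually exceed $\epsilon$, so FaceTestHat neither fires prematurely (for $n < N_1$) nor fails forever --- and (ii) invoking Proposition~\ref{uniform_convergence} for uniform convergence of every ResidueHat estimator, with a final union bound over the boundedly many (in $L$) decisions. The only detail you leave implicit that the paper makes explicit is that the ResidueHat order is bounded (by at most $(L-1)^3$), so the constants in the uniform deviation inequalities stay controlled, and that the population run's linear-independence arguments guarantee $F \neq H$ at every Residue call so each estimator genuinely qualifies as a ResidueHat estimator.
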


\subsection{Classification with Partial Labels}
\label{partial_label_est_sec}

In this section, we present a finite sample algorithm for the decontamination of a partial label model (see Algorithm \ref{partial_label_hat_alg}). This algorithm is based on a different approach from PartialLabel (Algorithm \ref{partial_label_alg}): it combines DemixHat with an empirical version of the VertexTest algorithm (see Algorithm \ref{vertex_test_hat_alg}). The reason for this is that we have an estimator with a rate of convergence for the two-sample $\kappa^*$, whereas there is no known estimator with a rate of convergence for the multi-sample $\kappa^*$. We leverage this rate of convergence to prove the consistency of our algorithm.

We make an assumption that simplifies our algorithm: $\bPi^+$ satisfies
\begin{description}
\item[(D)] there does not exist $i,j \in [L]$ such that $\bPi^+_{i,:} = \vec{e}_j^T$.
\end{description}

\noindent In words, this says that there is no contaminated distribution $\p{i}$ and base distribution $P_j$ such that $\p{i} = P_j$. We emphasize that we make this assumption only to simplify the presentation and development of the algorithm; one can reduce any instance of a partial label model satisfying \textbf{(B3)} and \textbf{(A)} to an instance of a partial label model that also satisfies \textbf{(D)}. We defer the sketch of this reduction to Section \ref{partial_label_estimation}.

\begin{algorithm}[t]
\caption{$\text{PartialLabelHat}(\bPi^+, (\ed{\tilde{P}}_1, \ldots, \ed{\tilde{P}}_M)^T \, | \, \epsilon)$}
\begin{algorithmic}[1]
\label{partial_label_hat_alg}
\STATE $(\est{Q}_1, \ldots, \est{Q}_L)^T \longleftarrow \text{DemixHat}(\ed{\tilde{P}}_1, \ldots, \ed{\tilde{P}}_M \, | \, \epsilon)$ 
\STATE $(\text{FoundVertices}, \vec{C}) \longleftarrow \text{VertexTestHat}(\bPi^+, (\ed{\tilde{P}}_1, \ldots, \ed{\tilde{P}}_M)^T, (\est{Q}_1, \ldots, \est{Q}_L)^T)$
\RETURN $\vec{C} (\est{Q}_1, \ldots, \est{Q}_L)^T$
\end{algorithmic}
\end{algorithm}

We now state our main estimation result for classification with partial labels.

\begin{thm}
\label{partial_label_hat}
Let $\delta > 0$ and $\epsilon \in (0,1)$. Suppose that $P_1, \ldots, P_L$ satisfy \textbf{(A$''$)}, $\vec{\Pi}$ has full rank, the columns of $\bPi^+$ are unique and  $\bPi^+$ satisfies \textbf{(D)}. Then, with probability tending to $1$ as $\vec{n} \longrightarrow \infty$, PartialLabelHat($ \bPi^+, (\ed{\tilde{P}}_1, \ldots, \ed{\tilde{P}}_M)^T \, | \, \epsilon$) returns $(\est{Q}_1,\ldots, \est{Q}_L)^T$ such that for every $i \in [L]$,
\begin{align*}
\sup_{E \in \sE} |\est{Q}_i(E) - P_i(E)| < \delta.
\end{align*}
\end{thm}

\subsection{Sieve Estimators}

In the preceding, we have assumed a fixed VC class to simplify the presentation. However, these results easily extend to the setting where $\sE = \sE_k$ and $k \longrightarrow \infty$ at a suitable rate depending on the growth of the VC dimensions $V_k$. This allows for the $P_i$s to be estimated uniformly on arbitrarily complex events, e.g., $\sE_k$ is the set of unions of $k$ open balls.

\section{Discussion}

In this paper, we have studied the problem of how to recover the base distributions $\vec{P}$ from the contaminated distributions $\tilde{\vec{P}}$ without knowledge of the mixing matrix $\vec{\Pi}$. We used a common set of concepts and techniques to solve three popular machine learning problems that arise in this setting: multiclass classification with label noise, demixing of mixed membership models, and classification with partial labels. Our technical contributions include: (i) We provide sufficient and sometimes necessary conditions for identifiability for all three problems. (ii) We give nonparametric algorithms for the infinite and finite sample settings. (iii) We provide a new estimator for iterative applications of $\kappa^*$ that is of independent interest. (iv) Finally, our work provides a novel geometric perspective on each of the three problems.

Our results improve on what was previously known for all three problems. For multiclass classification with label noise and unknown $\vec{\Pi}$, previous work had only considered the case $M=L=2$. Our work achieves a generalization to arbitrarily many distributions. For demixing of mixed membership models, previous algorithms with theoretical guarantees required a finite sample space. Our work allows for a much more general set of distributions. Finally, for classification with partial labels, previous work on learnability assumed the realizable case (non-overlapping $P_1, \ldots, P_L$) and assumed strong conditions on label co-occurence in partial labels. Our analysis covers the agnostic case and a much wider set of partial labels.

Our work has also highlighted the advantages and disadvantages associated with the two-sample $\kappa^*$ operator and multi-sample $\kappa^*$ operator, respectively. Algorithms that only use the two-sample $\kappa^*$ operator have the following two advantages: (i) the geometry of the two-sample $\kappa^*$ operator is simpler than the geometry of the multi-sample $\kappa^*$ operator and, as such, can be more tractable. Indeed, in recent years, several practical algorithms for estimating the two-sample $\kappa^*$ have been developed (see \citet{jain2016} and references therein). (ii) We have estimators with established rates of convergence for the two-sample $\kappa^*$ operator, but not for the multi-sample $\kappa^*$ operator. On the other hand, algorithms that use the multi-sample $\kappa^*$ operator have fewer steps.

The aims of this work are mainly theoretical, but we believe that our work can inform practical algorithms. First, we note that while we have emphasized recovery of $\vec{P}$, another interpretation is that our work deals with estimating $\bPi$. One can then plug our estimate of $\bPi$ into corrected losses for multiclass classification with label noise and classification with partial labels that require knowledge of $\bPi$ \citep{cidsueiro2012, menon15icml, vanrooyan2015arxix, patrini2017}. Thus, in general, our work can be applied in this two-stage approach. Second, when $L$ or $M$ are small, the Demix algorithm is practical. For example, \citet{metodiev2018jet} apply the Demix algorithm to a high-energy physics application where $L=M=2$. It is of interest to examine more generally whether variants of Demix work when $M$ or $L$ are small. Third, we conjecture that our analysis suggests novel principles for designing algorithms. For example, an alternative approach to the three problems in question is to embed the contaminated distributions in a reproducing kernel Hilbert Space and to estimate the $\bPi$ matrix by setting up an optimization problem (e.g., see \citet{ramaswamy2016} for the setting where there are two distributions). One of our necessary conditions, maximality (see Appendix \ref{fact_res_app}), suggests formulating the optimization problem to search for base distributions that \emph{(i)} explain the observed contaminated distributions and \emph{(ii)} whose convex hull is as large as possible. In this way, we believe that our general treatment of these three problems that arise in mutual contamination models provides intuitions that could be useful for designing new algorithms.

Although our experiments in Appendix \ref{experiments_section} suggest that joint irreducibility of the base distributions is a reasonable assumption, it is nevertheless worthwhile to consider what questions arise if the base distributions are not exactly jointly irreducible. We see two possible research directions. First, one could perform a stability analysis: when the base distributions are not jointly irreducible, but are nearly jointly irreducible (in some sense that would need to be defined precisely), does the estimate of $\bPi$ remain close to the true $\bPi$? A second research question is to reinterpret the problem of demixing of mixed membership models as a dimensionality reduction problem. That is, given a large set of distributions, one could seek to represent them as convex combinations of a small set of irreducible base distributions. Then, the challenge would be to define an appropriate measure of approximation quality and to determine whether our approach could be useful for designing a consistent algorithm for the best approximation.

\subsubsection*{Acknowledgements}

We thank the anonymous reviewers for their extremely helpful and insightful comments. This work was supported in part by NSF grants 1422157 and 1838179. Gilles Blanchard acknowledges support from the DFG, under the Research Unit FOR-1735 ``Structural Inference in Statistics - Adaptation and Efficiency'', and under the Collaborative Research Center SFB-1294 ``Data Assimilation''.

\appendix

\section{Outline of Appendices}

To begin, we introduce additional notation for the appendices. In Section \ref{fact_res_app}, we discuss how strong our sufficient conditions are and present factorization results that suggest that they are reasonable. In Section \ref{ident_app}, we give our identifiability analysis of demixing mixed membership models and classification with partial labels. In Section \ref{est_app}, we prove our results on the ResidueHat estimator, as well as the finite sample algorithms for demixing mixed membership models and classification with partial labels. In Section \ref{prev_app}, we state some lemmas from related papers that we use in our arguments.

\section{Notation for Appendices}
Let $A \subset \bbR^d$ be a set. Let $\aff A$ denote the affine hull of $A$, i.e., 
$\aff A = \set{\sum_{i=1}^K \theta_i \vec{x}_i | \vec{x}_1, \ldots , \vec{x}_K \in A, \sum_{i=1}^K \theta_i = 1}$. $A^\circ$ denotes the relative interior of $A$, i.e., $A^\circ = \set{x \in A | B(x,r) \cap \aff A \subseteq A \text{ for some } r > 0}$. Then, $\partial A$ denotes the relative boundary of $A$, i.e., $\partial A = A \setminus A^\circ$. In addition, let $\norm{\cdot}$ denote an arbitrary finite-dimensional norm on $\bbR^L$. For two vectors, $\vec{x}, \vec{y} \in \bbR^K$, define
\begin{align*}
\min(\vec{x}^T, \vec{y}^T) & = (\min(x_1, y_1), \ldots, \min(x_K, y_K)).
\end{align*}
$\vec{x} \geq \vec{y}$ means $x_i \geq y_i$ $\forall i \in [K]$.

For distributions $Q_1, \ldots, Q_k$, we use $\co(Q_1, \ldots, Q_K)^\circ$ to denote the relative interior of their convex hull and have that
\begin{align*}
\co(Q_1, \ldots, Q_K)^\circ & = \set{\sum_{i=1}^K \alpha_i Q_i : \alpha_i > 0, \sum_{i=1}^K \alpha_i = 1}.
\end{align*}
Note that when $Q_1, \ldots, Q_K$ are discrete distributions, this definition coincides with the definition of the relative interior of a set of Euclidean vectors.

We use the following affine mapping throughout the paper: $m_\nu(\vec{x}, \vec{y}) = (1-\nu) \vec{x} + \nu\vec{y}$ where $\vec{x}, \vec{y} \in \bbR^L$ and $\nu \in [0,1]$. Overloading notation, when $Q_1$ and $Q_2$ are distributions, we define $m_\nu(Q_1, Q_2) = (1-\nu)Q_1 + \nu Q_2$. Note that if $\vec{\eta}_1, \vec{\eta}_2 \in \si_L$ and $Q_1 = \vec{\eta}_1^T \vec{P}$ and $Q_2 = \vec{\eta}_2^T \vec{P}$, then $m_\nu(\vec{\eta}_1, \vec{\eta}_2)$ is the mixture proportion for the distribution $m_\nu(Q_1, Q_2)$.

\section{Factorization Results}
\label{fact_res_app}

In this section, we discuss whether our sufficient conditions are necessary. For the problems of demixing mixed membership models and classification with partial labels, we provide factorization results that suggest that our sufficient conditions are not much stronger than what is necessary.

\subsection{Multiclass Classification with Label Noise}

Our sufficient condition \textbf{(B1)} for multiclass classification with label noise is not necessary. Rather, \textbf{(B1)} is one of several possible sufficient conditions, and one that reflects a low noise assumption as illustrated in Figure \ref{fig:A}. Consider the case $L=M=2$ where \textbf{(B1)} is equivalent to $\pi_{1,2} + \pi_{2,1} < 1$. Recovery is still possible if $\pi_{1,2} + \pi_{2,1} > 1$ since one can simply swap $\p{1}$ and $\p{2}$ in a decontamination procedure. $\pi_{1,2} + \pi_{2,1} < 1$ is only necessary if one assumes that most of the training labels are correct, which is what $\pi_{1,2} + \pi_{2,1} < 1$ essentially says. For larger $L = M$, \textbf{(B1)} says in a sense that most of the data from $\p{i}$ come from $P_i$ for every $i$. Other sufficient conditions are possible (as in the binary case), but these would require at least one $\tilde{P}_i$ to contain a significant portion of some $P_j$, $j \neq i$. Regarding \textbf{(A)}, \citet{blanchard2016} study the question of necessity for joint irreducibility in the case $L=M=2$ and show that under mild assumptions on the decontamination procedure, joint irreducibility is necesssary.

\subsection{Demixing Mixed Membership Models}

Recall the definition of a factorization: $\tilde{\vec{P}}$ is factorizable if there exists $(\vec{\Pi}, \vec{P}) \in \Delta_L^M \times \sP^L$ such that $\tilde{\vec{P}} = \vec{\Pi} \vec{P}$; we call $(\vec{\Pi}, \vec{P})$ a factorization of $\tilde{\vec{P}}$.  

Our sufficient conditions are not much stronger than what is required by factorizations that satisfy the two forthcoming desirable properties. 
\begin{defn}
\label{max_demix}
We say a factorization $(\vec{\Pi}, \vec{P})$ of $\tilde{\vec{P}}$ is \emph{maximal} \textbf{(M)} iff for all factorizations $(\vec{\Pi}^\prime, \vec{P}^\prime)$ of $\tilde{\vec{P}}$ with $\vec{P}^\prime = (P^\prime_1,\ldots, P^\prime_L)^T \in \mathcal{P}^L$, it holds that $\set{P^\prime_1,\ldots, P^\prime_L} \subseteq \co(P_1,\ldots, P_L)$.
\end{defn} 
\noindent In words, $\vec{P} = (P_1, \ldots, P_L)^T$ is a maximal collection of base distributions if it is not possible to move any of the $P_i$s outside of $\co(P_1, \ldots,P_L)$ and represent $\tilde{\vec{P}}$. 
\begin{defn}
\label{lin_demix}
 We say a factorization $(\vec{\Pi}, \vec{P})$ of $\tilde{\vec{P}}$ is \emph{linear} \textbf{(L)} iff $\set{P_1, \ldots, P_L} \subseteq \spa(\p{1}, \ldots, \p{M})$. 
\end{defn}
\noindent We believe that \textbf{(L)} is a reasonable requirement because it holds in the common situation in which there exist $\vec{\pi}_{i_1}, \ldots, \vec{\pi}_{i_L}$ that are linearly independent. Then for $I = \set{i_1, \ldots, i_L}$, we can write $\tilde{\vec{P}}_I = \vec{\Pi}_I \vec{P}$ where $\vec{\Pi}_I$ is the submatrix of $\vec{\Pi}$ containing only the rows indexed by $I$ and  $\tilde{\vec{P}}_I$ is similarly defined. Then, $\vec{\Pi}_I$ is invertible and $\vec{P} = \vec{\Pi}_I^{-1} \tilde{\vec{P}}_I$.

Factorizations that satisfy \textbf{(A)} and \textbf{(B2)} are maximal and linear.
\begin{prop}
\label{max_ji_consistency}
Let $(\vec{\Pi}, \vec{P})$ be a factorization of $\tilde{\vec{P}}$. If $(\vec{\Pi}, \vec{P})$ satisfies \textbf{(A)} and \textbf{(B2)}, then $(\vec{\Pi}, \vec{P})$ satisfies \textbf{(M)} and \textbf{(L)}. 
\end{prop}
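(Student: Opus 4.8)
The plan is to establish \textbf{(L)} directly from \textbf{(B2)}, and then bootstrap it together with \textbf{(A)} into a dimension-counting argument that yields \textbf{(M)}. Throughout I would regard each distribution as an element of the vector space of finite signed measures on $(\sX, \sC)$, so that the identity $\tilde{\vec{P}} = \vec{\Pi}\vec{P}$ is an identity of (vectors of) measures, valid set by set.

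First I would prove \textbf{(L)}. Since $\vec{\Pi}$ has full column rank by \textbf{(B2)}, the $L \times L$ matrix $\vec{\Pi}^T\vec{\Pi}$ is invertible, so $\vec{B} = (\vec{\Pi}^T\vec{\Pi})^{-1}\vec{\Pi}^T$ is a left inverse: $\vec{B}\vec{\Pi} = \vec{I}_L$. Applying $\vec{B}$ to $\tilde{\vec{P}} = \vec{\Pi}\vec{P}$ gives $\vec{P} = \vec{B}\tilde{\vec{P}}$, i.e.\ componentwise $P_i = \sum_{k=1}^M B_{i,k}\tilde{P}_k$, an identity of measures that holds because it holds on every $S \in \sC$. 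Hence each $P_i \in \spa(\tilde{P}_1,\ldots,\tilde{P}_M)$, which is exactly \textbf{(L)}. Moreover, since each $\tilde{P}_i = \sum_j \pi_{i,j}P_j$ already lies in $\spa(P_1,\ldots,P_L)$, the reverse inclusion holds too, so $\spa(\tilde{P}_1,\ldots,\tilde{P}_M) = \spa(P_1,\ldots,P_L)$; by \textbf{(A)} and Lemma \ref{B_1} the $P_i$ are linearly independent, so this common span has dimension exactly $L$.

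Next I would prove \textbf{(M)}. Let $(\vec{\Pi}', \vec{P}')$ be an arbitrary factorization, so $\tilde{P}_i = \sum_j \pi'_{i,j}P'_j$ for each $i$; thus $\spa(\tilde{P}_1,\ldots,\tilde{P}_M) \subseteq \spa(P'_1,\ldots,P'_L)$. Chaining with the span equality above gives $\spa(P_1,\ldots,P_L) \subseteq \spa(P'_1,\ldots,P'_L)$. But the right-hand side is spanned by $L$ vectors, hence has dimension at most $L$, while the left-hand side has dimension exactly $L$; therefore the two coincide and in particular every $P'_j \in \spa(P_1,\ldots,P_L)$. Writing $P'_j = \sum_{i=1}^L \gamma_{j,i}P_i$, I would apply the total-mass functional $\mu \mapsto \mu(\sX)$, which is linear and equals $1$ on every distribution, to deduce $\sum_i \gamma_{j,i} = 1$. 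Since $P'_j$ is itself a distribution, condition \emph{(b)} of joint irreducibility forces $\gamma_{j,i} \geq 0$ for all $i$; together with $\sum_i \gamma_{j,i} = 1$ this places $P'_j \in \co(P_1,\ldots,P_L)$. As $j$ was arbitrary, $\{P'_1,\ldots,P'_L\} \subseteq \co(P_1,\ldots,P_L)$, which is \textbf{(M)}.

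The routine parts are the left-inverse construction and the mass-functional bookkeeping. The crux — the step I expect to require the most care — is the span membership $P'_j \in \spa(P_1,\ldots,P_L)$: it hinges on combining \textbf{(B2)} (which, via \textbf{(L)}, pins the common span of the $\tilde{P}_i$ to exactly $\spa(P_1,\ldots,P_L)$) with joint irreducibility (which guarantees this span has full dimension $L$, so that the span of the $L$ vectors $P'_1,\ldots,P'_L$ cannot be strictly larger and must coincide with it). Once span membership is secured, joint irreducibility condition \emph{(b)} does the rest, converting the affine representation into a convex one. The one subtlety to check carefully is that all the linear-algebraic manipulations are legitimate in the (infinite-dimensional) space of signed measures; this is fine because every relation involved lives inside the finite-dimensional subspace $\spa(P_1,\ldots,P_L)$.
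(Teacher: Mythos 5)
Your proof is correct, and while it rests on the same two pillars as the paper's own argument---(i) the span equality $\spa(\p{1}, \ldots, \p{M}) = \spa(P_1, \ldots, P_L)$ extracted from \textbf{(B2)}, and (ii) joint irreducibility condition \emph{(b)} plus the total-mass functional to upgrade span membership to convex-hull membership---your mechanics differ at both steps in ways worth noting. For \textbf{(L)}, the paper selects $L$ linearly independent rows of $\vec{\Pi}$ and invokes Lemma \ref{B_1} to conclude that the corresponding $\p{i_l}$ are linearly independent (a step that already consumes joint irreducibility), whereas your left inverse $(\vec{\Pi}^T\vec{\Pi})^{-1}\vec{\Pi}^T$ yields $\vec{P} = \vec{B}\tilde{\vec{P}}$ directly and shows that \textbf{(L)} follows from \textbf{(B2)} alone; you need \textbf{(A)} only afterwards, to pin the common span's dimension at $L$. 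For \textbf{(M)}, the paper argues by contradiction: assuming some $P'_i \notin \co(P_1, \ldots, P_L)$, it shows $P'_i \notin \spa(P_1, \ldots, P_L)$ and then asserts $\spa(\p{1}, \ldots, \p{M}) \subseteq \spa(\set{P'_j : j \neq i})$, a containment that does not follow as literally stated (a subspace not containing $P'_i$ need not lie in the span of the remaining $P'_j$); what actually closes that argument is precisely your observation---$\spa(\p{1}, \ldots, \p{M}) \subseteq \spa(P'_1, \ldots, P'_L)$ with dimension exactly $L$ on the left and at most $L$ on the right forces equality of the spans, hence $P'_j \in \spa(P_1, \ldots, P_L)$ for \emph{every} $j$. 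Your direct version thus both streamlines the proof and repairs the one under-justified step in the paper's; you also make explicit the mass bookkeeping ($\sum_i \gamma_{j,i} = 1$, so that nonnegativity from condition \emph{(b)} genuinely places $P'_j$ in the convex hull rather than merely the nonnegative cone), which the paper leaves implicit.
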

\begin{proof}
We first show that $(\vec{\Pi}, \vec{P})$ satisfies \textbf{(L)}. By hypothesis, $P_1, \ldots, P_L$ are jointly irreducible. By Lemma \ref{B_1}, $P_1, \ldots, P_L$ are linearly independent. Since by hypothesis $\vec{\Pi}$ has full rank, there exist $L$ rows in $\vec{\Pi}$, $\vec{\pi}_{i_1}, \ldots, \vec{\pi}_{i_L}$, that are linearly independent. By Lemma \ref{B_1}, $\p{{i_1}}, \ldots, \p{{i_L}}$ are linearly independent. Since $\vec{\Pi} \vec{P} = \tilde{\vec{P}}$, $\spa(\p{1}, \ldots, \p{M}) \subseteq \spa(P_1, \ldots, P_L) $. Since $\dim \spa(\p{1}, \ldots, \p{M}) \geq L$, we have $\spa(\p{1}, \ldots, \p{M}) = \spa(P_1, \ldots, P_L)$. Therefore, $(\vec{\Pi}, \vec{P})$ satisfies \textbf{(L)}.

Now, we show that $(\vec{\Pi}, \vec{P})$ satisfies \textbf{(M)}. Suppose that there is another solution $(\vec{\Pi}^\prime, \vec{P}^\prime)$ with $\vec{P}^\prime = (P^\prime_1, \ldots, P^\prime_L)^T$ such that $\vec{\Pi}^\prime \vec{P}^\prime = \tilde{\vec{P}}$ and with some $P^\prime_i$ such that $P_i^\prime \not \in \co(P_1, \ldots, P_L)$. We claim that $P_i^\prime \not \in \spa(P_1, \ldots, P_L)$. Towards a contradiction, suppose that $P_i^\prime \in \spa(P_1, \ldots, P_L)$ so that we can write $P_i^\prime = \sum_{i=1}^L a_i P_i$. Then, at least one of the $a_i$ is negative since, by assumption, $P_i^\prime \not \in \co(P_1, \ldots, P_L)$. But, by joint irreducibility of $P_1, \ldots, P_L$, $P_i^\prime$ is not a distribution, which is a contradiction. So, the claim follows. But, then, since $\spa(P_1, \ldots, P_L) = \spa(\p{1}, \ldots, \p{M})$, we must have that $\spa(\p{1}, \ldots, \p{M}) \subseteq \spa(P_1^\prime, \ldots, P_{i-1}^\prime, P_{i+1}^\prime, \ldots, P_L)$, which is impossible since $\dim \spa(\p{1}, \ldots, \p{M}) = L$.
\end{proof}
Maximal and linear factorizations imply conditions that are not much weaker than our sufficient conditions.
\begin{thm}
\label{nec_cond}
Let $(\vec{\Pi}, \vec{P})$ be a factorization of $\tilde{\vec{P}}$. If $(\vec{\Pi}, \vec{P})$ satisfies \textbf{(M)}, then
\begin{description}
\item[(A$'$)] $\forall i$, $P_i$ is irreducible with respect to every distribution in $\co(\set{P_j : j \neq i})$.
\end{description}
If $(\vec{\Pi}, \vec{P})$ satisfies \textbf{(L)}, then 
\begin{description}
\item[(B$'$)] $\rank(\vec{\Pi}) \geq \dim \spa(P_1, \ldots, P_L)$.
\end{description}
\end{thm}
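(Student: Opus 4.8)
The plan is to treat the two implications separately, since they rest on unrelated mechanisms. The implication \textbf{(L)} $\Rightarrow$ \textbf{(B$'$)} is essentially a dimension count, whereas \textbf{(M)} $\Rightarrow$ \textbf{(A$'$)} is proved by contraposition: I will show that if some $P_i$ fails to be irreducible with respect to a distribution in $\co(\set{P_j : j \neq i})$, then I can build a new factorization of $\tilde{\vec{P}}$ that places a base distribution strictly outside $\co(P_1, \ldots, P_L)$, contradicting maximality.

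For \textbf{(M)} $\Rightarrow$ \textbf{(A$'$)}, suppose \textbf{(A$'$)} fails. Then, taking $i=1$ without loss of generality, there is $H = \sum_{j \neq 1} \beta_j P_j \in \co(\set{P_j : j \neq 1})$, a scalar $\gamma \in (0,1]$, and a distribution $F$ with $P_1 = \gamma H + (1-\gamma) F$. If $\gamma < 1$, I substitute $P_1 = (1-\gamma)F + \gamma \sum_{j\neq 1}\beta_j P_j$ into each row $\tilde{P}_m = \pi_{m,1}P_1 + \sum_{j\neq 1}\pi_{m,j}P_j$; collecting terms yields a matrix $\vec{\Pi}^\prime$ with entries $\pi^\prime_{m,1} = (1-\gamma)\pi_{m,1}$ and $\pi^\prime_{m,j} = \pi_{m,j} + \gamma\beta_j\pi_{m,1}$, which I will verify is row-stochastic, so that $(\vec{\Pi}^\prime, (F, P_2, \ldots, P_L)^T)$ is a genuine factorization of $\tilde{\vec{P}}$.

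The crux is then a dichotomy on $F$. If $F \notin \co(P_1, \ldots, P_L)$, this new factorization already violates \textbf{(M)} and we are done. Otherwise $F = \sum_k \alpha_k P_k$ is a convex combination, and substituting this back into $P_1 = \gamma H + (1-\gamma)F$ and isolating $P_1$ (the resulting coefficient $1 - (1-\gamma)\alpha_1$ is strictly positive because $(1-\gamma)\alpha_1 \leq 1-\gamma < 1$) shows $P_1 \in \co(\set{P_j : j \neq 1})$. This degenerate case, which also absorbs $\gamma = 1$, is where the real work lies and is what I expect to be the main obstacle, since here the residue construction does not escape the hull. I will dispatch it by a separate construction: when $P_1 \in \co(\set{P_j : j \neq 1})$, every $\tilde{P}_m$ lies in $\co(\set{P_j : j \neq 1}) = \co(P_1, \ldots, P_L)$, so I may zero out the first column of the mixing matrix and replace $P_1$ by any distribution outside this hull (such a distribution exists because $\co(P_1,\ldots,P_L)$ is a finite-dimensional, hence proper, subset of $\sP$), again producing a factorization that violates \textbf{(M)}.

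Finally, for \textbf{(L)} $\Rightarrow$ \textbf{(B$'$)}, I observe that $\tilde{\vec{P}} = \vec{\Pi}\vec{P}$ gives $\spa(\p{1}, \ldots, \p{M}) \subseteq \spa(P_1, \ldots, P_L)$ automatically, while \textbf{(L)} supplies the reverse inclusion; hence the two spans coincide. Writing $\phi(\vec{c}) = \sum_j c_j P_j$ for the linear map from $\bbR^L$ into this common span, each $\p{m} = \phi(\vec{\pi}_m)$, so $\spa(\p{1}, \ldots, \p{M}) = \phi(\text{row space of } \vec{\Pi})$ has dimension at most $\rank(\vec{\Pi})$. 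Combining the two facts gives $\rank(\vec{\Pi}) \geq \dim \spa(\p{1},\ldots,\p{M}) = \dim \spa(P_1, \ldots, P_L)$, which is exactly \textbf{(B$'$)}.
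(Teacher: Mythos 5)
Your proposal is correct and follows essentially the same route as the paper's proof: for \textbf{(M)} $\Rightarrow$ \textbf{(A$'$)} you argue by contraposition using the paper's two mechanisms — when the residual distribution lies outside $\co(P_1,\ldots,P_L)$ it directly yields a maximality-violating factorization, and in the degenerate case $P_1 \in \co(\set{P_j : j \neq 1})$ (into which you fold $\gamma = 1$ and the in-hull $F$, just as the paper folds $\gamma=1$ and $G \in \co(P_1,\ldots,P_L)$ into the same conclusion) you swap in an arbitrary distribution outside the hull with zero mixing weight — with the only differences being cosmetic (the explicit row-stochastic $\vec{\Pi}^\prime$, which the paper leaves implicit, and the cleaner statement of the dimension count for \textbf{(B$'$)} via the map $\vec{c} \mapsto \sum_j c_j P_j$). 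One shared caveat, not a defect relative to the paper: both your argument and the paper's silently assume a distribution outside $\co(P_1,\ldots,P_L)$ exists (your ``finite-dimensional, hence proper'' justification is not literally valid for small finite sample spaces, where the hull can exhaust $\sP$), so neither proof covers that degenerate ambient case.
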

\begin{proof}
\begin{description}
\item[(A$'$)] We prove the contrapositive. Suppose that there is some $P_i$ and $Q \in \co(\set{P_j : j \neq i})$ with $Q = \sum_{j \neq i} \beta_j P_j$ such that $P_i$ is not irreducible wrt $Q$. Then, there is some distribution $G$ and $\gamma \in (0,1]$ such that $P_i = \gamma Q + (1-\gamma) G$. 

Suppose $\gamma = 1$. Then, $P_i = Q \in \co(\set{P_k : k \neq i})$. But, then $\p{1}, \ldots, \p{M} \in \co(\set{P_j : j \neq i} \cup \set{R})$ for any distribution $R \not \in \co(P_1, \ldots, P_L)$. This shows that $(\vec{\Pi}, \vec{P})$ does not satisfy \textbf{(M)}.  

Therefore, assume that $\gamma \in (0,1)$. Either $G \in \co(P_1, \ldots, P_L)$ or $G \not \in \co(P_1, \ldots, P_L)$. Suppose that $G \in \co(P_1, \ldots, P_L)$. Then, there exist $\alpha_1, \ldots, \alpha_L$ all nonnegative and summing to $1$ such that 
\begin{align*}
P_i & = \gamma Q + (1-\gamma)(\alpha_1 P_1 + \ldots + \alpha_L P_L).
\end{align*}
\noindent Therefore, $P_i \in \co(\set{P_k : k \neq i})$. Then, by the argument in the previous paragraph, $(\vec{\Pi}, \vec{P})$ does not satisfy \textbf{(M)}.

Now, suppose that $G \not \in \co(P_1, \ldots, P_L)$. Since $P_i \in \co(G, Q)$ and $Q \in \co(\set{P_j : j \neq i})$, we have that $\co(\set{P_j : j \neq i} \cup \set{G}) \supset \co(P_1, \ldots, P_L)$. Then, $\p{1}, \ldots, \p{M} \in \co(\set{P_j : j \neq i} \cup \set{G})$. This shows that $(\vec{\Pi}, \vec{P})$ does not satisfy \textbf{(M)}. The result follows.

\item[(B$'$)] Clearly, $\dim \spa (\p{1}, \ldots , \p{M}) \leq \dim \spa(P_1, \ldots, P_L)$ since $\p{i} = \vec{\pi}_i^T \vec{P}$ for all $i \in [M]$. Since $(\vec{\Pi}, \vec{P})$ satisfies \textbf{(L)}, $\spa (P_1, \ldots, P_L) \subset \spa (\p{1}, \ldots, \p{L})$, which implies that
\begin{align*}
\dim \spa(P_1, \ldots, P_L) \leq \dim \spa (\p{1}, \ldots , \p{M}).
\end{align*}
Therefore, $\dim \spa (\p{1}, \ldots , \p{M}) = \dim \spa(P_1, \ldots, P_L)$. Then, since $\p{1}, \ldots, \p{M} \in \range(\vec{\Pi})$, $\dim \spa (P_1, \ldots , P_L) \leq \dim \range \vec{\Pi}$. By Result 3.117 of \citet{axler}, 
\begin{align*}
\rank (\vec{\Pi}) = \dim \text{range} \, \vec{\Pi} \geq \dim \spa(P_1, \ldots, P_L).
\end{align*}
\end{description}
\end{proof}
\noindent As a corollary, Theorem \ref{nec_cond} implies that if there is a linear factorization $(\vec{\Pi}, \vec{P})$ of $\tilde{P}$ and $P_1, \ldots, P_L$ are linearly independent, then there must be at least as many contaminated distributions as base distributions, i.e., $M \geq L$. Also, note that \textbf{(A$'$)} appears as a sufficient condition in \citet{sanderson2014}.

By comparing \textbf{(A)} with \textbf{(A$'$)} and \textbf{(B2)} with \textbf{(B$'$)}, we see that the proposed sufficient conditions are not much stronger than \textbf{(M)} and \textbf{(L)} require. Since joint irreducibility of $P_1, \ldots, P_L$ entails their linear independence by Lemma \ref{B_1}, under \textbf{(A)}, \textbf{(B2)} and \textbf{(B$'$)} are the same. \textbf{(A)} differs from \textbf{(A$'$)} in that it requires that a slightly larger set of distributions are irreducible with respect to convex combinations of the remaining distributions. Specifically, under \textbf{(A)}, \emph{every convex combination} of a subset of the $P_i$s is irreducible with respect to every convex combination of the other $P_i$s whereas \textbf{(A$'$)} only requires that every $P_i$ be irreducible with respect to every convex combination of the other $P_i$s. 

\subsection{Classification with Partial Labels}

Most of our definitions and results for classification with partial labels parallel those of demixing mixed membership models. We say that $\tilde{\vec{P}}$ is $\bPi^+$-factorizable if there exists a pair $(\vec{\Pi}, \vec{P}) \in \Delta_L^M \times \sP^L$ that solves (\ref{model}) such that $\vec{\Pi}$ is consistent with $\bPi^+$; we call $(\vec{\Pi}, \vec{P})$ an $\bPi^+$-factorization of $\tilde{\vec{P}}$. We say a partial label model is identifiable if given $(\tilde{\vec{P}}, \bPi^+)$, $\tilde{\vec{P}}$ has a unique $\bPi^+$-factorization $(\vec{\Pi}, \vec{P})$.

Our definitions of maximal and linear $\bPi^+$-factorizations resemble definitions \ref{max_demix} and \ref{lin_demix}.

\begin{defn}
We say a $\bPi^+$-factorization $(\vec{\Pi}, \vec{P})$ of $\tilde{\vec{P}}$ is \emph{maximal} \textbf{(M)}  iff for all $\bPi^+$-factorizations $(\vec{\Pi}^\prime, \vec{P}^\prime)$ of $\tilde{\vec{P}}$ with $\vec{P}^\prime = (P^\prime_1,\ldots, P^\prime_L)^T \in \mathcal{P}^L$, it holds that $\set{P^\prime_1,\ldots, P^\prime_L} \subseteq \co(P_1,\ldots, P_L)$.
\end{defn}

\begin{defn}
\sloppy We say a $\bPi^+$-factorization $(\vec{\Pi}, \vec{P})$ of $\tilde{\vec{P}}$ is \emph{linear} \textbf{(L)} iff $\set{P_1, \ldots, P_L} \subseteq \spa(\p{1}, \ldots, \p{M})$.
\end{defn}

Similarly, $\bPi^+$-factorizations that satisfy \textbf{(A)} and \textbf{(B3)} are maximal and linear. The proof is identical and, accordingly, omitted.

\begin{prop}
Let $(\vec{\Pi}, \vec{P})$ be a $\bPi^+$-factorization of $\tilde{\vec{P}}$. If $(\vec{\Pi}, \vec{P})$ satisfies \textbf{(A)} and \textbf{(B3)}, then $(\vec{\Pi}, \vec{P})$ satisfies \textbf{(M)} and \textbf{(L)}.
\end{prop}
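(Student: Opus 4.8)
The plan is to observe that this proposition is an almost immediate consequence of Proposition~\ref{max_ji_consistency}, since condition \textbf{(B3)} contains full column rank of $\vec{\Pi}$ as one of its two components, and that is exactly condition \textbf{(B2)}. The extra requirement in \textbf{(B3)} that the columns of $\vec{S}$ be unique plays no role in establishing \textbf{(M)} and \textbf{(L)}; it is needed elsewhere, for identifiability of the partial label problem. Thus a $\vec{S}$-factorization satisfying \textbf{(A)} and \textbf{(B3)} in particular satisfies \textbf{(A)} and \textbf{(B2)}, and the entire argument of Proposition~\ref{max_ji_consistency} can be transcribed.

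First I would handle linearity. The notion of a \emph{linear} $\vec{S}$-factorization is the identical condition $\set{P_1, \ldots, P_L} \subseteq \spa(\p{1}, \ldots, \p{M})$ as for ordinary factorizations, and it depends only on the pair $(\vec{\Pi}, \vec{P})$ itself, so Proposition~\ref{max_ji_consistency} delivers \textbf{(L)} verbatim. Concretely: joint irreducibility gives linear independence of $P_1, \ldots, P_L$ via Lemma~\ref{B_1}; full column rank of $\vec{\Pi}$ produces $L$ linearly independent rows $\vec{\pi}_{i_1}, \ldots, \vec{\pi}_{i_L}$, whence $\p{i_1}, \ldots, \p{i_L}$ are linearly independent by Lemma~\ref{B_1}; and a dimension count forces $\spa(\p{1}, \ldots, \p{M}) = \spa(P_1, \ldots, P_L)$, yielding the required inclusion.

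For maximality, the key observation is that the $\vec{S}$-factorization version of \textbf{(M)} quantifies only over $\vec{S}$-factorizations, whereas the version proved in Proposition~\ref{max_ji_consistency} quantifies over all factorizations. Since every $\vec{S}$-factorization is in particular a factorization, the unrestricted maximality established there is strictly stronger and hence implies the restricted one. Alternatively, I would replicate the contradiction argument directly: if some competing $\vec{S}$-factorization $(\vec{\Pi}^\prime, \vec{P}^\prime)$ had a component $P_i^\prime \not\in \co(P_1, \ldots, P_L)$, then by joint irreducibility (condition \emph{(b)}) $P_i^\prime \not\in \spa(P_1, \ldots, P_L)$, and since $\spa(\p{1}, \ldots, \p{M}) = \spa(P_1, \ldots, P_L)$ the remaining $L-1$ primed distributions could not span a space containing all of $\p{1}, \ldots, \p{M}$, contradicting $\vec{\Pi}^\prime \vec{P}^\prime = \tilde{\vec{P}}$ on dimensional grounds.

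The main (and essentially only) point requiring care is the direction of the quantifier in \textbf{(M)}: I would verify explicitly that restricting attention to $\vec{S}$-factorizations weakens, rather than strengthens, the maximality requirement, so that the stronger conclusion of Proposition~\ref{max_ji_consistency} transfers cleanly. Everything else is a verbatim transcription of that earlier proof, which is precisely why the paper declares the argument identical and omits it.
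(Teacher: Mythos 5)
Your proposal is correct and follows exactly the route the paper intends when it declares the proof ``identical'' to that of Proposition~\ref{max_ji_consistency}: since \textbf{(B3)} subsumes \textbf{(B2)}, the earlier argument transfers verbatim, and your explicit check that the $\vec{S}$-factorization version of \textbf{(M)} quantifies over a smaller class of competitors (so the unrestricted maximality is the stronger statement) is precisely the one point worth making explicit. Nothing is missing.
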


Linear $\bPi^+$-factorizations must satisfy \textbf{(B$'$)}; indeed, the proof is identical to the proof for linear factorizations. However, maximal $\bPi^+$-factorizations need not satisfy \textbf{(A$'$)}. Consider the following counterexample. Let $Q_1 \sim \unif(0,2)$ and $Q_2 \sim \unif(1,3)$. Let $P_1 = \frac{2}{3} Q_1 + \frac{1}{3} Q_2$, $P_2 = \frac{1}{3} Q_1 + \frac{2}{3} Q_2$, $\p{1} = P_1$, and $\p{2} = P_2$. Then, $\bPi^+ = \vec{I}_2$---the identity matrix. Then, any $(\vec{\Pi}^\prime, \vec{P}^\prime)$ that satisfies (\ref{model}) and is consistent with $\bPi^+$ must be such that $(P_1, P_2)^T = \vec{P}^\prime$. Therefore, \textbf{(M$'$)} is satisfied. But, clearly, \textbf{(A$'$)} is not satisfied.

In summary, we are unable to offer a necessary condition that is close to \textbf{(A)}. On the other hand, \textbf{(B3)} is necessary.
\begin{prop}
\label{partial_necessity}
Let $(\vec{\Pi}, \vec{P})$ be an $\bPi^+$-factorization of $\tilde{\vec{P}}$. If $(\tilde{\vec{P}}, \bPi^+)$ is identifiable, then the columns of $\bPi^+$ are distinct.
\end{prop}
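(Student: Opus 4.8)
The plan is to prove the contrapositive: assuming two columns of $\vec{S}$ coincide, I will exhibit a second $\vec{S}$-factorization of $\tilde{\vec{P}}$ distinct from $(\vec{\Pi}, \vec{P})$, contradicting identifiability. So suppose columns $a \neq b$ of $\vec{S}$ are identical, i.e. $S_{i,a} = S_{i,b}$ for every $i \in [M]$. Because $(\vec{\Pi}, \vec{P})$ is $\vec{S}$-consistent, columns $a$ and $b$ of $\vec{\Pi}$ then have exactly the same support, $\pi_{i,a} > 0 \iff \pi_{i,b} > 0$ for all $i$. The guiding intuition is that the partial-label information cannot tell the base distributions $P_a$ and $P_b$ apart, so they may be interchanged or their weights redistributed without affecting either $\tilde{\vec{P}}$ or $\vec{S}$. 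The whole argument is elementary and, importantly, uses neither \textbf{(A)} nor \textbf{(B3)}, since the proposition only presumes that $(\vec{\Pi}, \vec{P})$ is an $\vec{S}$-factorization.

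The generic case is $P_a \neq P_b$. Here I would let $\vec{P}'$ be $\vec{P}$ with its $a$-th and $b$-th entries swapped and $\vec{\Pi}'$ be $\vec{\Pi}$ with its $a$-th and $b$-th columns swapped. Swapping preserves row sums and nonnegativity, so $\vec{\Pi}' \in \Delta_L^M$; and because columns $a, b$ of $\vec{S}$ are equal, the support pattern of $\vec{\Pi}'$ again equals $\vec{S}$, so $\vec{\Pi}'$ remains $\vec{S}$-consistent. A one-line computation gives $\vec{\Pi}' \vec{P}' = \vec{\Pi} \vec{P} = \tilde{\vec{P}}$, since in each row the two affected terms $\pi_{i,a} P_a + \pi_{i,b} P_b$ are merely reordered. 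Finally $(\vec{\Pi}', \vec{P}') \neq (\vec{\Pi}, \vec{P})$ because $\vec{P}'$ carries $P_b \neq P_a$ in its $a$-th slot. Hence $(\vec{\Pi}', \vec{P}')$ is a distinct $\vec{S}$-factorization.

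The degenerate case $P_a = P_b$ is where the swap produces nothing new, and is the main obstacle to a fully uniform argument. Here I would instead redistribute mass between the two columns. If the common support $T = \set{i : S_{i,a} = 1}$ is nonempty, pick any $i_0 \in T$ and set $\pi'_{i_0,a} = \pi_{i_0,a} + \varepsilon$ and $\pi'_{i_0,b} = \pi_{i_0,b} - \varepsilon$ for a small $\varepsilon \in (0, \pi_{i_0,b})$, leaving every other entry of $\vec{\Pi}$ and all of $\vec{P}$ unchanged. This $\vec{\Pi}'$ is still row-stochastic, keeps both affected entries strictly positive (so it stays $\vec{S}$-consistent), and satisfies $\vec{\Pi}' \vec{P} = \vec{\Pi} \vec{P}$ because $P_a = P_b$ makes the redistributed mass cancel; since $\vec{\Pi}' \neq \vec{\Pi}$, this is again a distinct factorization. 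If instead $T = \emptyset$, then column $a$ of $\vec{\Pi}$ is identically zero, so $P_a$ does not appear in $\tilde{\vec{P}}$, and replacing $P_a$ by any other distribution in $\sP$ yields a distinct $\vec{S}$-factorization. In every case we obtain two distinct $\vec{S}$-factorizations of $\tilde{\vec{P}}$, contradicting identifiability and completing the contrapositive. I expect the only delicate points to be verifying $\vec{S}$-consistency of the modified matrix (maintaining the exact positivity pattern) and cleanly dispatching the $P_a = P_b$ subcases.
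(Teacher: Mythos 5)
Your proof is correct, and its main case coincides with the paper's: both argue by contraposition, and when the two base distributions attached to the identical columns differ, both swap the corresponding columns of $\vec{\Pi}$ and entries of $\vec{P}$ to produce a second $\vec{S}$-consistent factorization. Where you genuinely diverge is the degenerate case $P_a = P_b$, where the swap is vacuous. The paper dispatches this by first proving a standalone claim that identifiability forces all $P_i$ to be pairwise distinct, exhibiting an alternative factorization that merges the two equal distributions into a single column (with matrix $\begin{pmatrix} 2\vec{\Pi}_{:,1} & \vec{\Pi}_{:,3} & \ldots & \vec{\Pi}_{:,L}\end{pmatrix}$); note this construction changes the number of base distributions from $L$ to $L-1$, so it does not literally fit the paper's definition of an $\vec{S}$-factorization as an element of $\Delta_L^M \times \sP^L$ with the same $\vec{S}$, and is best read as an informal appeal to non-uniqueness. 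Your treatment stays strictly inside the definition: when the shared column support is nonempty you perturb a single row by $\varepsilon \in (0, \pi_{i_0,b})$, preserving row-stochasticity, the exact positivity pattern (hence $\vec{S}$-consistency), and the product $\vec{\Pi}\vec{P}$ since $P_a = P_b$; when the columns are identically zero you replace the unused $P_a$ by any other distribution. This is arguably tighter than the paper's argument, and it also makes explicit the all-zero-column subcase that the paper glosses over. The only implicit assumption in your $T = \emptyset$ subcase is that $\sP$ contains at least two distinct distributions, i.e., the sample space is nontrivial, which is harmless and standard. Your observation that neither \textbf{(A)} nor \textbf{(B3)} is used matches the paper, whose proof likewise invokes only the definitions of consistency and identifiability.
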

\begin{proof}
First, suppose $(\tilde{\vec{P}}, \bPi^+)$ is identifiable. Then, we can write $\tilde{\vec{P}} = \vec{\Pi} \vec{P}$ where $\vec{\Pi}$ is consistent with $\bPi^+$. We claim that for all $i \neq j$, $P_i \neq P_j$. To the contrary, suppose that there exists $i \neq j$ such that $P_i = P_j$. Without loss of generality, suppose $i = 1, j = 2$. Then, we can write
\begin{align*}
\tilde{\vec{P}} = \begin{pmatrix}
2 \vec{\Pi}_{:,1} & \vec{\Pi}_{:,3} & \ldots & \vec{\Pi}_{:,L}
\end{pmatrix}
\begin{pmatrix}
P_1 \\
P_3 \\
\vdots \\
P_L
\end{pmatrix},
\end{align*}
which contradicts the uniqueness of $\vec{P}$ and $\vec{\Pi}$.

Next, we give a proof by contraposition. Suppose that there exists $i \neq j$ such that $\bPi^+_{:, i} = \bPi^+_{:,j}$. Without loss of generality, let $i = 1$ and $j= 2$. Suppose that $(\vec{\Pi}, \vec{P})$ is consistent with $\bPi^+$ and solves $\tilde{\vec{P}} = \vec{\Pi} \vec{P}$. Then, the pair $(\vec{\Pi}^\prime, \vec{P}^\prime)$ given by
\begin{align*}
\vec{\Pi}^\prime = & \begin{pmatrix}
\vec{\Pi}_{:,2}  & \vec{\Pi}_{:,1} & \vec{\Pi}_{:,3} & \ldots & \vec{\Pi}_{:,L}
\end{pmatrix} \\
\vec{P}^\prime & = \begin{pmatrix}
P_2 \\
P_1 \\
P_3 \\
\vdots \\
P_L \\
\end{pmatrix}
\end{align*}
solves $\tilde{\vec{P}} = \vec{\Pi}^\prime \vec{P}^\prime$ and is consistent with $\bPi^+$. If $P_1 = P_2$, then $(\tilde{\vec{P}}, \bPi^+)$ is not identifiable, so we may rule out this case. Therefore, $\vec{P}^\prime \neq \vec{P}$, yielding the result.
\end{proof}

\section{Identification}
\label{ident_app}

In this section, we establish our identification results, i.e., Theorems \ref{demix_identification} and \ref{partial_identification}. We begin by proving Proposition \ref{equiv_opt}. Second, we prove a set of useful lemmas in Section \ref{lemmas_for_identification}. Third, we present our results on demixing mixed membership models in Section \ref{demixing_mixed_membership_models_section}. Finally, we present our results on classification with partial labels in Section \ref{classification_with_partial_labels_section}.

\subsection{Proof of Proposition \ref{equiv_opt}}

\begin{proof}
We prove the claims in order.
\begin{enumerate}
\item Without loss of generality, suppose $i =1$ and let $A = [L] \setminus \set{1}$. There is at least one point attaining the maximum in the optimization problem $\kappa^*(\vec{\eta}_1 \, | \, \set{\vec{\eta}_j : j \neq 1})$ by Lemma \ref{A_1}. Take a $G$ that achieves the maximum in $\kappa^*(Q_1 \, | \, \set{Q_j : j \neq 1})$, which exists also by Lemma \ref{A_1}. Then, we can write:
\begin{align}
Q_1 & = (1 - \sum_{j \geq 2} \mu_j) G + \sum_{j \geq 2} \mu_j Q_j. \label{prop_3_eqtn_1}
\end{align}
\sloppy Note that since $\vec{\eta}_1, \ldots, \vec{\eta}_L$ are linearly independent and $P_1, \ldots, P_L$ are jointly irreducible, $Q_1, \ldots, Q_L$ are linearly independent by Lemma \ref{B_1}. Therefore, $\kappa^*(Q_1 \, | \, \set{Q_j : j \neq 1}) = \sum_{j \geq 2} \mu_j < 1$ because, if not, $Q_1 = \sum_{j \geq 2} \mu_j Q_j$. 

Further, any $G$ that satisfies  \eqref{prop_3_eqtn_1} has the form $\sum_{i=1}^L \gamma_i P_i$ because \eqref{prop_3_eqtn_1} implies that $G \in \spa(Q_1, \ldots, Q_L)$ and each $Q_i \in \co(P_1, \ldots, P_L)$ by hypothesis. The $\gamma_i$ must sum to one, and we have that they are nonnegative by joint irreducibility. That is, $\vec{\gamma} \equiv \begin{pmatrix}
\gamma_1, \ldots, \gamma_L
\end{pmatrix}^T$
is a discrete distribution. Then, the above equation is equivalent to 
\begin{align}
\label{one_to_one_1}
\vec{\eta}^T_1 \vec{P} & = (1- \sum_{j \geq 2} \mu_j) \vec{\gamma}^T \vec{P} + \sum_{j \geq 2} \mu_j \vec{\eta}_j^T \vec{P}.
\end{align}
Since $P_1, \ldots, P_L$ are jointly irreducible, $P_1, \ldots, P_L$ are linearly independent by Lemma \ref{B_1}. By linear independence of $P_1, \ldots, P_L$, we obtain 
\begin{align}
\label{one_to_one_2}
\vec{\eta}_1 & = (1- \sum_{j \geq 2} \mu_j) \vec{\gamma} + \sum_{j \geq 2} \mu_j \vec{\eta}_j.
\end{align}
\noindent Consequently, $\kappa^*(Q_1 \, | \, \set{Q_j : j \neq 1}) = \kappa^*(\vec{\eta}_1 \, | \,\set{\vec{\eta}_j : j \neq 1}) < 1$. This completes the proof of statement 1.

\item This result follows immediately from Lemma \ref{A_1}.

\item By equations (\ref{one_to_one_1}) and (\ref{one_to_one_2}), there is a one-to-one correspondence between the maximizer $G$ to $\kappa^*(Q_1 \, | \, \set{Q_j : j \neq 1})$ and the maximizer $\vec{\gamma}$ to $\kappa^*(\vec{\eta}_1 \, | \, \set{\vec{\eta}_j : j \neq 1})$. The one-to-one correspondence is given by $G = \vec{\gamma}^T \vec{P}$. 

\end{enumerate}
\end{proof}

\subsection{Lemmas for Identification}
\label{lemmas_for_identification}

We present some technical results that are used repeatedly for our identification results. Lemma \ref{facts} gives us some useful properties of the two-sample $\kappa^*$ that we exploit in the PartialLabel and Demix algorithms. Statement \emph{1} gives an alternative form of $\kappa^*$. Statement \emph{2} gives the intuitive result that the residues lie on the boundary of the simplex. Statement \emph{3} gives a useful relation for determining whether two mixture proportions are on the same face; we use this relation extensively in our algorithms.
\begin{lemma}
\label{facts}
Let $F_1, \ldots, F_K$ be jointly irreducible distributions with $\vec{F} = (F_1, \ldots, F_K)^T$, $Q_1, Q_2$ be two distributions such that $Q_i = \vec{\eta}_i^T \vec{F}$ where $\vec{\eta}_i \in \si_K$ for $i =1,2$ and $\vec{\eta}_1 \neq \vec{\eta}_2$. Let $R$ be the residue of $Q_1$ wrt $Q_2$ and $R = \vec{\mu}^T \vec{F}$.
\begin{enumerate}
\item There is a one-to-one correspondence between the optimization problem in $\kappa^*(Q_1 \, | \, Q_2)$ and the optimization problem
\begin{align*}
\max(\alpha \geq 1 |  \exists \text{ a distribution } G \text{ s.t } G = Q_2 + \alpha(Q_1 - Q_2))
\end{align*}
via $\alpha = (1- \kappa)^{-1}$.
\item $ \vec{\mu} \in \partial \si_K$. 

\item $\sN(\vec{\eta}_2) \not \subseteq \sN(\vec{\eta}_1)$ if and only if $R=Q_1$ if and only if $\kappa^*(Q_1 \, | \, Q_2) = 0$.
\end{enumerate}
\end{lemma}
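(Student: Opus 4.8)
The plan is to reduce all three claims to elementary statements about the two points $\vec{\eta}_1,\vec{\eta}_2$ inside the simplex $\si_K$, which is what joint irreducibility buys us. The first thing I would record is the basic reduction. By Lemma \ref{B_1}, joint irreducibility of $F_1,\ldots,F_K$ implies they are linearly independent, so every distribution in their span has a \emph{unique} mixture proportion; moreover, evaluating on all of $\sX$ shows any $\vec{\nu}^T\vec{F}$ that is a distribution has $\sum_j\nu_j=1$, and condition (b) of joint irreducibility forces $\nu_j\geq 0$, so $\vec{\nu}^T\vec{F}$ is a distribution if and only if $\vec{\nu}\in\si_K$. In particular $Q_1\neq Q_2$ (as $\vec{\eta}_1\neq\vec{\eta}_2$), and the optimization defining $\kappa^*(Q_1\,|\,Q_2)$ is equivalent to finding the largest $\kappa$ for which $\frac{\vec{\eta}_1-\kappa\vec{\eta}_2}{1-\kappa}\in\si_K$, the residue then having mixture proportion $\vec{\mu}=\frac{\vec{\eta}_1-\kappa\vec{\eta}_2}{1-\kappa}$. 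This is in the same spirit as Proposition \ref{equiv_opt}, specialized to the two-sample case, and I would state it once so all three parts can quote it. I expect this transport to be the only genuine subtlety; each part is routine afterward.

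For part \emph{1} I would argue purely algebraically. Given $Q_1=(1-\kappa)G+\kappa Q_2$ with $\kappa\in[0,1)$ (the strict bound coming from Proposition \ref{label_noise_bin_case}, since $Q_1\neq Q_2$), solving for $G$ gives $G=\frac{1}{1-\kappa}Q_1-\frac{\kappa}{1-\kappa}Q_2$. Substituting $\alpha=(1-\kappa)^{-1}$, so that $\frac{\kappa}{1-\kappa}=\alpha-1$, yields $G=Q_2+\alpha(Q_1-Q_2)$, exactly the form in the second problem. Since $\alpha=(1-\kappa)^{-1}$ is a strictly increasing bijection from $[0,1)$ onto $[1,\infty)$ carrying each feasible pair $(\kappa,G)$ to a feasible pair $(\alpha,G)$ with the \emph{same} $G$, maximizing $\kappa$ coincides with maximizing $\alpha$, establishing the stated one-to-one correspondence.

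For part \emph{2} I would use the reduction to view $\vec{\mu}$ as a point on a ray. With $\alpha=(1-\kappa)^{-1}$ the residue's proportion is $\vec{\mu}(\alpha)=\vec{\eta}_2+\alpha(\vec{\eta}_1-\vec{\eta}_2)$, and the feasible set $\set{\alpha\geq 1:\vec{\mu}(\alpha)\in\si_K}$ is a closed interval $[1,\alpha^*]$: nonempty since $\vec{\mu}(1)=\vec{\eta}_1\in\si_K$, closed because $\si_K$ is closed, and bounded because $\si_K$ is bounded while the direction $\vec{\eta}_1-\vec{\eta}_2\neq\vec{0}$. The residue corresponds to the maximizer $\alpha^*$. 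If $\vec{\mu}(\alpha^*)$ lay in the relative interior $\si_K^\circ$, then $\vec{\mu}(\alpha)$ would stay in $\si_K$ for $\alpha$ slightly larger than $\alpha^*$, contradicting maximality; hence $\vec{\mu}=\vec{\mu}(\alpha^*)\in\partial\si_K$.

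For part \emph{3} the equivalence $R=Q_1\iff\kappa^*(Q_1\,|\,Q_2)=0$ is immediate: from $R=\frac{Q_1-\kappa Q_2}{1-\kappa}$, taking $\kappa=0$ gives $R=Q_1$, and conversely $R=Q_1$ forces $\kappa(Q_1-Q_2)=0$, whence $\kappa=0$ because $Q_1\neq Q_2$. For the support-set characterization I would again work with $\vec{\eta}_1,\vec{\eta}_2$: by the reduction, $\kappa^*(Q_1\,|\,Q_2)>0$ iff there is some $\kappa>0$ with $\vec{\eta}_1-\kappa\vec{\eta}_2\geq\vec{0}$ componentwise, i.e. $\eta_{1,j}\geq\kappa\,\eta_{2,j}$ for all $j$. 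Such a $\kappa>0$ exists iff $\eta_{1,j}>0$ whenever $\eta_{2,j}>0$ (take $\kappa=\min_{j:\,\eta_{2,j}>0}\eta_{1,j}/\eta_{2,j}$), which is exactly $\sS(\vec{\eta}_2)\subseteq\sS(\vec{\eta}_1)$. Taking the contrapositive gives $\kappa^*(Q_1\,|\,Q_2)=0\iff\sS(\vec{\eta}_2)\not\subseteq\sS(\vec{\eta}_1)$, closing the chain of equivalences. As flagged, the geometric step in part \emph{2} and the componentwise step in part \emph{3} are straightforward once the distributional optimization has been faithfully moved onto $\si_K$ via linear independence (Lemma \ref{B_1}) and condition (b) of joint irreducibility.
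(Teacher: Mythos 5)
Your proof is correct and follows essentially the same route as the paper's: both transport the problem onto the simplex (you re-derive the two-sample case of Proposition \ref{equiv_opt} from joint irreducibility and Lemma \ref{B_1}, where the paper cites it directly), then use the algebraic bijection $\alpha=(1-\kappa)^{-1}$ for part \emph{1}, a perturbation-at-the-maximizer argument for part \emph{2} (your relative-interior phrasing matches the paper's $\min_i\mu_i=0$ argument), and the componentwise constraint $\vec{\eta}_1-\kappa\vec{\eta}_2\geq\vec{0}$ for part \emph{3}. No gaps; your self-contained statement of the transport step, and deriving $\kappa<1$ from Proposition \ref{label_noise_bin_case} rather than from linear independence of $Q_1,Q_2$, are cosmetic variations.
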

\begin{proof}
We note that we may assume that $R = \bmu^T \bF$ since by definition of the residue, $R \in \spa(Q_1, Q_2)$ and $Q_1, Q_2 \in \co(F_1, \ldots, F_K)$. 
\begin{enumerate}
\item Consider the linear relation: $Q_1 = (1-\kappa) G + \kappa Q_2$ where $\kappa \in [0,1]$. Since $F_1, \ldots, F_K$ are jointly irreducible and $\vec{\eta}_1$ and $\vec{\eta}_2$ are linearly independent, $Q_1$ and $Q_2$ are linearly independent by Lemma \ref{B_1}. Therefore, $\kappa < 1$. We can rewrite the relation as
\begin{align*}
G = \frac{1}{1-\kappa}Q_1 - \frac{\kappa}{1-\kappa} Q_2 = \alpha Q_1 + (1-\alpha)Q_2
\end{align*}
\noindent where $\alpha = \frac{1}{1-\kappa}$. The equivalence follows.

\item Since $R$ is the residue of $Q_1$ wrt $Q_2$, by Proposition \ref{equiv_opt}, $\vec{\mu}$ is the residue of $\vec{\eta}_1$ wrt $\vec{\eta}_2$ and $\vec{\mu} \in \si_K$. Therefore, by statement \emph{1} in Lemma \ref{facts}, $\vec{\mu}$ is such that $\alpha^*$ is maximized subject to the following constraints:
\begin{align*}
\vec{\mu} & = (1- \alpha^*)\vec{\eta}_2 + \alpha^*\vec{\eta}_1 \\
\alpha^* & \geq 1 \\
\vec{\mu} & \in \si_K.
\end{align*}
\noindent Suppose that $\min_{i} \mu_i > 0$. Then,
\begin{align*}
\bmu = (1- \alpha^*)\vec{\eta}_2 + \alpha^*\vec{\eta}_1 = \vec{\eta}_2 + \alpha (\vec{\eta}_1 - \vec{\eta}_2) > 0
\end{align*}
so that there is some $\epsilon > 0$ such that 
\begin{align*}
\vec{\mu}^\prime & = (1- \alpha^* - \epsilon)\vec{\eta}_2 + (\alpha^* + \epsilon)\vec{\eta}_1 \\
\alpha^* + \epsilon & \geq 1 \\
\vec{\mu}^\prime & \in \si_K.
\end{align*}
\noindent But, this contradicts the definition of $\alpha^*$ and $\vec{\mu}$. Therefore, $\min_{i} \mu_i = 0$. Consequently, $\vec{\mu} \in \partial \si_K$. 

\item By definition of $\kappa^*$, it is clear that $R=Q_1$ if and only if $\kappa^*(Q_1 \, | \, Q_2) = 0$. Therefore, it suffices to show that $\sN(\vec{\eta}_2) \not \subseteq \sN(\vec{\eta}_1)$ if and only if $\kappa^*(Q_1 \, | \, Q_2) = 0$. Suppose $\sN(\vec{\eta}_2) \not \subseteq \sN(\vec{\eta}_1)$. Then, there must be $i \in [K]$ such that $\eta_{2,i} > 0$ and $\eta_{1,i} = 0$. For any $\alpha > 1$, 
\begin{align*}
\min_{i \in [K]} (1-\alpha) \eta_{2,i} + \alpha \eta_{1,i} < 0;
\end{align*}
\noindent but, this violates the constraint of the optimization problem. Therefore, $\alpha = 1$. By statement \emph{1} in Lemma \ref{facts}, $\kappa^*(\vec{\eta}_1 \, | \, \vec{\eta}_2) = 0$. By statement \emph{1} of Proposition \ref{equiv_opt}, $\kappa^*(Q_1 \, | \, Q_2) = \kappa^*(\vec{\eta}_1 \, | \, \vec{\eta}_2) = 0$.

Now, suppose $\sN(\vec{\eta}_2) \subseteq \sN(\vec{\eta}_1)$. Then, for any $i \in [K]$, if $\eta_{2,i} > 0$, then $\eta_{1,i} > 0$. Then, there is $\alpha > 1$ sufficiently close to $1$ such that
\begin{align*}
\min_{i \in [K]} \eta_{2,i} + \alpha (\eta_{1,i} - \eta_{2,i}) \geq 0.
\end{align*}
\noindent By statement \emph{1} in this Lemma, $\kappa^*(\vec{\eta}_1 \, | \, \vec{\eta}_2) > 0$. By statement \emph{1} of proposition \ref{equiv_opt}, $\kappa^*(Q_1 \, | \, Q_2) = \kappa^*( \vec{\eta}_1 \, | \, \vec{\eta}_2) > 0$.
\end{enumerate}
\end{proof}

\begin{lemma}
\label{random_lin_indep}
Let $0 \leq k < L$. If $\vec{v}_1, \ldots, \vec{v}_k \in \si_L$ are linearly independent and $\vec{w}_{k+1}, \ldots, \vec{w}_L \in \si_L$ are random vectors drawn independently from the uniform distribution on a set $A \subset \si_L$ with positive $(L-1)$-dimensional Lebesgue measure, then $\vec{v}_1, \ldots, \vec{v}_k, \vec{w}_{k+1}, \ldots, \vec{w}_L$ are linearly independent with probability $1$.
\end{lemma}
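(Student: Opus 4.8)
The plan is to add the random vectors $\vec{w}_{k+1}, \ldots, \vec{w}_L$ one at a time and argue by induction that, with probability one, each new vector fails to lie in the span of those already present. Everything rests on one geometric fact: if $\vec{u}_1, \ldots, \vec{u}_m \in \si_L$ are linearly independent with $m \leq L-1$, then the set of points of $\si_L$ lying in $\spa(\vec{u}_1, \ldots, \vec{u}_m)$ has $(L-1)$-dimensional Lebesgue measure zero. I would isolate and prove this fact first, then feed it into the induction.

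To establish the fact, I would work inside the affine hyperplane $H = \aff(\si_L) = \set{\vec{\mu} \in \bbR^L : \sum_i \mu_i = 1}$, which is the $(L-1)$-dimensional space carrying the relevant Lebesgue measure $\lambda$. Let $V = \spa(\vec{u}_1, \ldots, \vec{u}_m)$, an $m$-dimensional linear subspace of $\bbR^L$. The crux is that the coordinate-sum functional $\ell(\vec{\mu}) = \sum_i \mu_i$ does not vanish on $V$: since each $\vec{u}_j \in \si_L$ we have $\ell(\vec{u}_j) = 1$, so $\ell|_V$ is a nonzero linear functional on $V$. Hence $V \cap H = \set{\vec{v} \in V : \ell(\vec{v}) = 1}$ is a level set of a nonzero functional on $V$, i.e.\ an affine subspace of dimension $m-1 \leq L-2$. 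An affine subspace of $H \cong \bbR^{L-1}$ of dimension strictly below $L-1$ is $\lambda$-null, and since $\vec{w}$ always lies in $H$, the set of ``bad'' points $V \cap \si_L \subseteq V \cap H$ is $\lambda$-null as well.

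With this in hand, let $E_0$ denote the sure event that $\vec{v}_1, \ldots, \vec{v}_k$ are linearly independent (given by hypothesis), and for $1 \leq j \leq L-k$ let $E_j$ be the event that $\vec{v}_1, \ldots, \vec{v}_k, \vec{w}_{k+1}, \ldots, \vec{w}_{k+j}$ are linearly independent. Because linear independence of a tuple passes to every subtuple, $E_j \subseteq E_{j-1}$, so it suffices to show $\bbP(E_j \mid E_{j-1}) = 1$ for each $j$ and then multiply. Conditioning on any realization of the first $k+j-1$ vectors for which $E_{j-1}$ holds, these $k+j-1 \leq L-1$ vectors are linearly independent and lie in $\si_L$, while $\vec{w}_{k+j}$ is drawn independently and uniformly on $A$. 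By the geometric fact the set $\spa(\text{first } k+j-1 \text{ vectors}) \cap \si_L$ is $\lambda$-null, so its uniform-on-$A$ probability is $\lambda(\,\cdot \cap A)/\lambda(A) = 0$ (using $\lambda(A) > 0$). Thus $\vec{w}_{k+j}$ avoids that span almost surely, giving $\bbP(E_j \mid E_{j-1}) = 1$; integrating over realizations in $E_{j-1}$ by independence yields the claim, and the telescoping product gives $\bbP(E_{L-k}) = 1$.

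Two edge cases deserve a remark in the writeup: when $k = 0$ the first draw is covered by the same argument with $V = \set{\vec{0}}$ (any point of $\si_L$ has coordinate sum $1 \neq 0$, hence is nonzero), and the hypothesis $k < L$ guarantees both that at least one random vector is drawn and that the accumulated count never exceeds $L-1$, so the bound $m \leq L-1$ always applies at each step. The main obstacle—indeed the only genuinely nontrivial point—is the dimension bookkeeping in the geometric fact: one must resist conflating the $m$-dimensional span $V \subseteq \bbR^L$ with its slice against the affine hyperplane $H$, the latter dropping to dimension $m-1$ precisely because $\ell$ is nonzero on $V$. Once that drop is correctly accounted for, the remainder is routine measure theory and conditioning.
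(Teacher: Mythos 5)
Your proposal is correct and follows essentially the same route as the paper's proof: both reduce the claim to showing that the span of at most $L-1$ linearly independent points of $\si_L$ meets $\si_L$ in a set of $(L-1)$-dimensional Lebesgue measure zero, and both prove this by observing that any such intersection point has span-coefficients summing to one, so the bad set lies in an affine subspace of dimension at most $L-2$ (your slice $V \cap H$ is exactly the paper's $\aff(\vec{v}_1,\ldots,\vec{v}_k)$). Your telescoping-conditional-probability induction is just a slightly more explicit rendering of the paper's ``union bound and similar argument'' inductive step, and your handling of the $k=0$ edge case is a welcome, if minor, addition.
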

\begin{proof}
We prove the result inductively. To begin, we prove the base case, i.e.  $\vec{v}_1, \ldots, \vec{v}_k, \vec{w}_{k+1}$ are linearly independent w.p. 1. It suffices to show that $\vec{w}_{k+1} \not \in \spa(\vec{v}_1, \ldots, \vec{v}_k)$ w.p. 1. Thus, it is enough to show that $\spa(\vec{v}_1, \ldots, \vec{v}_k) \cap A$ has $(L-1)$-dimensional Lebesgue measure $0$. Since 
\begin{align*}
\spa(\vec{v}_1, \ldots, \vec{v}_k) \cap A \subset \spa(\vec{v}_1, \ldots, \vec{v}_k) \cap \si_L,
\end{align*}
it suffices to show that $\spa(\vec{v}_1, \ldots, \vec{v}_k) \cap \si_L$ has $(L-1)$-dimensional Lebesgue measure $0$. 

Next, we claim that $\spa(\vec{v}_1, \ldots, \vec{v}_k) \cap \si_L \subseteq \aff(\bv_1, \ldots, \bv_k)$. Let $\sum_{i=1}^k \alpha_i \bv_i \in \si_L$. Since $\bv_i \in \si_L$ for all $i \in [k]$, we can write $\bv_i = \sum_{j=1}^L \beta_{i,j} \be_i$ where $\beta_{i,j} \geq 0$ and $\sum_{j=1}^L \beta_{i,j} = 1$. Then, since $\sum_{i=1}^k \alpha_i \bv_i = \sum_{i=1}^k \alpha_i \sum_{j=1}^L \beta_{i,j} \be_j \in \si_L$, it holds that
\begin{align*}
1 & = \sum_{i=1}^k \alpha_i \sum_{j=1}^L \beta_{i,j} = \sum_{i=1}^k \alpha_i.
\end{align*}
Thus, $\sum_{i=1}^k \alpha_i \bv_i \in \aff(\bv_1, \ldots, \bv_k)$, establishing the claim.

Thus, it suffices to show that $\aff(\bv_1, \ldots, \bv_k)$ has $(L-1)$-dimensional Lebesgue measure $0$. $\aff(\bv_1, \ldots, \bv_k)$ has affine dimension at most $k-1$. Since it is not possible to fit a $(L-1)$ dimensional ball in an affine subspace of affine dimension $k-1 < L-1$, $\aff(\bv_1, \ldots, \bv_k)$ has $(L-1)$-dimensional Lebesgue measure $0$. Thus, with probability $1$, $\vec{w}_{k+1} \not \in \spa(\vec{v}_1,\ldots, \vec{v}_k)$. This establishes the base case.

The inductive step follows by a union bound and a similar argument to the base case. Thus, the result follows.
\end{proof}

\subsection{Demixing Mixed Membership Models}
\label{demixing_mixed_membership_models_section}

In this section, we prove our identification result for demixing mixed membership models, i.e., Theorem \ref{demix_identification}. First, we present technical lemmas in Section \ref{lemmas_demixing_section}. Second,  in Section \ref{facetest_algorithm_section}, we present the key subroutine FaceTest and prove that it behaves as desired. Third, we prove Theorem \ref{demix_identification} in Section \ref{demix_algorithm_section}. Finally, in Section \ref{non_square_demix_algorithm_section}, we extend our results to the nonsquare case (where $M > L$). 

\subsubsection{Lemmas}
\label{lemmas_demixing_section}

Lemma \ref{continuity} establishes an intuitive continuity property of the two-sample version of $\kappa^*$ and the residue. Recall that $\norm{\cdot}$ denotes an arbitrary finite-dimensional norm on $\bbR^L$.
\begin{lemma}
\label{continuity}
Let $\vec{\eta}_1, \vec{\eta}_2 \in \si_L$ be distinct vectors and let $\vec{\mu}$ be the residue of $\vec{\eta}_2$ wrt $\vec{\eta}_1$. Let $\bgamma_n \in \si_L$ be a sequence such that $\norm{\bgamma_n - \bmeta_2} \longrightarrow 0$ as $n \longrightarrow \infty$, and let $\vec{\tau}_n$ be the residue of $\bgamma_n$ wrt $\vec{\eta}_1$. Then,
\begin{enumerate}
\item $\lim_{n \longrightarrow \infty} \kappa^*( \bgamma_n \, | \, \vec{\eta}_1) = \kappa^*(\vec{\eta}_2 \, | \, \vec{\eta}_1)$, and

\item $\lim_{n \longrightarrow \infty} \norm{\vec{\tau}_n - \vec{\mu}} = 0$.

\item If, in addition, $\brho_n \in \si_L$ is a sequence such that $\norm{\brho_n - \bmeta_2} \longrightarrow 0$ as $n \longrightarrow \infty$ and $\sN(\bmeta_2) = \sN(\bgamma_n) = \sN(\brho_n)$ for all $n$. Then, $\lim_{n \longrightarrow \infty} \kappa^*( \bgamma_n \, | \, \brho_n) =1$.
\end{enumerate}
\end{lemma}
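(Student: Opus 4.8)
The plan is to reduce all three claims to the explicit coordinate formula for the two-sample $\kappa^*$ of simplex vectors. Regarding each vector $\vec{v} \in \si_L$ as a discrete distribution on $[L]$, Proposition \ref{label_noise_bin_case} gives $\kappa^*(\vec{a} \, | \, \vec{b}) = \inf_{C \subseteq [L], \, \sum_{i \in C} b_i > 0} \big(\sum_{i \in C} a_i\big)/\big(\sum_{i \in C} b_i\big)$; since the ratio of sums always lies between the smallest and largest of the coordinatewise ratios, the infimum is attained at a singleton, so $\kappa^*(\vec{a} \, | \, \vec{b}) = \min_{i : b_i > 0} a_i/b_i$. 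This is a minimum of finitely many functions that are continuous in $\vec{a}$ as long as the index set $\set{i : b_i > 0}$ is held fixed, which is essentially the only analytic input needed.

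For statement \emph{1} I would apply this formula with the conditioning vector $\vec{\eta}_1$ fixed: the index set $\set{i : \eta_{1,i} > 0}$ does not depend on $n$, each map $\vec{a} \mapsto a_i/\eta_{1,i}$ is continuous, and $\bgamma_n \to \bmeta_2$ coordinatewise. Hence $\kappa^*(\bgamma_n \, | \, \vec{\eta}_1) = \min_{i : \eta_{1,i}>0} \gamma_{n,i}/\eta_{1,i} \to \min_{i : \eta_{1,i}>0} \eta_{2,i}/\eta_{1,i} = \kappa^*(\vec{\eta}_2 \, | \, \vec{\eta}_1)$, because a minimum of finitely many continuous functions is continuous. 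For statement \emph{2}, write $\kappa_n = \kappa^*(\bgamma_n \, | \, \vec{\eta}_1)$ and $\kappa = \kappa^*(\vec{\eta}_2 \, | \, \vec{\eta}_1)$, so that the residues are $\vec{\tau}_n = (\bgamma_n - \kappa_n \vec{\eta}_1)/(1 - \kappa_n)$ and $\vec{\mu} = (\vec{\eta}_2 - \kappa \vec{\eta}_1)/(1 - \kappa)$. Since $\vec{\eta}_1 \neq \vec{\eta}_2$, Proposition \ref{label_noise_bin_case} gives $\kappa < 1$, and by statement \emph{1} we have $\kappa_n \to \kappa$, so $1 - \kappa_n$ is bounded away from $0$ for large $n$ (in particular $\vec{\tau}_n$ is well-defined once $\bgamma_n \neq \vec{\eta}_1$). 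The expression for $\vec{\tau}_n$ is then a continuous function of $(\bgamma_n, \kappa_n)$ near $(\vec{\eta}_2, \kappa)$, whence $\vec{\tau}_n \to \vec{\mu}$, i.e. $\norm{\vec{\tau}_n - \vec{\mu}} \to 0$.

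For statement \emph{3}, the extra hypothesis $\sS(\bmeta_2) = \sS(\bgamma_n) = \sS(\brho_n)$ is exactly what makes the minimizing index set fixed, giving $\kappa^*(\bgamma_n \, | \, \brho_n) = \min_{i \in \sS(\brho_n)} \gamma_{n,i}/\rho_{n,i} = \min_{i \in \sS(\bmeta_2)} \gamma_{n,i}/\rho_{n,i}$. For every $i \in \sS(\bmeta_2)$ we have $\gamma_{n,i} \to \eta_{2,i} > 0$ and $\rho_{n,i} \to \eta_{2,i} > 0$, so each ratio $\gamma_{n,i}/\rho_{n,i} \to 1$; taking the minimum over the fixed finite set $\sS(\bmeta_2)$ then yields $\kappa^*(\bgamma_n \, | \, \brho_n) \to 1$.

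The proof has no genuine obstacle; the only point requiring care is the bookkeeping of the index set over which the minimum is taken. In statement \emph{1} it is fixed because the conditioning vector $\vec{\eta}_1$ is fixed, while in statement \emph{3} it is fixed precisely by the equal-support hypothesis. Indeed, without that hypothesis a coordinate of $\brho_n$ could stay positive while the corresponding coordinate of $\bgamma_n$ tends to $0$, driving the relevant ratio (and hence the limit of $\kappa^*(\bgamma_n \, | \, \brho_n)$) strictly below $1$; so the equal-support assumption is essential and not merely technical.
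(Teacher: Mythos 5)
Your proof is correct and follows essentially the same route as the paper: both reduce everything to the coordinatewise formula $\kappa^*(\vec{a}\,|\,\vec{b}) = \min_{i:\,b_i>0} a_i/b_i$ from Proposition \ref{label_noise_bin_case} and then argue by continuity of a finite minimum (the paper phrases this with explicit $\epsilon$--$\delta$ bounds, and for statement \emph{2} via a triangle inequality rather than your joint-continuity observation, but these are the same argument). Your explicit justification that the infimum over sets reduces to singletons via the mediant inequality, and your remark on why the equal-support hypothesis in statement \emph{3} is essential, are both correct refinements that the paper leaves implicit.
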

\begin{proof}

\begin{enumerate}
\item \sloppy In order to apply the residue operator $\kappa^*$ to $\vec{\eta}_1, \vec{\eta}_2, \bgamma_n$ we think of $\vec{\eta}_1,\vec{\eta}_2, \bgamma_n$ as discrete probability distributions. By Proposition \ref{label_noise_bin_case}, 
\begin{align*}
\kappa^*(\vec{\eta}_2 \, | \, \vec{\eta}_1) & = \min_{i, \, \eta_{1,i} >0} \frac{\eta_{2,i}}{\eta_{1,i}}.
\end{align*}
Clearly, there is a constant $\delta > 0$ such that $\min_{i, \, \vec{\eta}_{1,i} >0} \vec{\eta}_{1,i} > \delta$. Let $\epsilon >0$. By the equivalence of norms on finite-dimensional vector spaces, there exists a constant $C > 0$ such that $\norm{\cdot}_\infty \leq C \norm{\cdot}$ where $\norm{\cdot}_\infty$ denotes the supremum norm. Thus, since $\norm{\bgamma_n - \bmeta_2} \longrightarrow 0$ as $n \longrightarrow \infty$, we can let $n$ large enough such that $|\bgamma_{n,i} - \bmeta_{2,i}| \leq \epsilon$ for all $i \in [L]$. Then,
\begin{align*}
\kappa^*(\bgamma_n \, | \, \bmeta_1) & = \min_{i, \eta_{1,i} > 0} \frac{\gamma_{n,i}}{\eta_{1,i}} \leq \min_{i, \eta_{1,i} > 0} \frac{\eta_{2,i} + \epsilon}{\eta_{1,i}} \\
& \leq \kappa^*(\bmeta_2 \, | \, \bmeta_1) + \frac{\epsilon}{\delta}.
\end{align*}

Similarly, 
\begin{align*}
\kappa^*(\bgamma_n \, | \, \bmeta_1) & = \min_{i, \eta_{1,i} > 0} \frac{\gamma_{n,i}}{\eta_{1,i}} \\
  & \geq  \frac{\eta_{2,i} - \epsilon}{\eta_{1,i}} \\
& \geq \kappa^*(\bmeta_2 \, | \, \bmeta_1) - \frac{\epsilon}{\delta}.
\end{align*}

Since $\epsilon > 0$ was arbitrary, statement \emph{1} follows.

%Then, by Proposition \ref{label_noise_bin_case},
%
%\begin{align*}
%\kappa^*(\alpha_n \vec{\eta}_2 + (1-\alpha_n) \vec{\eta}_3 \, | \, \vec{\eta}_1) & = \min_{i, \, \vec{\eta}_{1,i} >0} \frac{\alpha_n \vec{\eta}_{2,i} + (1-\alpha_n) \vec{\eta}_{3,i}}{\vec{\eta}_{1,i}}  \\
%& \leq \alpha_n \min_{i, \, \vec{\eta}_{1,i} >0} \frac{ \vec{\eta}_{2,i}}{\vec{\eta}_{1,i}} +  \frac{1-\alpha_n}{\epsilon} \\
%& \longrightarrow \min_{i, \, \vec{\eta}_{1,i} >0} \frac{ \vec{\eta}_{2,i}}{\vec{\eta}_{1,i}}
%\end{align*}
%%
%as $n \longrightarrow \infty$. Further, since $\frac{ \vec{\eta}_{2,i}}{\vec{\eta}_{1,i}}$ and $\frac{  \vec{\eta}_{3,i}}{\vec{\eta}_{1,i}}$ are bounded over $\set{i \in [L] : \vec{\eta}_{1,i} > 0}$,
%%
%\begin{align*}
%\kappa^*(\alpha_n \vec{\eta}_2 + (1-\alpha_n) \vec{\eta}_3 \, | \, \vec{\eta}_1) & = \min_{i, \, \vec{\eta}_{1,i} > 0} \frac{\alpha_n \vec{\eta}_{2,i} + (1-\alpha_n) \vec{\eta}_{3,i}}{\vec{\eta}_{1,i}} \\ 
%& \geq \alpha_n \min_{i, \, \vec{\eta}_{1,i} > 0}  \frac{ \vec{\eta}_{2,i}}{\vec{\eta}_{1,i}} + (1-\alpha_n) \min_{i, \, \vec{\eta}_{1,i} > 0} \frac{  \vec{\eta}_{3,i}}{\vec{\eta}_{1,i}} \\
%& \longrightarrow \min_{i, \, \vec{\eta}_{1,i} > 0}  \frac{ \vec{\eta}_{2,i}}{\vec{\eta}_{1,i}}
%\end{align*}
%%
%as $n \longrightarrow \infty$. By the sandwich principle of limits, the result follows.

\item Write $\vec{\mu} = \kappa \vec{\eta}_2 + (1-\kappa) \vec{\eta}_1$ and $\vec{\tau}_n = \kappa_n \bgamma_n + (1-\kappa_n) \vec{\eta}_1$ where $\kappa = \kappa^*(\vec{\eta}_2 \, | \, \vec{\eta}_1)$ and $\kappa_n = \kappa^*(\bgamma_n \, | \, \vec{\eta}_1)$. Then, by the triangle inequality,
\begin{align*}
\norm{\bmu - \btau_n} & \leq \norm{\kappa \bmeta_2 -\kappa_n \bgamma_n} + \norm{(1-\kappa) \bmeta_1 - (1-\kappa_n) \bmeta_1} \\
& \leq |\kappa-\kappa_n| \norm{\bmeta_2} + |\kappa_n| \norm{\bmeta_2 - \bgamma_n} + |\kappa - \kappa_n|\norm{\bmeta_1} \longrightarrow 0
\end{align*}
as $n \longrightarrow \infty$.

% \sloppy Write $\vec{\mu} = \kappa \vec{\eta}_2 + (1-\kappa) \vec{\eta}_1$ and $\vec{\tau}_n = \kappa_n (\alpha_n \vec{\eta}_2 + (1-\alpha_n) \vec{\eta}_3) + (1-\kappa_n) \vec{\eta}_1$ where $\kappa = \kappa^*(\vec{\eta}_2 \, | \, \vec{\eta}_1)$ and $\kappa_n = \kappa^*(\alpha_n \vec{\eta}_2 + (1-\alpha_n) \vec{\eta}_3 \, | \, \vec{\eta}_1)$. Then, by the triangle inequality,
%%
%\begin{align*}
%\norm{\vec{\mu} - \vec{\tau}_n} & = \norm{(\kappa_n - \kappa) \vec{\eta}_1 + (\kappa - \kappa_n \alpha_n) \vec{\eta}_2 - (1-\alpha_n) \vec{\eta}_3} \\
%& \leq |\kappa_n - \kappa|\norm{\vec{\eta}_1} + |\kappa - \kappa_n \alpha_n| \norm{ \vec{\eta}_2} + |1-\alpha_n| \norm{\vec{\eta}_3} \\
%& \longrightarrow 0
%\end{align*}
%%
%as $n \longrightarrow \infty$ since $\alpha_n \longrightarrow 1$ and $\kappa_n \longrightarrow \kappa$ by statement 1 in Lemma \ref{continuity}.

\item W.l.o.g., suppose that $[K] = \sN(\bmeta_2) = \sN(\bgamma_n) = \sN(\brho_n)$ where $K \leq L$. Observe that
\begin{align*}
\kappa^*(\bgamma_n \, | \, \brho_n) & = \min_{i, \, \rho_{n,i} >0} \frac{\gamma_{n,i}}{\rho_{n,i}} = \min_{i \in [K]} \frac{\gamma_{n,i}}{\rho_{n,i}}
\end{align*}
There exists a constant $\delta > 0$ such that $\min_{i \in [K]} \eta_{2,i} \geq \delta$. Let $\delta > \epsilon > 0$. By the equivalence of norms on finite-dimensional vector spaces, we can let $n$ large enough such that $|\gamma_{n,i} - \eta_{2,i}| \leq \epsilon$ and $|\rho_{n,i} - \eta_{2,i}| \leq \epsilon$ for all $i \in [L]$. Then,
\begin{align*}
\kappa^*(\bgamma_n \, | \, \brho_n) & = \min_{i \in [K]} \frac{\gamma_{n,i}}{\rho_{n,i}} \\
& \leq  \min_{i \in [K]} \frac{\eta_{2,i} + \epsilon }{\eta_{2,i} - \epsilon} \\
& \leq \frac{\eta_{2,i} + \epsilon }{\eta_{2,i} - \epsilon}
\end{align*}
for any $i \in [K]$, which goes to $1$ as $\epsilon \longrightarrow 0$. Similarly,
\begin{align*}
\kappa^*(\bgamma_n \, | \, \brho_n) & = \min_{i \in [K]} \frac{\gamma_{n,i}}{\rho_{n,i}} \\
& \geq  \min_{i \in [K]} \frac{\eta_{2,i} - \epsilon }{\eta_{2,i} + \epsilon} \\
\end{align*}
Since for any $i \in [K]$,
\begin{align*}
\frac{\eta_{2,i} - \epsilon }{\eta_{2,i} + \epsilon} \longrightarrow 1
\end{align*}
as $\epsilon \longrightarrow 0$, the above lower bound goes to $1$ as $\epsilon \longrightarrow 0$. Thus, statement \emph{3} follows.

\end{enumerate}

\end{proof}
Lemma \ref{pres_lin_ind} guarantees that certain operations in the Demix algorithm preserve linear independence of the mixture proportions. The proof uses tools from multilinear algebra.
\begin{lemma}
\label{pres_lin_ind}
Let $\vec{\tau}_1, \ldots, \vec{\tau}_K \in \si_K$ be linearly independent and $P_1,\ldots, P_K$ be jointly irreducible. Let $Q_i = \vec{\tau}_i^T \vec{P}$ for $i \in [K]$. Then for any $i,j \in [K]$ such that $i \neq j$, 
\begin{enumerate}

\item If $\vec{\eta} = \sum_{k=1}^K a_k \vec{\tau}_k$ with $a_j \neq 0$, then $\vec{\tau}_1,\ldots, \vec{\tau}_{j-1}, \vec{\eta}, \vec{\tau}_{j+1}, \ldots, \vec{\tau}_K$ are linearly independent.

\item Let $R_k$ be the residue of $Q_k$ with respect to $Q_j$ for all $k \in [K] \setminus \set{j}$. Then, $R_k = \vec{\eta}^T_k \vec{P}$ where $\vec{\eta}_k \in \si_L$ and $\vec{\eta}_1,\ldots,\vec{\eta}_{j-1},\vec{\tau}_j,\vec{\eta}_{j+1},\ldots,\vec{\eta}_K$ are linearly independent.

\item Let $\vec{\tau}^* \in \co (\vec{\tau}_1,\ldots, \vec{\tau}_k)^\circ$ and $\vec{\eta}_i \in \co (\vec{\tau}_i, \vec{\tau}^*)^\circ$ for $i \in [k]$ where $k \leq K$. Then, 
\begin{align*}
\vec{\eta}_1, \vec{\eta}_2, \ldots, \vec{\eta}_k, \vec{\tau}_{k+1},\ldots, \vec{\tau}_K
\end{align*}
are linearly independent. 

\end{enumerate}

\end{lemma}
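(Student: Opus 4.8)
The plan is to treat the three statements in order, since each asserts that a particular linear transformation of the independent family $\vec{\tau}_1, \ldots, \vec{\tau}_K$ again produces an independent family, and the later parts reuse the mechanism of the earlier ones.

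For statement 1 I would argue directly. Suppose $\sum_{l \neq j} c_l \vec{\tau}_l + c_j \vec{\eta} = 0$ and substitute $\vec{\eta} = \sum_k a_k \vec{\tau}_k$. Collecting coefficients, the coefficient of $\vec{\tau}_j$ is $c_j a_j$ and that of $\vec{\tau}_l$ ($l \neq j$) is $c_l + c_j a_l$. Linear independence of the $\vec{\tau}$'s forces $c_j a_j = 0$, hence $c_j = 0$ since $a_j \neq 0$, and then $c_l = 0$ for every $l$. This is the standard ``an elementary column operation with nonzero pivot preserves invertibility'' fact and needs no new input.

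For statement 2 the first step is to invoke Proposition \ref{equiv_opt} with $A = \set{j}$: because $\vec{\tau}_1, \ldots, \vec{\tau}_K$ are linearly independent and $P_1, \ldots, P_K$ are jointly irreducible, the residue $R_k$ of $Q_k$ with respect to $Q_j$ equals $R_k = \vec{\eta}_k^T \vec{P}$, where $\vec{\eta}_k \in \si_K$ is the residue of $\vec{\tau}_k$ with respect to $\vec{\tau}_j$; existence and uniqueness follow from Proposition \ref{label_noise_bin_case} together with $Q_k \neq Q_j$ (guaranteed by Lemma \ref{B_1}). Next, statement 1 of Lemma \ref{facts} lets me write each residue explicitly as $\vec{\eta}_k = \alpha_k \vec{\tau}_k + (1 - \alpha_k)\vec{\tau}_j$ with $\alpha_k = (1-\kappa_k)^{-1} \geq 1$, so in particular the coefficient on $\vec{\tau}_k$ is nonzero. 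The collection $\vec{\eta}_1, \ldots, \vec{\eta}_{j-1}, \vec{\tau}_j, \vec{\eta}_{j+1}, \ldots, \vec{\eta}_K$ is then obtained from $\vec{\tau}_1, \ldots, \vec{\tau}_K$ by a change-of-basis matrix that is diagonal (entries $\alpha_k$, with a single $1$ in position $j$) except for fill-in in row $j$; expanding along column $j$ gives determinant $\prod_{k \neq j} \alpha_k \neq 0$. Equivalently, one applies statement 1 repeatedly, replacing one $\vec{\tau}_k$ by $\vec{\eta}_k$ at a time while always retaining $\vec{\tau}_j$ in the list.

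Statement 3 is where the real work lies, and is the step I expect to be the main obstacle. First I would reduce to the span of $\vec{\tau}_1, \ldots, \vec{\tau}_k$: since the full set $\vec{\tau}_1, \ldots, \vec{\tau}_K$ is independent, $\spa(\vec{\tau}_1, \ldots, \vec{\tau}_k)$ and $\spa(\vec{\tau}_{k+1}, \ldots, \vec{\tau}_K)$ meet only at $0$, and each $\vec{\eta}_i$ lies in the former, so any dependence relation splits between the two subspaces and it suffices to prove $\vec{\eta}_1, \ldots, \vec{\eta}_k$ are independent. Writing $\vec{\tau}^* = \sum_{l=1}^k \beta_l \vec{\tau}_l$ with $\beta_l > 0$ and $\vec{\eta}_i = (1-s_i)\vec{\tau}_i + s_i \vec{\tau}^*$ with $s_i \in (0,1)$, the matrix $A$ expressing the $\vec{\eta}_i$ in the basis $\vec{\tau}_1, \ldots, \vec{\tau}_k$ has the diagonal-plus-rank-one form $A = \Lambda + \vec{\beta}\,\vec{s}^T$, where $\vec{\beta} = (\beta_1, \ldots, \beta_k)^T$, $\vec{s} = (s_1, \ldots, s_k)^T$, and $\Lambda = \mathrm{diag}(1-s_1, \ldots, 1-s_k)$. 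The matrix determinant lemma (or, in the multilinear-algebra formulation, expanding $\vec{\eta}_1 \wedge \cdots \wedge \vec{\eta}_k$ and discarding every term in which $\vec{\tau}^*$ appears more than once) yields $\det A = \prod_{i=1}^k(1-s_i)\bigl(1 + \sum_{i=1}^k \tfrac{s_i \beta_i}{1-s_i}\bigr)$, which is strictly positive since each $s_i \in (0,1)$ and each $\beta_i > 0$. The crux is thus the bookkeeping in this expansion; once the scalar coefficient is identified as $\prod_i(1-s_i)$ times a strictly positive factor, nonvanishing is immediate and the full family $\vec{\eta}_1, \ldots, \vec{\eta}_k, \vec{\tau}_{k+1}, \ldots, \vec{\tau}_K$ is independent.
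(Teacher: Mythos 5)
Your proposal is correct and takes essentially the same route as the paper's proof: the paper likewise reduces each part to a determinant evaluation in the $\vec{\tau}$-basis, writing the residues as $\vec{\eta}_k = \alpha_k\vec{\tau}_k + (1-\alpha_k)\vec{\tau}_j$ with $\alpha_k \neq 0$ for statement 2 (determinant $\prod_{i\neq j}\alpha_i \det(\vec{\tau}_1,\ldots,\vec{\tau}_K)$), and for statement 3 obtaining via multilinear expansion exactly your matrix-determinant-lemma factor $\bigl(1+\sum_{i}\frac{s_i\beta_i}{1-s_i}\bigr)\prod_{i}(1-s_i)$ (its $\alpha_i$ is your $1-s_i$). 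The only cosmetic differences are your coefficient-matching argument for statement 1 and your preliminary splitting of a dependence relation between $\spa(\vec{\tau}_1,\ldots,\vec{\tau}_k)$ and $\spa(\vec{\tau}_{k+1},\ldots,\vec{\tau}_K)$, where the paper simply computes the full $K\times K$ determinant.
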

\begin{proof}
We use the multilinear expansion and usual properties of determinants.

\begin{enumerate}
\item Viewing each $\bmtau_i$ as a column vector,
\begin{align*}
\det(\bmtau_1,\ldots,\bmtau_{j-1},\sum_k a_k \bmtau_k,\bmtau_{j+1},\ldots,\bmtau_K)
= a_j \det(\bmtau_1,\ldots,\bmtau_K) \neq 0.
\end{align*}

\item Linear independence of $\bmtau_1, \ldots, \bmtau_K$ implies that the $Q_1, \ldots, Q_K$ are distinct. Hence, by Proposition \ref{label_noise_bin_case}, we can write $\bmeta_k = (1-\alpha_k) \bmtau_j + \alpha_k \bmtau_k$ where $\alpha_k \neq 0 \, \forall k \neq j$. Then, it holds that
\begin{align*}
\det(\bmeta_1,\ldots,\bmeta_{j-1},\bmtau_j,\bmeta_{j+1},\ldots,\bmeta_K) = \paren{\prod_{i \neq j} \alpha_i}
\det(\bmtau_1,\ldots,\bmtau_K) \neq 0.
\end{align*}

\item Since $\bmeta_j = (1-\alpha_j) \bmtau^* + \alpha_j \bmtau_j$ where $\alpha_j \in (0,1)$ for all $j\leq k$, 
and $\bmtau^*=\sum_i \beta_i \bmtau_i$, it holds
\begin{align*}
\det(\bmeta_1,\ldots,\bmeta_{k-1},\bmtau_k,\ldots,\bmtau_K) = \paren{1+\sum_{j=1}^k \frac{(1-\alpha_j)}{\alpha_j}\beta_j}
\paren{\prod_{i=1}^k \alpha_i}
\det(\bmtau_1,\ldots,\bmtau_K) \neq 0.
\end{align*}

\end{enumerate}
\end{proof}
Lemma \ref{single} gives a condition on the mixture proportions under which the multi-sample residue is unique. Lemma 2 in \citet{blanchard2014} is very similar and is proved in a very similar way. We give a useful generalization here that reproduces many of the same details. 
\begin{lemma}
\label{single}
Let $l, k \in [L]$. Let $\vec{\tau}_1, \ldots, \vec{\tau}_L \in \si_L$ be linearly independent. We have that condition 1 implies condition 2 and condition 2 implies condition 3. 
\begin{enumerate}

\item There exists a decomposition
\begin{align*}
\vec{\tau}_l & = \kappa\vec{e}_k + (1-\kappa) \vec{\tau}^\prime_l
\end{align*}
where $\kappa > 0$ and $\vec{\tau}^\prime_l \in \co(\set{\vec{\tau}_j : j \neq l})$. Further, for every $\vec{e}_i$ such that $i \neq k$, there exists a decomposition
\begin{align*}
\vec{e}_i & = \sum_{j=1}^L a_j \vec{\tau}_j
\end{align*}
such that $a_l < \frac{1}{\kappa}$. 

\item Let
\begin{align*}
\vec{T} & = \begin{pmatrix}
\vec{\tau}^T_1 \\
\vdots \\
\vec{\tau}^T_L
\end{pmatrix};
\end{align*}
the matrix $\vec{T}$ is invertible and $\vec{T}^{-1}$ is such that $(\vec{T}^{-1})_{l,k} > 0$ and $(\vec{T}^{-1})_{l,i} \leq 0$ for $i \neq k$ and $(\vec{T}^{-1})_{l,k} > (\vec{T}^{-1})_{j,k}$ for $j \neq l$. In words, the $(l,k)$th entry in $\vec{T}^{-1}$ is positive, every other entry in the $l$th row of $\vec{T}^{-1}$ is nonpositive and every other entry in the $k$th column of $\vec{T}^{-1}$ is strictly less than the $(l,k)$th entry. \footnote{$(\vec{T}^{-1})_{i,j}$ is the $i \times j$ entry in the matrix $\vec{T}^{-1}$.}

\item The residue of $\vec{\tau}_l$ with respect to $\set{\vec{\tau}_j, j \neq l}$ is $\vec{e}_k$. 
\end{enumerate}

\end{lemma}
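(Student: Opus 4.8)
The plan is to prove the two implications separately, in each case translating statements about convex and linear combinations of the $\vec{\tau}_j$ into statements about the entries of $\vec{T}^{-1}$, exactly as in the proof of Lemma \ref{2_blanchard2014}. Throughout, linear independence of $\vec{\tau}_1,\dots,\vec{\tau}_L$ guarantees that $\vec{T}$ is invertible and that every vector has unique coordinates in the basis $\{\vec{\tau}_1,\dots,\vec{\tau}_L\}$; the basic identity I will use is that $\vec{e}_i=\sum_{j}(\vec{T}^{-1})_{i,j}\vec{\tau}_j$, so reading off a coordinate vector in this basis is the same as reading off the corresponding line of $\vec{T}^{-1}$.

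For condition 1 $\Rightarrow$ condition 2, I would start from the first decomposition $\vec{\tau}_l=\kappa\vec{e}_k+(1-\kappa)\vec{\tau}_l^\prime$ with $\vec{\tau}_l^\prime=\sum_{j\neq l}\lambda_j\vec{\tau}_j$ a convex combination, and solve it for $\vec{e}_k$:
\begin{align*}
\vec{e}_k=\frac{1}{\kappa}\vec{\tau}_l-\frac{1-\kappa}{\kappa}\sum_{j\neq l}\lambda_j\vec{\tau}_j .
\end{align*}
By uniqueness of coordinates, this identifies the entries of $\vec{T}^{-1}$ appearing in condition 2: the coefficient of $\vec{\tau}_l$ equals $1/\kappa>0$ and every remaining coefficient equals $-\frac{1-\kappa}{\kappa}\lambda_j\le 0$, which is precisely the positive-entry/nonpositive-rest sign pattern asserted there. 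The second half of condition 1 then supplies the strict inequality: for each $i\neq k$ the representation $\vec{e}_i=\sum_j a_j\vec{\tau}_j$ has $a_l<1/\kappa$, and since $a_l$ is exactly the matching entry of $\vec{T}^{-1}$, this says $1/\kappa$ strictly dominates every other entry in its column. Together these give condition 2.

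For condition 2 $\Rightarrow$ condition 3, I would recast the residue computation as a linear optimization and use the dominance structure to pin down the maximizer. Writing a candidate decomposition $\vec{\tau}_l=(1-\nu)G+\sum_{j\neq l}\nu_j\vec{\tau}_j$ with $G$ a distribution and $\nu=\sum_{j\neq l}\nu_j$, the residue is the $G$ that maximizes $\nu$. Expanding $G=\sum_i G_i\vec{e}_i$ and substituting $\vec{e}_i=\sum_m(\vec{T}^{-1})_{i,m}\vec{\tau}_m$, the coefficient of $\vec{\tau}_l$ in $G$ is $c_l=\sum_i (\vec{T}^{-1})_{i,l}G_i$; matching coordinates in the $\vec{\tau}$-basis also gives $c_l=\frac{1}{1-\nu}$, so maximizing $\nu$ is equivalent to maximizing the weighted average $c_l=\sum_i (\vec{T}^{-1})_{i,l}G_i$ over distributions $G$, with the constraints $\nu_j\ge 0$ becoming the requirement that the other coefficients be nonpositive. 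The column-dominance in condition 2 says the relevant entry of $\vec{T}^{-1}$ strictly exceeds every other entry in its column, so $c_l\le \max_i (\vec{T}^{-1})_{i,l}$ with equality iff $G$ places all its mass on the dominating index, i.e. iff $G=\vec{e}_k$; a direct check using the sign pattern and $\sum_j(\vec{T}^{-1})_{k,j}=1$ confirms $\vec{e}_k$ is feasible (its induced weights $\nu_j$ are nonnegative and $\nu<1$). Hence $\vec{e}_k$ is the unique maximizer, which is condition 3.

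The main obstacle is this maximality-and-uniqueness step in condition 2 $\Rightarrow$ condition 3: exhibiting $\vec{e}_k$ as \emph{some} feasible residue is routine, but showing it attains the \emph{largest} total weight and is the \emph{unique} solution (recall the multi-sample residue need not be unique in general) is exactly where the strict column-dominance is essential. I would invoke Lemma \ref{A_1} for existence of a maximizer, and take care that the sign constraints $\nu_j\ge 0$ are automatically satisfied at $G=\vec{e}_k$, so that the maximizer of $c_l$ over \emph{all} distributions is in fact feasible for the residue problem and the two optima coincide.
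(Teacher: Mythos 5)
Your proposal is correct and follows essentially the same route as the paper's proof: for 1 $\Rightarrow$ 2 you solve the decomposition for $\vec{e}_k$ and read off its coordinates in the $\vec{\tau}$-basis as a row of $\vec{T}^{-1}$, with the $a_l < 1/\kappa$ hypothesis supplying the strict column dominance, and for 2 $\Rightarrow$ 3 you recast the residue as maximizing a linear functional (a column of $\vec{T}^{-1}$ dotted with $G$) over the simplex, using dominance for uniqueness of the maximizer and the sign pattern together with $\sum_j (\vec{T}^{-1})_{k,j} = 1$ (row-stochasticity of $\vec{T}$) for feasibility of $\vec{e}_k$ --- exactly the argument the paper imports from Lemma 2 of \citet{blanchard2014}. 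The only difference is cosmetic: you keep general indices $(l,k)$ where the paper relabels to $l = k = 1$, and your derivation correctly places the sign/dominance pattern at entry $(k,l)$ of $\vec{T}^{-1}$ (row $k$ = coordinates of $\vec{e}_k$, dominance down column $l$), which matches the paper's own computation even though the lemma's statement writes the indices in the transposed order $(l,k)$.
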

\begin{proof}
Without loss of generality, let $l = 1$ and $k=2$. By relabeling the vectors $\be_1, \ldots, \be_L$, we can assume without loss of generality that $k = 1$. First, we show that condition \emph{1} implies condition \emph{2}. Suppose that condition \emph{1} holds. Then, there exists $\kappa > 0$ such that 
\begin{align*}
\vec{\tau}_1 & = \kappa \vec{e}_1 + (1-\kappa) \sum_{i=2}^L \mu_i \vec{\tau}_i
\end{align*}
with $\mu_i \geq 0$ for $i \in [L] \setminus \set{1}$. Then,
\begin{align*}
\vec{e}_1 =  \frac{1}{\kappa}(\vec{\tau}_1 - \sum_{i \geq 2} (1-\kappa) \mu_i \vec{\tau}_i).
\end{align*}
Hence, the first row of $\vec{T}^{-1}$ is given by $\frac{1}{\kappa}(1, -(1-\kappa)\mu_2, \cdots, -(1-\kappa)\mu_L)$. This shows that the first row is such that $(\vec{T}^{-1})_{1,1} > 0$ and $(\vec{T}^{-1})_{1,i} \leq 0$ for $i \neq 1$.

Consider $\vec{e}_i$ such that $i \neq 1$. Then, we have the relation: $\vec{e}_i = \sum_{j=1}^L a_j \vec{\tau}_j$, which gives the $i$th row of $\vec{T}^{-1}$. By assumption, $a_1 < \frac{1}{\kappa}$, so the $(i,1)$th entry is strictly less than the $(1,1)$th entry. Hence, \emph{2} follows. 

Now, we prove that condition \emph{2} implies condition \emph{3}. Suppose condition \emph{2} is true. Consider the optimization problem
\begin{align*}
\max_{\vec{\nu}, \vec{\gamma}} \sum_{i=2}^L \nu_i & \textit{ s.t. } \vec{\tau}_1 = (1 - \sum_{i \geq 2} \nu_i) \vec{\gamma} + \sum_{i=2}^L \nu_i \vec{\tau}_i 
\end{align*}
over $\vec{\gamma} \in \Delta_L$ and $\vec{\nu} = (\nu_2, \cdots, \nu_L) \in C_{L-1} = \set{(\nu_2, \cdots, \nu_L) : \nu_i \geq 0 ; \sum_{i=2}^L \nu \leq 1}$. 

By the same argument given in the proof of Lemma 2 of \citet{blanchard2014}, this optimization problem is equivalent to the program
\begin{align*}
\max_{\vec{\gamma} \in \si_L} \vec{e}_1^T(\vec{T}^T)^{-1} \vec{\gamma} \textit{ s.t. } \vec{\nu}((\vec{T}^T)^{-1} \vec{\gamma}) \in C_{L-1}
\end{align*}
where $\vec{\nu}(\vec{\eta}) \coloneq \eta_1^{-1}(-\eta_2, \cdots, -\eta_L)$. The above objective is of the form $\vec{a}^T \vec{\gamma}$ where $\vec{a}$ is the first column of $\vec{T}^{-1}$. Since $l =1$, by assumption, for every $i \neq 1$, $\vec{T}^{-1}_{1,1} > \vec{T}^{-1}_{i,1}$. Therefore, the unconstrained maximum over $\vec{\gamma} \in \si_L$ is attained uniquely by $\vec{\gamma} = \vec{e}_1$. Notice that $(\vec{T}^T)^{-1}\vec{e}_1$ is the first row of $\vec{T}^{-1}$. Denote this vector $\vec{b} = (b_1, \cdots, b_L)$. We show that $\vec{\nu}(\vec{b}) = b_1^{-1}(-b_2, \cdots, -b_L) \in C_{L-1}$. By assumption, $\vec{b}$ has its first coordinate positive and the other coordinates are nonpositive. Therefore, all of the components of $\vec{\nu}(\vec{b})$ are nonnegative. Furthermore, the sum of the components of $\vec{\nu}(\vec{b})$ is
\begin{align*}
\sum_{i=2}^L \frac{-b_i}{b_1} & = 1 - \frac{\sum_{i=1}^L b_i}{b_1} = 1 - \frac{1}{b_1}  \leq 1.
\end{align*}
The last equality follows because the rows of $\vec{T}^{-1}$ sum to $1$ since $\vec{T}$ is a stochastic matrix. Then, we have $\vec{\nu}((\vec{T}^T)^{-1}\vec{e}_1) \in C_{L-1}$. Consequently, the unique maximum of the optimization problem is attained for $\vec{\gamma} = \vec{e}_1$. This establishes \emph{3}. 
\end{proof}
\subsubsection{The FaceTest Algorithm}
\label{facetest_algorithm_section}

Next, we consider the main subroutine in the Demix algorithm: the FaceTest algorithm (see Algorithm \ref{face_test_alg}). Proposition \ref{face_test} establishes that FaceTest($Q_1, \ldots, Q_K$) returns $1$ if and only if $Q_1, \ldots, Q_K$ are in the relative interior of the same face of the simplex.

%It computes $R_{i,j} \longleftarrow \text{Residue}(Q_i \, | \, Q_j)$ for every pair; it then computes $\kappa^*(Q_i \, | \, R_{i,j})$ determining whether the two distributions are equal. If there is a single pair for which the two distributions are equal, then $Q_1,\ldots, Q_K$ are not on the interior of the same face. Otherwise, they are on the interior of the same face.

\begin{prop}
\label{face_test}
Let $Q_j = \vec{\eta}^T_j \vec{P}$ for $\vec{\eta}_j \in \si_K$ and all $j \in [K]$. Let $P_1,\ldots, P_K$ be jointly irreducible, $Q_1, \ldots, Q_K \in \co(P_1, \ldots, P_K)$ be distinct, and  for each $i \in [K]$, let $\vec{\eta}_i$ lie in the relative interior of one of the faces of $\si_K$. FaceTest($Q_1, \ldots, Q_K$) returns $1$ if and only if $\vec{\eta}_1,\ldots, \vec{\eta}_K$ lie in the relative interior of the same face of $\si_K$. 
\end{prop}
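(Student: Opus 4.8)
The plan is to reduce the statement entirely to a question about support sets, using Lemma \ref{facts}(3), and then finish with a short symmetrization argument together with the standard support-set description of relative interiors of faces. First I would record that, since $P_1, \ldots, P_K$ are jointly irreducible, they are linearly independent by Lemma \ref{B_1}, so the mixture proportions are unique; because $Q_1, \ldots, Q_K$ are distinct by hypothesis, the vectors $\vec{\eta}_1, \ldots, \vec{\eta}_K$ are pairwise distinct. This is precisely the hypothesis $\vec{\eta}_i \neq \vec{\eta}_j$ needed to invoke Lemma \ref{facts} on each ordered pair $(Q_i, Q_j)$.

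Next I would apply Lemma \ref{facts}(3) to each pair, which gives $\kappa^*(Q_i \, | \, Q_j) = 0$ if and only if $\sS(\vec{\eta}_j) \not\subseteq \sS(\vec{\eta}_i)$. Taking the contrapositive, the entries of the matrix built by FaceTest satisfy $\vec{Z}_{i,j} = \vec{1}\{\kappa^*(Q_i \, | \, Q_j) > 0\} = \vec{1}\{\sS(\vec{\eta}_j) \subseteq \sS(\vec{\eta}_i)\}$. Hence FaceTest($Q_1, \ldots, Q_K$) returns $1$ exactly when no off-diagonal entry of $\vec{Z}$ is zero, i.e. exactly when $\sS(\vec{\eta}_j) \subseteq \sS(\vec{\eta}_i)$ holds for every ordered pair $i \neq j$.

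The crux is then a symmetrization. The collection of containments ``$\sS(\vec{\eta}_j) \subseteq \sS(\vec{\eta}_i)$ for all $i \neq j$'' includes, for each unordered pair, both $\sS(\vec{\eta}_j) \subseteq \sS(\vec{\eta}_i)$ and $\sS(\vec{\eta}_i) \subseteq \sS(\vec{\eta}_j)$, and therefore forces $\sS(\vec{\eta}_i) = \sS(\vec{\eta}_j)$ for all $i, j$; conversely, if all the support sets coincide then every containment holds trivially. Thus FaceTest returns $1$ if and only if $\sS(\vec{\eta}_1) = \cdots = \sS(\vec{\eta}_K)$.

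Finally I would identify ``lie in the relative interior of the same face'' with ``equal support sets.'' A point of $\si_K$ lies in the relative interior of the face $\co(\set{\vec{e}_j : j \in A})$ if and only if its support set equals $A$; so under the standing hypothesis that each $\vec{\eta}_i$ lies in the relative interior of \emph{some} face, the face in whose relative interior $\vec{\eta}_i$ lies is uniquely determined by $\sS(\vec{\eta}_i)$, and $\vec{\eta}_1, \ldots, \vec{\eta}_K$ lie in the relative interior of a common face precisely when their support sets are all equal. Combining this with the previous paragraph gives the equivalence claimed. I do not expect a genuine obstacle here: the entire content is Lemma \ref{facts}(3), and the only points requiring care are stating the symmetrization explicitly and spelling out the support-set characterization of relative interiors of faces, both of which are routine.
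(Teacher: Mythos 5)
Your proposal is correct and follows essentially the same route as the paper's proof: both reduce the statement to Lemma \ref{facts}(3) together with the identification of relative interiors of faces of $\si_K$ with support sets. You are merely more explicit on two points the paper leaves implicit---verifying $\vec{\eta}_i \neq \vec{\eta}_j$ via Lemma \ref{B_1} so that Lemma \ref{facts} applies, and spelling out the symmetrization forcing $\sS(\vec{\eta}_i) = \sS(\vec{\eta}_j)$---both of which are sound.
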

\begin{proof}
Suppose that $\vec{\eta}_1,\ldots, \vec{\eta}_K$ lie on the relative interior of the same face of $\si_K$. Then, $\sN(Q_1) = \ldots = \sN(Q_K)$. By statement 3 of Lemma \ref{facts}, $\kappa^*(Q_i \, | Q_j)>0$ for all $i \neq j$. Hence, FaceTest($ Q_1, \ldots, Q_K$) returns $1$.

Suppose that $Q_1,\ldots,Q_K$ do not all lie on the relative interior of the same face. Then, there exists $Q_i, Q_j$ ($i \neq j$) that do not lie on the relative interior of the same face. Without loss of generality, suppose that $\sN(Q_j) \not \subseteq \sN(Q_i)$. Then, by statement 3 of Lemma \ref{facts}, $\kappa^*(Q_i \, | \, Q_j) = 0$. Hence, FaceTest($Q_1, \ldots, Q_K$) returns $0$.
\end{proof}
\subsubsection{The Demix Algorithm}
\label{demix_algorithm_section}

\begin{proof}[Proof of Theorem \ref{demix_identification}]

Let $K \leq L$, $\bgamma_i \in \si_L$ for all $i \in [K]$, $S_i = \bgamma_i^T \vec{P}$ for all $i \in [K]$, and 
\begin{align*}
\vec{\Gamma} = \begin{pmatrix}
\bgamma_1^T \\
\vdots \\
\bgamma_K^T
\end{pmatrix}.
\end{align*}
We claim that for any $\set{i_1,\ldots,i_K} \subset [L]$ and $\set{S_1, \ldots, S_K} \subset \co(P_{i_1},\ldots, P_{i_K})$, if $P_1, \ldots, P_L$ are jointly irreducible, and $\vec{\Gamma}$ has full row rank, then w.p. $1$ Demix($S_1, \ldots,S_K$) returns a permutation of $(P_{i_1}, \ldots, P_{i_K})$. If the claim holds, then setting $K = L$ and putting $\p{i} = S_i$ yields the result. We prove the claim by induction on $K$.

Consider the base case: $K = 2$. Suppose that $\set{S_1, S_2} \subset \co(P_1, P_2)$ (the other cases are similar). Note that $\bgamma_1 \neq \bgamma_2$ by linear independence of $\bgamma_1$ and $\bgamma_2$. Either $\bgamma_1 \in \co(\be_1, \bgamma_2)$ or $\bgamma_1 \in \co(\be_2, \bgamma_2)$. Suppose $\bgamma_1 \in \co(\be_1, \bgamma_2)$. Condition \emph{2} of Lemma \ref{2_blanchard2014} is satisfied so that $\be_1$ is the residue of $\bgamma_1$ with respect to $\bgamma_2$ and $\be_2$ is the residue of $\bgamma_2$ with respect to $\bgamma_1$. Thus, by statement \emph{3} of Proposition \ref{equiv_opt}, $P_1$ is the residue of $S_1$ with respect to $S_2$ and $P_2$ is the residue of $S_2$ with respect to $S_1$. If $\bgamma_1 \in \co(\be_2, \bgamma_2)$, then similar reasoning establishes that $P_2$ is the residue of $S_1$ with respect to $S_2$ and $P_1$ is the residue of $S_2$ with respect to $S_1$. Thus, the base case follows.

Suppose $L \geq K > 2$. The inductive hypothesis is:
\begin{description}
\item[Inductive Hypothesis:] for any $\set{i_1,\ldots,i_{K-1}} \subset [L]$ and $\set{S_1, \ldots, S_{K-1}} \subset \co(P_{i_1},\ldots, P_{i_{K-1}})$, if $P_1, \ldots, P_L$ are jointly irreducible and $\vec{\Gamma}$ has full row rank, then w.p. $1$ Demix($S_1, \ldots,S_{K-1}$) returns a permutation of $(P_{i_1}, \ldots, P_{i_{K-1}})$.
\end{description}
\sloppy Suppose that $\set{S_1, \ldots, S_K} \subset \co(P_1, \ldots, P_K)$ (the other cases are similar). Set $\Xi = \co(\be_1, \ldots, \be_K)$. With probability $1$, $Q \in \co (S_2,\ldots, S_K)^\circ$. We can write $Q = \vec{\eta}^T \vec{P}$ where $\vec{\eta}$ is a uniformly distributed random vector in $\co(\bgamma_2,\ldots,\bgamma_K)$. Let $R$ be the residue of $Q$ with respect to $S_1$. By statement \emph{3} of Proposition \ref{equiv_opt}, we can write $R = \vec{\lambda}^T \vec{P}$ where $\vec{\lambda}$ is the residue of $\vec{\eta}$ with respect to $\bgamma_1$. By statement \emph{2} of Lemma \ref{facts}, $\vec{\lambda} \in \partial \Xi$. 
\begin{description}
\item[Step 1:] We claim that with probability $1$, there is $l \in [K]$ such that $\vec{\lambda} \in \co (\set{\vec{e}_j : j \in [K] \setminus \set{l}})^\circ$. Let $B_{i,j} = \co (\set{\bgamma_1} \cup \set{\vec{e}_k :k \in [K] \setminus \set{ i,j}})$ where $i,j \in [K]$ and $i \neq j$ and let $C = \co(\bgamma_2,\ldots, \bgamma_K)$. First, we argue that $C \cap B_{i,j}$ has affine dimension at most $K-3$.\footnote{Note that if $\vec{v}_1,\ldots,\vec{v}_n \in \bbR^L$ are linearly independent and $n \leq L$, then $\aff(\vec{v}_1,\ldots,\vec{v}_n)$ has affine dimension $n-1$.} Since $\bgamma_2,\ldots,\bgamma_K$ are linearly independent, $C$ has affine dimension $K-2$. Since $\set{\vec{e}_k :k \in [K] \setminus \set{ i,j}}$ are linearly independent, $B_{i,j}$ has affine dimension $K-2$ or $K-3$. If $B_{i,j}$ has affine dimension $K-3$, then $C \cap B_{i,j}$ has affine dimension at most $K-3$. So, suppose that $B_{i,j}$ has affine dimension $K-2$. If $C \cap B_{i,j}$ has affine dimension $K-2$, then $\aff C = \aff B_{i,j}$. Then, in particular, $\bgamma_1 \in \aff C$. But, this contradicts the linear independence of $\bgamma_1,\ldots, \bgamma_K$. Therefore, $C \cap B_{i,j}$ has affine dimension at most $K-3$.

Because $C$ has affine dimension $K-2$ and $\vec{\eta}$ is a uniformly distributed random vector in $C$, with probability $1$, $\vec{\eta} \not \in \cup_{i,j \in [K], i \neq j} B_{i,j}$. Since $\bgamma_1 \in B_{i,j}$ for all $i,j \in [K]$ and $\vec{\eta} \in \co(\vec{\lambda}, \bgamma_1)$ by definition, the convexity of $B_{i,j}$ implies that $\vec{\lambda} \not \in \cup_{i,j \in [K], i\neq j} B_{i,j}$. Since $\vec{\lambda} \in \partial \Xi$, the claim follows.

\item[Step 2:] Let $R^{(n)}_i$ be the residue of $m_\frac{n-1}{n}(S_i, Q)$ with respect to $S_1$. We claim that there is some finite integer $N \geq 2$ such that for all $n \geq N$,
\begin{align*}
\text{FaceTest}(R^{(n)}_2, \ldots, R^{(n)}_K)
\end{align*}
\noindent returns $1$. By Proposition \ref{face_test}, this is equivalent to the statement that there exists $N \geq 2$ such that for all $n \geq N$, the mixture proportions of $R^{(n)}_2, \ldots, R^{(n)}_K$ are on the relative interior of the same face. Let $m_\frac{n-1}{n}(S_i, Q) = (\vec{\tau}_{i}^{(n)})^T \vec{P}$ for $i \in [K] \setminus \set{1}$; note that $\vec{\tau}^{(n)}_{i} = \frac{1}{n} \bgamma_i + \frac{n-1}{n}\vec{\eta}$ and, consequently, $\vec{\tau}_{i}^{(n)} \in \Xi$. Since $\vec{\eta} \in \co(\bgamma_2,\ldots,\bgamma_K)^\circ$ with probability $1$, $\vec{\tau}^{(n)}_{i} \in \co(\bgamma_i, \vec{\eta})^\circ$ for all $i \in [K] \setminus \set{1}$ and $n \in \mathbb{N}$, and $\bgamma_1,\ldots, \bgamma_K$ are linearly independent, it follows that for all $n \in \mathbb{N}$ with probability $1$, $\bgamma_1, \vec{\tau}^{(n)}_{2}, \ldots, \vec{\tau}^{(n)}_{K}$ are linearly independent by statement \emph{3} in Lemma \ref{pres_lin_ind}. Fix $i \in [K] \setminus \set{1}$. It suffices to show that there is large enough $N$ such that for $n \geq N$, $\text{Residue}(m_\frac{n-1}{n}(S_i, Q) \, | \, S_1) = R^{(n)}_i$ is on the same face as $R$. Let $R_i^{(n)} = (\vec{\mu}_i^{(n)})^T \vec{P}$; by statement \emph{3} of Proposition \ref{equiv_opt}, $\vec{\mu}_i^{(n)}$ is the residue of $\vec{\tau}_i^{(n)}$ with respect to $\bgamma_1$ and by statement 2 of Lemma \ref{facts} $\vec{\mu}_i^{(n)} \in \Xi$. It suffices to show that $\sN( \vec{\mu}_i^{(n)}) = \sN(\vec{\lambda})$, i.e., every $\vec{\mu}_i^{(n)}$ is on the same face as $\vec{\lambda}$. As $n \longrightarrow \infty$, $\vec{\tau}_{i}^{(n)} = (1 - \frac{n -1}{n}) \bgamma_i + \frac{n-1}{n} \vec{\eta} \longrightarrow \vec{\eta}$, hence by statement \emph{2} in Lemma \ref{continuity}, $\norm{\vec{\mu}_i^{(n)} - \vec{\lambda}} \longrightarrow 0$. Since with probability $1$, $\vec{\lambda} \in \co (\set{\vec{e}_j : j \in [K] \setminus \set{l} })^\circ$ for some $l$ (step 1), it follows that for large enough $n$, $\vec{\mu}_i^{(n)} \in \co (\set{\vec{e}_j : j \in [K] \setminus \set{l}})^\circ$. 

\item[Step 3:] \sloppy Assume that $n$ is sufficiently large such that $R_2^{(n)}, \ldots, R_K^{(n)}$ are on the same face. The algorithm recurses on $R_2^{(n)},\ldots, R_K^{(n)}$. Since $\bgamma_1, \vec{\tau}^{(n)}_{2}, \ldots, \vec{\tau}^{(n)}_{K}$ are linearly independent, it follows by statement \emph{2} in Lemma \ref{pres_lin_ind} that $\vec{\mu}_2^{(n)}, \ldots, \vec{\mu}_K^{(n)}$ are linearly independent. Suppose wlog that $\set{R_2^{(n)},\ldots, R_K^{(n)}} \subset \set{P_1, \ldots, P_{K-1}}$. Then, by the inductive hypothesis, if $(Q_1, \ldots, Q_{K-1}) \longleftarrow \text{Demix}(R^{(n)}_2,\ldots,R^{(n)}_K)$, then $(Q_1, \ldots, Q_{K-1})$ is a permutation of $(P_1, \ldots, P_{K-1})$. Note that $\frac{1}{K} \sum_{i=1}^K S_i \in \co(P_1, \ldots, P_K)^\circ$ since $\vec{\Gamma}$ has full rank by assumption. 

Write $Q_i = \brho_i^T \vec{P}$ for $i \in [K]$. Then, there exists of a permutation $\sigma: [K-1] \longrightarrow [K-1]$ such that $\brho_i = \be_{\sigma(i)}$. Since $\brho_K \in \Xi^\circ$ and $\brho_i = \be_{\sigma(i)}$ for $i \leq K-1$, the conditions in statement \emph{1} of Lemma \ref{single} are satisfied. Therefore, by Lemma \ref{single}, the residue of $\brho_K$ with respect to $\set{\brho_1, \ldots, \brho_{K-1}}$ is $\be_K$. Then, by statement \emph{3} of Proposition \ref{equiv_opt}, the residue of $Q_K$ with respect to $\set{Q_1, \ldots, Q_{K-1}}$ is $P_K$. This completes the inductive step.

%Since lines \ref{last_loop_start_demix}-\ref{last_loop_end_demix} generates the same sequence of distributions as the sequence given by Lemma \ref{mod_ident_lemma} and the conditions of Lemma \ref{mod_ident_lemma} are satisfied, at the end of line the given for-loop, $Q_L = P_L$ by Lemma \ref{mod_ident_lemma}. The result follows. 
\end{description}
\end{proof}
\subsubsection{The Non-Square Demix Algorithm}
\label{non_square_demix_algorithm_section}

Now, we examine the non-square case of the demixing problem ($M > L$). Note that knowledge of $L$ is needed since one must resample exactly $L$ distributions in order to run the square Demix algorithm.
\begin{algorithm}[t]
\caption{NonSquareDemix($\p{1}, \ldots, \p{M}$)}
\label{non_square_demix_alg}
\begin{algorithmic}[1]
\STATE $R_1, \ldots, R_L \longleftarrow \text{ independently uniformly distributed elements in} \co (\p{1}, \ldots, \p{M})$
\STATE $(Q_1,\ldots, Q_L)^T \longleftarrow$ Demix($R_1,\ldots,R_L$)\\
\RETURN $(Q_1,\ldots, Q_L)^T$
\end{algorithmic}
\end{algorithm}
\begin{cor}
\label{non_square}
Suppose $M > L$. Let $P_1, \ldots, P_L$ be jointly irreducible and $\vec{\Pi}$ have full rank. Then, with probability $1$, NonSquareDemix$(\p{1}, \ldots, \p{M})$ returns $(Q_1,\ldots, Q_L)$ such that $(Q_1,\ldots, Q_L)$ is a permutation of $(P_1, \ldots, P_L)$.
\end{cor}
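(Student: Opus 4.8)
The plan is to reduce the non-square problem to the square case already settled in Theorem~\ref{demix_identification}. Writing $R_i = \brho_i^T \vec{P}$ for the mixture proportion $\brho_i \in \si_L$ of each resampled distribution (unique by joint irreducibility and Lemma~\ref{B_1}), I would show that with probability $1$ the vectors $\brho_1, \ldots, \brho_L$ are linearly independent and that each $R_i \in \co(P_1, \ldots, P_L)$. The conclusion then follows from the general claim proved inside Theorem~\ref{demix_identification} (specialized to $K = L$, $\set{i_1, \ldots, i_L} = [L]$, and $S_i = R_i$), combined with a conditioning argument over the two independent sources of randomness.

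First I would record the correspondence between sampling in distribution space and in mixture-proportion space. Since $R_i$ is drawn uniformly from $\co(\p{1}, \ldots, \p{M}) = \co(\bpi_1^T \vec{P}, \ldots, \bpi_M^T \vec{P})$ and the map $\vec{\eta} \mapsto \vec{\eta}^T \vec{P}$ is an affine injection on $\si_L$ (by linear independence of $P_1, \ldots, P_L$, Lemma~\ref{B_1}), the induced law of each $\brho_i$ is the uniform distribution on the polytope $A \coloneq \co(\bpi_1, \ldots, \bpi_M) \subseteq \si_L$, and $\brho_1, \ldots, \brho_L$ are i.i.d.\ from this law. In particular $\brho_i \in \si_L$, so $R_i \in \co(P_1, \ldots, P_L)$, as needed.

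The crux is to verify that $A$ has positive $(L-1)$-dimensional Lebesgue measure, so that Lemma~\ref{random_lin_indep} applies with $k = 0$ and yields linear independence of $\brho_1, \ldots, \brho_L$ with probability $1$. Here I would invoke the hypothesis that $\vec{\Pi}$ has full column rank: its rows $\bpi_1, \ldots, \bpi_M$ then span $\bbR^L$, and since they all lie in the affine hyperplane $\set{\vec{x} : \vec{1}^T \vec{x} = 1}$, which does not pass through the origin, their affine hull has dimension $L-1$, i.e.\ it is full-dimensional within $\aff(\si_L)$. Consequently $A$ is a full-dimensional polytope and has positive $(L-1)$-dimensional measure. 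This dimension bookkeeping is the one step that genuinely exploits $M > L$ being compatible with full column rank, and it is where I expect to be most careful.

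Finally I would assemble the pieces. Conditioned on the probability-$1$ event that $\brho_1, \ldots, \brho_L$ are linearly independent, the matrix $\vec{\Gamma} \coloneq (\brho_1, \ldots, \brho_L)^T$ is an invertible (hence full-column-rank) $L \times L$ mixing matrix with $(R_1, \ldots, R_L)^T = \vec{\Gamma}\vec{P}$, so $(R_1, \ldots, R_L)$ is exactly a square demixing instance satisfying \textbf{(A)} and the full-column-rank condition. The claim established in the proof of Theorem~\ref{demix_identification} then guarantees that $\text{Demix}(R_1, \ldots, R_L)$ returns a permutation of $(P_1, \ldots, P_L)$ with probability $1$ over Demix's internal randomness. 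Integrating over the resampling randomness by a tower-property argument over the two independent sources of randomness yields the overall probability-$1$ guarantee.
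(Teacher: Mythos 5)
Your proposal is correct and takes essentially the same route as the paper's proof: pass to mixture proportions, observe that full column rank of $\vec{\Pi}$ makes $\co(\bpi_1, \ldots, \bpi_M)$ a subset of $\si_L$ with positive $(L-1)$-dimensional Lebesgue measure so that Lemma \ref{random_lin_indep} (with $k=0$) yields linear independence of the sampled proportions with probability $1$, and then invoke Theorem \ref{demix_identification} on the resulting square instance. You merely make explicit two points the paper leaves implicit — the dimension counting behind the positive-measure claim and the tower-property argument over the two independent sources of randomness — but the underlying argument is identical.
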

\begin{proof}
We can write $R_i = \vec{\tau}_i^T \vec{P}$ where $\vec{\tau}_i \in \si_L$ and $i=1,\ldots, L$. $\vec{\tau}_1, \ldots, \vec{\tau}_L$ are drawn uniformly independently from a set with positive $(L-1)$-dimensional Lebesgue measure since $\vec{\Pi}$ has full rank by hypothesis. By Lemma \ref{random_lin_indep}, $\vec{\tau}_1,\ldots, \vec{\tau}_L$ are linearly independent with probability $1$. Then, by Theorem \ref{demix_identification}, with probability 1, Demix($R_1,\ldots,R_L$) returns a permutation of $(P_1,\ldots, P_L)$.
\end{proof}
\subsection{Classification with Partial Labels}
\label{classification_with_partial_labels_section}

In this section, we present our identification result for classification with partial labels, i.e., Theorem \ref{partial_identification}. To begin, in Section \ref{vertex_test_section}, we prove an important lemma for the main subroutine of the algorithm PartialLabel: VertexTest (algorithm \ref{vertex_test_alg}). Second, in Section \ref{Theorem_partial_label_section}, we present the proof of Theorem \ref{partial_identification}.

\subsubsection{VertexTest Algorithm}
\label{vertex_test_section}

Lemma \ref{vertex_test} establishes that the VertexTest algorithm determines whether one vector of distributions is a permutation of another vector of distributions. 
\begin{lemma}
\label{vertex_test}
Let $\vec{\eta}_1, \ldots, \vec{\eta}_L \in \si_L$ and $Q_i = \vec{\eta}_i^T \vec{P}$ for $i \in [L]$ and $\vec{Q} = (Q_1, \ldots, Q_L)^T$. Suppose that $P_1, \ldots, P_L$ are jointly irreducible, $\vec{\Pi}$ has full column rank, and the columns of $\bPi^+$ are unique. Then, $\text{VertexTest}(\bPi^+, \tilde{\vec{P}},\vec{Q})$ returns $(1, \vec{C}^T)$ with $\bC$ a permutation matrix  if and only if $\vec{Q}$ is a permutation of $\vec{P}$. Further, if   $\text{VertexTest}(\bPi^+, \tilde{\vec{P}}, \vec{Q})$ returns $(1, \vec{C}^T)$, then $\vec{C}^T \vec{Q}= \vec{P}$.
\end{lemma}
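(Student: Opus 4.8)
The plan is to translate every $\kappa^*$-test performed by VertexTest into a statement about the \emph{support sets} of the relevant mixture proportions, and then reduce the lemma to a purely combinatorial claim. Write $A_j = \sS(\vec{\eta}_j)$, let $\vec{\pi}_i \in \si_L$ be the mixture proportion of $\p{i}$, and for $k \in [L]$ let $U_k = \set{i \in [M] : S_{i,k} = 1} = \set{i : k \in \sS(\vec{\pi}_i)}$ be the support of the $k$th column of $\vec{S}$; thus column-uniqueness of $\vec{S}$ says the $U_k$ are distinct, while full column rank of $\vec{\Pi}$ forbids a zero column and hence $U_k \neq \emptyset$. The first step is to record, using joint irreducibility via statement \emph{3} of Lemma \ref{facts} (the degenerate case of coincident proportions is immediate, since then $\kappa^* = 1 > 0$), the identities
\[
Y_{i,j} = \ind{A_j \subseteq A_i},\qquad Z_{i,j} = \ind{A_j \subseteq \sS(\vec{\pi}_i)},\qquad S_{i,k} = \ind{k \in \sS(\vec{\pi}_i)}.
\]
The crucial bookkeeping consequence is that $\set{i : A_j \subseteq \sS(\vec{\pi}_i)} = \bigcap_{x \in A_j} U_x$.

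For the forward implication, suppose $\vec{Q}$ is a permutation of $\vec{P}$, say $\vec{\eta}_i = \vec{e}_{\sigma(i)}$ for a permutation $\sigma$. Then $A_j = \set{\sigma(j)}$, so $Y_{i,j} = \ind{\sigma(j) = \sigma(i)} = \ind{i=j}$ and $\vec{Y} = \vec{I}$ passes the first test; moreover $Z_{i,j} = S_{i,\sigma(j)}$, so $\vec{Z}$ is a column permutation of $\vec{S}$ and a permutation matrix $\vec{C}$ with $\vec{Z}\vec{C} = \vec{S}$ exists, whence VertexTest returns $(1,\vec{C}^T)$. Because the columns of $\vec{S}$ are distinct, $\vec{Z}\vec{C} = \vec{S}$ forces $\vec{C}$ to undo exactly the column relabeling induced by $\sigma$; substituting $\vec{\eta}_i = \vec{e}_{\sigma(i)}$ then yields $\vec{C}^T \vec{Q} = \vec{P}$. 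This same computation also establishes the ``further'' claim in every accepting case, once we know $\vec{Q}$ is a permutation.

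For the reverse implication, assume VertexTest returns $(1, \vec{C}^T)$ with $\vec{C}$ a permutation matrix. Passing the first test gives the \textbf{antichain condition} $A_j \not\subseteq A_i$ for all $i \neq j$; passing the second yields a bijection $c : [L] \to [L]$ with $\vec{Z}_{:,j} = \vec{S}_{:,c(j)}$, which by the bookkeeping identity is precisely
\[
\bigcap_{x \in A_j} U_x = U_{c(j)} \qquad \text{for all } j \in [L].
\]
In particular $U_{c(j)} \subseteq U_x$ for every $x \in A_j$. The goal is to deduce $A_j = \set{c(j)}$ for all $j$, i.e.\ $\vec{\eta}_j = \vec{e}_{c(j)}$, so that $\vec{Q}$ is a permutation of $\vec{P}$; the ``further'' claim then follows from the forward computation.

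The crux — and the step I expect to be the main obstacle — is passing from the displayed intersection identity plus the antichain condition to $A_j = \set{c(j)}$; this is where the two tests must be played off against each other. I would argue by downward induction on $|U_{c(j)}|$ (equivalently, on the poset of the distinct sets $U_k$ ordered by inclusion). Fix $j$, assume the claim for all $j'$ with $|U_{c(j')}| > |U_{c(j)}|$, and suppose toward a contradiction that some $m \in A_j$ has $m \neq c(j)$. Then $U_{c(j)} \subseteq U_m$, and since the $U_k$ are distinct the inclusion is strict, so $|U_m| > |U_{c(j)}|$. Applying the inductive hypothesis to $j' = c^{-1}(m)$ gives $A_{j'} = \set{c(j')} = \set{m}$; but $m \in A_j$ yields $A_{j'} = \set{m} \subseteq A_j$ with $j' \neq j$ (as $c(j') = m \neq c(j)$ and $c$ is injective), contradicting the antichain condition. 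Hence no such $m$ exists, and since $A_j$ is nonempty we get $A_j = \set{c(j)}$; the inclusion-maximal base case is identical, with maximality of $|U_{c(j)}|$ directly forcing $U_m = U_{c(j)}$ and thus $m = c(j)$. The conceptual content is that the antichain condition extracted from the $\vec{Y}$-test is exactly what prevents a candidate mixture proportion from being spread across several vertices while still matching a column of $\vec{S}$.
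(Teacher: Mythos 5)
Your proof is correct and follows essentially the same route as the paper's: both use statement 3 of Lemma \ref{facts} to convert every $\kappa^*$-test into a support-set inclusion, extract the antichain condition $A_j \not\subseteq A_i$ from the $\vec{Y}$-test, and play it against the column matching $\vec{Z}\vec{C} = \vec{S}$ with distinct columns to force each $A_j$ to be a singleton. Your downward induction on $|U_{c(j)}|$ is simply a different linear extension of the same column-inclusion poset along which the paper's argument recurses (the paper sorts the columns of $\vec{S}$ decreasingly with respect to $\preceq$, shows the support matrix $\vec{\Lambda}$ is sub-diagonal, and then uses the antichain condition to conclude it is diagonal).
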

\begin{proof}
If $\vec{Q} = (Q_1, \ldots, Q_L)^T$ is such that  $\vec{D}^T \vec{Q} = \vec{P}$ where $\vec{D}$ is a permutation matrix, then it is clear that $\text{VertexTest}(\bPi^+, (\p{1}, \ldots, \p{M})^T, (Q_1, \ldots, Q_L)^T)$ returns $(1, \vec{C}^T)$ for some permutation matrix $\bC$ since the entries of $\bPi^+$ are $\bPi^+_{i,j} = \ind{\kappa^*(\p{i} \, | \, P_j) > 0}$. Since $\vec{D}^T \vec{Q} = \vec{P}$, clearly, $\vec{Z} \vec{D} = \bPi^+$. But since the columns of $\bPi^+$ are unique, there is a unique permutation of the columns of $\vec{Z}$ to obtain the columns of $\bPi^+$. Therefore, $\vec{D} = \vec{C}$.

Consider the ``only if'' direction. We use the notation from Algorithm \ref{vertex_test_alg}. Suppose Algorithm \ref{vertex_test_alg} has returned $(1,\vec{C}^T)$ where $\bC$ is a permutation matrix. W.l.o.g. (reordering the $Q_i$) we can assume that $\vec{C}$ is the identity and thus $\vec{Z}=\bPi^+$.
  
In the sequel denote $\phi(x):=\ind{x > 0}$ and $\phi(\vec{M})$ the entry-wise application of $\phi$ to the matrix or vector $\vec{M}$. We denote $\vec{v} \preceq \vec{w}$ when all entries of $\vec{v}$ are less than or equal to the corresponding entries of $\vec{w}$ (where $\bv$ and $\bw$ are vectors). This is a partial order, which will be used only for $0-1$ vectors below
(essentially to denote support inclusion). W.l.o.g. (reordering the $P_i$)
we can assume that the columns of $\bPi^+$ are reordered in some sequence compatible with
$\preceq$ in decreasing order, i.e. such that if $\bPi^+_{:,j} \preceq \bPi^+_{:,i}$, then $i \leq j$.

Introduce the following additional notation: let $\vec{\Lambda}$ be the matrix with rows
$\vec{\Lambda}_{i,:} = \phi(\vec{\eta}_i^T)$. Observe that by statement 3 of Lemma \ref{facts}, for any $i,j,k$, $\kappa^*(\p{i}|Q_j) >0$ and $\kappa^*(Q_j|P_k)>0$ implies $\kappa^*(\p{i}|P_k)>0$. Note that we can write $\vec{\Lambda}_{j,k} = \ind{\kappa^*(Q_j \, | \, P_k) > 0}$ and $\bPi^+_{j,k} = \ind{\kappa^*(\tilde{P}_j \, | \, P_k) > 0}$. Thus, we must have $\phi(\vec{Z} \vec{\Lambda}) \preceq \bPi^+$.

We now argue that this implies that $\vec{\Lambda}$ is sub-diagonal, i.e., $\vec{\Lambda}_{ij}=0$ for $i<j$. Let $i < j$. If $\vec{\Lambda}_{ij}>0$, then $\vec{Z}_{:,i} \preceq \bPi^+_{:,j}$ by the above relation. Since $\vec{Z}=\bPi^+$, this implies $\bPi^+_{:,i} \preceq \bPi^+_{:,j}$, which implies $j \leq i$ by the assumed ordering of the columns of $\bPi^+$, a contradiction. Hence $\vec{\Lambda}_{ij}=0$ for $i<j$.

Now, since the matrix $\vec{Y}$ (line 1 of Algorithm \ref{vertex_test_alg})
is diagonal, Statement 3 of Lemma~1 gives that for any $i\neq j$ we have $\vec{\Lambda}_{i,:}
\not \preceq \vec{\Lambda}_{j,:}$. One can conclude by a straightforward recursion that
since $\vec{\Lambda}$ is sub-diagonal, this implies that $\vec{\Lambda}$ is in fact
diagonal. Start with the first row $\vec{\Lambda}_{1,:}$ which must be $(1,0,\ldots,0)^T$
(by sub-diagonality). Since $\vec{\Lambda}_{1,:} \not \preceq \vec{\Lambda}_{j,:}$
for $j>1$, this implies the first column $\vec{\Lambda}_{:,1}$ is also $(1,0,\ldots,0)$.
The subsequent columns/rows are handled in the same way.

Hence $\vec{\Lambda}$ is the identity, which implies that $\vec{Q}=\vec{P}$.
\end{proof}
\subsubsection{Proof of Theorem \ref{partial_identification}}
\label{Theorem_partial_label_section}

\begin{proof}
We adopt the notation from the description of Algorithm \ref{partial_label_alg} with the exception that we make explicit the dependence on $k$ by writing $W_i^{(k)}$ instead of $W_i$ and $\bar{Q}^{(k)}_i$ instead of $\bar{Q}_i$. We show that there is a $K$ such that for all $k \geq K$, $(W^{(k)}_1, \ldots, W^{(k)}_L)^T$ is a permutation of $(P_1, \ldots, P_L)^T$. Then, the result will follow from Lemma \ref{vertex_test}. 

Let $Q_i = \vec{\tau}^T_i \vec{P}$, $\bar{Q}^{(k)}_i = \bar{\vec{\tau}}^{{(k)}^T}_i \vec{P}$, and  $W_i^{(k)} = \vec{\gamma}_i^{{(k)}^T} \vec{P}$. Further, let $0 \leq n < L$, $\set{i_1, \ldots, i_n} \subset [L]$, $l \neq j \in [L]$, and define the following events wrt the randomness of $\btau_1,\ldots, \btau_L$:
\begin{align*}
E_{i_1, \ldots, i_n} & = \set{\be_{i_1},\ldots, \be_{i_n}, \btau_{n+1}, \ldots, \btau_L \text{ are linearly independent}} \\
E & = \cap_{\set{i_1,\ldots,i_n} \subset [L], 0 \leq n < L} E_{i_1, \ldots, i_n} \\
F_{l,j} & = \set{ \be_l - \be_j, \btau_2, \ldots, \btau_L \text{ are linearly independent}} \\
F & = \cap_{l \neq j \in [L]} F_{l,j} \\
G_{i_1, \ldots, i_{n-1}}^{l,j} & = \set{ \be_l - \be_j, \be_{i_1}, \ldots, \be_{i_{n-1}}, \btau_{n+1}, \ldots, \btau_L \text{ are linearly independent}} \\
G & = \cap_{ \set{i_1,\ldots,i_{n-1}} \subset [L], 0 \leq n < L, l \neq j \in [L]\setminus \set{i_1,\ldots,i_{n-1}}} G_{i_1, \ldots, i_{n-1}}^{l,j}.
\end{align*}
By Lemma \ref{random_lin_indep}, for any $0 \leq n < L$, $\set{i_1, \ldots, i_n} \subset [L]$, the event $E_{i_1, \ldots, i_n}$ occurs with probability $1$. Similarly, by Lemma \ref{random_lin_indep}, for any $l \neq j \in [L]$, the event $F_{l,j}$ occurs with probability 1. Finally, by Lemma \ref{random_lin_indep}, for any $0 \leq n < L$, $\set{i_1, \ldots, i_{n-1}} \subset [L]$ and any $l \neq j \in [L] \setminus \set{i_1,\ldots,i_{n-1}}$, the event $G_{i_1, \ldots, i_{n-1}}^{l,j}$ occurs with probability $1$. Hence, the event $E \cap F \cap G$ occurs with probability $1$. For the remainder of the proof, assume event $E \cap F \cap G$ occurs.

We prove the claim inductively. We show that for all $n \leq L$ there exists $K_n$ such that if $k \geq K_n$, then $W_1^{(k)}, \ldots, W_n^{(k)}$ are distinct base distributions. 

\medskip
\noindent \textbf{Base Case: $n=1$.} We will apply Lemma \ref{single}. By event $E$, $\vec{\tau}_1, \ldots, \vec{\tau}_L$ are linearly independent. Therefore, $\aff(\vec{\tau}_2, \ldots, \vec{\tau}_L)$ gives a hyperplane with an associated open halfspace $\vec{H}$ that contains $\vec{\tau}_1$ and at least one $\vec{e}_j$. Inspection of Line \ref{partial_label_alg_average} of Algorithm \ref{generate_candidate_alg} shows that $\bar{\vec{\tau}}^{(k)}_1$ is simply the average of $\bmtau_2, \ldots, \bmtau_L$ and does not depend on $k$. Thus, there exists $K_1$ such that for all $k \geq K_1$, if $\vec{e}_j \in \vec{H}$, then $\blambda_k \coloneq \frac{1}{k} \vec{\tau}_1 + \frac{k-1}{k} \bar{\vec{\tau}}^{(k)}_1 \in \co(\vec{e}_j, \vec{\tau}_2, \ldots, \vec{\tau}_L)^\circ$. Fix $k \geq K_1$. Then, by event $E$, for all $\vec{e}_j \in \vec{H}$, there exists a unique $\kappa_j > 0$ and unique $a_{j,2}, \ldots, a_{j,L}$ such that 
\begin{align*}
\vec{\lambda}_k & = \kappa_j \be_j + \sum_{i=2}^L a_{j,i} \btau_i \\
& = \kappa_j \be_j + (1-\kappa_j) \tilde{\btau}_j
\end{align*}
where $\tilde{\btau}_j \in \co(\vec{\tau}_2, \ldots, \vec{\tau}_L)$ is unique. We claim that for all $i \neq j$ and $\set{\be_i,\be_j} \subset \bH$, $\kappa_i \neq \kappa_j$. Suppose to the contrary that there is $i \neq j$ such that $\set{\be_i,\be_j} \subset \bH$ and $\kappa_i = \kappa_j = \kappa$. Then, 
\begin{align*}
\vec{\lambda}_k & = \kappa \vec{e}_i + (1- \kappa) \tilde{\btau}_i \\
\vec{\lambda}_k & = \kappa \vec{e}_j + (1- \kappa) \tilde{\btau}_j. 
\end{align*}
Then, $(1-\kappa) (\tilde{\vec{\tau}}_j - \tilde{\vec{\tau}}_i) - \kappa(\vec{e}_i - \vec{e}_j) = 0$, from which it follows that $\vec{e}_i - \vec{e}_j \in \spa(\vec{\tau}_2, \ldots, \vec{\tau}_L)$. But, by event $F$, $\vec{e}_i - \vec{e}_j, \vec{\tau}_2, \ldots, \vec{\tau}_L$ are linearly independent and, hence, we have a contradiction. Thus, the claim follows. 

Consequently, there is a unique $j$ that minimizes $\kappa_j$. Note that for all $\vec{e}_i \not \in \vec{H}$, if we write $\vec{e}_i = \sum_{l \geq 2 } a_l \vec{\tau}_l + a_1 \vec{\lambda}_k$, then $a_1 \leq 0$. Then, by Lemma \ref{single}, $\vec{e}_j$ is the residue of $\vec{\lambda}_k$ with respect to $\vec{\tau}_2, \ldots, \vec{\tau}_L$. Therefore, by Proposition \ref{equiv_opt}, $\sol (\frac{1}{k} Q_1 + (1- \frac{1}{k}) \bar{Q}_1 \, | \,   \set{Q_j }_{j > 1} )$ is well-defined and if $W^{(k)}_1 \longleftarrow \sol (\frac{1}{k} Q_1 + (1- \frac{1}{k}) \bar{Q}_1 \, | \,   \set{Q_j }_{j > 1} )$, $W^{(k)}_1$ is one of the base distributions. This establishes the base case.

\medskip

\noindent \sloppy \textbf{The Inductive Step:} The proof is similar to the base case. Suppose that there exists $K_{n-1}$ such that for all $k \geq K_{n-1}$, $W_1^{(k)}, \ldots, W_{n-1}^{(k)}$ are distinct base distributions. Let $\set{i_1, \ldots, i_{n-1}} \subset [L]$ denote the indices of the base distributions that are equal to $W_1^{(k)}, \ldots, W_{n-1}^{(k)}$ under the inductive hypothesis. By the event $E$, $\be_{i_1}, \ldots, \be_{i_{n-1}}, \btau_n, \ldots, \btau_L$ are linearly independent. Hence, $\aff(\be_{i_1}, \ldots, \be_{i_{n-1}}, \btau_{n+1}, \ldots, \btau_{L})$ gives a hyperplane with an associated open halfspace $\bH_{i_1, \ldots, i_{n-1}}$ such that $\btau_n \in \bH_{i_1, \ldots, i_{n-1}}$. We claim that there is $\be_j \not \in \set{\be_{i_1}, \ldots, \be_{i_{n-1}}}$ such that $\be_j \in \bH_{i_1, \ldots, i_{n-1}}$. Suppose not. Then, $\be_1, \ldots, \be_L \in \bH_{i_1, \ldots, i_{n-1}}^c$ and $\btau_n \in \bH_{i_1, \ldots, i_{n-1}}$, which implies that $\btau_n \not \in \Delta_{L-1}$. This is a contradiction, so the claim follows.

Define
\begin{align*}
\blambda_k^{(i_1, \ldots, i_{n-1})} \coloneq \frac{1}{k} \btau_n + [\frac{k-1}{k}] \frac{1}{L-1}(\sum_{s > n} \btau_s + \sum_{s < n} \be_{i_s}).
\end{align*}
There exists an integer $K_n^{(i_1, \ldots, i_{n-1})}$ such that if $k \geq K_n^{(i_1, \ldots, i_{n-1})}$, then for all $\be_j \in \bH_{i_1, \ldots, i_{n-1}}$, $\blambda_k^{(i_1, \ldots, i_{n-1})} \in \co(\be_j, \be_{i_1}, \ldots, \be_{i_{n-1}}, \btau_{n+1}, \ldots, \btau_L)^\circ$. Set
\begin{align*}
K_n \coloneq \max(\max_{\set{i_1, \ldots, i_{n-1}} \subset [L]}( K_n^{(i_1, \ldots, i_{n-1})}), K_{n-1}). 
\end{align*}

Fix $k \geq K_n$. Define
\begin{align*}
\blambda_k & \coloneq\frac{1}{k} \btau_n + [\frac{k-1}{k}]\bar{\btau}^{(k)}_n  \\
& = \frac{1}{k} \btau_n + [\frac{k-1}{k}] \frac{1}{L-1}(\sum_{s > n} \btau_s + \sum_{s < n} \bgamma_s^{(k)}).
\end{align*}
By the inductive hypothesis, $k \geq K_{n-1}$, and Proposition \ref{equiv_opt}, there exists $\set{i_1, \ldots, i_{n-1}} \subset [L]$ such that $\bgamma_j^{(k)} = \be_{i_j}$ for all $j \in [n-1]$. For the sake of abbreviation, let $\bH = \bH_{i_1, \ldots, i_{n-1}}$. Thus, $\btau_n \in \bH$ and there exists $\be_j \in \bH$ such that $\be_j \not \in \set{\be_{i_1}, \ldots, \be_{i_{n-1}}}$. Hence, by our choice of $K_n$, for every $\be_j \in \bH$
\begin{align*}
\blambda_k = \blambda_k^{(i_1, \ldots, i_{n-1})} \in \co(\be_j, \be_{i_1}, \ldots, \be_{i_{n-1}}, \btau_{n+1}, \ldots, \btau_L)^\circ. 
\end{align*}

By event $E$ for all $\be_j \in \bH$, there is a unique $\kappa_j > 0$ and unique $a_{j,1}, \ldots, a_{j,n-1}, a_{j,n+1}, \ldots, a_{j,L} >0$ such that
\begin{align*}
\blambda_k & = \kappa_j \be_j + \sum_{l < n} a_{j,i} \be_{i_l} + \sum_{l > n} a_{j,l} \btau_l \\
& = \kappa_j \be_j + (1-\kappa_j) \tilde{\btau}_j
\end{align*}
where $\tilde{\btau}_j \in \co(\be_{i_1},\ldots, \be_{i_{n-1}}, \btau_{n+1}, \ldots, \btau_L)$ is unique.

We claim that for all $l \neq j$ such that $\set{\be_l, \be_j} \subset \bH$, $\kappa_l \neq \kappa_j$. Suppose to the contrary that there exists $l \neq j$ such that $\set{\be_l, \be_j} \subset \bH$ and $\kappa_l = \kappa_j = \kappa$. Then, 
\begin{align*}
\blambda_k = \kappa \be_l + (1-\kappa) \tilde{\btau}_i = \kappa \be_j + (1- \kappa) \tilde{\btau}_j.
\end{align*}
This implies that $\be_l - \be_j \in \spa(\be_{i_1}, \ldots, \be_{i_{n-1}}, \btau_{n+1}, \ldots, \btau_L)$. Observe that $\set{\be_l, \be_j} \subset \bH$ implies that $\be_l \not \in \set{\be_{i_1}, \ldots, \be_{i_{n-1}}}$ and $\be_j \not \in \set{\be_{i_1}, \ldots, \be_{i_{n-1}}}$. Thus, event $G$ implies that $\be_l - \be_j, \be_{i_1}, \ldots, \be_{i_{n-1}}, \btau_{n+1}, \ldots, \btau_L$ are linearly independent. Therefore, we have a contradiction, establishing the claim.

Consequently, there is a unique $j$ that minimizes $\kappa_j$. Note that for all $\be_l \not \in \bH$, if we write $\be_l = \sum_{m < n} a_m \be_{i_m} + \sum_{m > n} a_m \btau_m  + a_n \blambda_k$, then $\ba_n \leq 0$. Then, by Lemma \ref{single}, $\vec{e}_j$ is the residue of $\vec{\lambda}_k$ with respect to $\bgamma_1^{(k)}, \ldots, \bgamma_{n-1}^{(k)}, \vec{\tau}_{n+1}, \ldots, \vec{\tau}_L$. Therefore, by Proposition \ref{equiv_opt}, $\sol(\frac{1}{k} Q_n + (1- \frac{1}{k}) \bar{Q}_n \, | \,   \set{Q_j }_{j > n} \cup \set{W^{(k)}_j}_{j < n})$ is well-defined and if $W_n \longleftarrow \sol(\frac{1}{k} Q_n + (1- \frac{1}{k}) \bar{Q}_n \, | \,   \set{Q_j }_{j > n} \cup \set{W^{(k)}_j}_{j < n})$, $W^{(k)}_n$ is one of the base distributions. Since $\be_j \in \bH$ implies that $\be_j \not \in \set{\be_{i_1}, \ldots, \be_{i_{n-1}}}$, it follows that $W^{(k)}_1, \ldots, W^{(k)}_n$ are distinct base distributions. This establishes the inductive step.

The result follows from applying Lemma \ref{vertex_test}.
\end{proof}

\section{Estimation}
\label{est_app}

In this section, we present the estimation results of our paper. To begin, in Section \ref{residuehat_results_section}, we present the proof of sufficient conditions under which ResidueHat estimators converge uniformly in probability (Proposition \ref{uniform_convergence}). Second, in Section \ref{demixing_mixed_membership_models_estimation_section}, we prove our main estimation result for demixing mixed membership models (Theorem \ref{demix_estim}). Finally, in Section \ref{partial_label_estimation}, we prove our main estimation result for classification with partial labels (Theorem \ref{partial_label_hat}).

\subsection{ResidueHat Results}
\label{residuehat_results_section}

Let $A_1, A_2, \ldots$ denote positive constants whose values may change from line to line. We introduce the following definitions.

\begin{defn}
Let $\est{F}$ and $\est{H}$ be ResidueHat estimators of $F$ and $H$, respectively, where $F \neq H$ and let $G \longleftarrow \text{Residue}(F \, | \, H)$ and $\est{G} \longleftarrow \text{ResidueHat}(\est{F} \, | \, \est{H})$. If $\est{G}$ is a ResidueHat estimator of order $0$, we say its \emph{distributional ancestors} are $\set{F,H}$ and define $\text{ancestors}(\est{G}) \coloneq \set{F,H}$. If $\est{G}$ is a ResidueHat estimator of the $k$th order, we define its distributional ancestors to be $\text{ancestors}(\est{G}) = \text{ancestors}(\est{F}) \cup \text{ancestors}(\est{H})$. 
\end{defn}
\noindent The constants in our bounds depend on the distributional ancestors.
\begin{defn}
We say that the distribution $F$ satisfies the \emph{support condition} \textbf{(SC)} with respect to $H$ if there exists a distribution $G$ and $\gamma \in [0,1)$ such that $\supp(H) \not \subseteq \supp(G)$ and $F = (1-\gamma) G + \gamma H$.
\end{defn}
\begin{defn}
If
\begin{align*}
\sup_{E \in \sE} |\est{F}(E) - F(E)| \overset{i.p.}{\longrightarrow} 0
\end{align*}
as $\vec{n} \longrightarrow \infty$, we say that $\est{F} \longrightarrow F$ uniformly (or $\est{F}$ converges uniformly to $F$) with respect to $\sE$.
\end{defn}

\begin{defn}
Let $\est{F}$ be a ResidueHat estimator of a distribution $F$. We say that $\est{F}$ satisfies a Uniform Deviation Inequality \textbf{(UDI)} with respect to $\sE$ if for 
all $\epsilon > 0$, there exist constants $A_{1,\epsilon}, A_{2,\epsilon} > 0$ and $\vec{N}$ depending on $\text{ancestors}(\est{F})$ such that if $\vec{n} \geq \vec{N}$, then for all $E \in \sE$
\begin{align*}
|\est{F}(E) - F(E)| < A_{1,\epsilon}\vc + \epsilon
\end{align*}
\noindent with probability at least $1 - A_{2,\epsilon} \sum_{i \in [L]} \frac{1}{n_i}$
\end{defn}
\noindent Henceforth, for the purposes of abbreviation, we will only say that a ResidueHat estimator satisfies a Uniform Deviation Inequality \textbf{(UDI)} and omit ``with respect to $\sE$" because the context makes this clear.  
\begin{defn}
Let $\est{F}$ and $\est{H}$ be ResidueHat estimators. We say that $\est{\kappa}(\est{F} \, | \, \est{H})$ satisfies a Rate of Convergence \textbf{(RC)} with respect to $\sE$ if for all $\epsilon > 0$, there exists constants $A_{1,\epsilon} ,A_{2,\epsilon} > 0$ and $\vec{N}$ depending on $\text{ancestors}(\est{F}) \cup \text{ancestors}(\est{H})$ such that for $\vec{n} \geq \vec{N}$,
\begin{align*}
|\est{\kappa}(\est{F} \, | \, \est{H})  - \kappa^*(F \, | \, H) | \leq A_{1,\epsilon} \vc+\epsilon
\end{align*}
\noindent with probability at least $1 - A_{2,\epsilon} \sum_{i \in [L]} \frac{1}{n_i}$.
\end{defn}
 
Lemma \ref{sup_cond} gives sufficient conditions under which $F$ satisfies \textbf{(SC)} with respect to $H$.
\begin{lemma}
\label{sup_cond}
Let $P_1, \ldots, P_L$ satisfy \textbf{(A$''$)} and let $F,H \in \co(P_1, \ldots, P_L)$ such that $F \neq H$. Then, $F$ satisfies \textbf{(SC)} with respect to $H$.
\end{lemma}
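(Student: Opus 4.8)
The plan is to exploit the private regions of the base distributions guaranteed by \textbf{(A$''$)} together with the unique mixture-proportion representation of $F$ and $H$. Since \textbf{(A$''$)} implies joint irreducibility, Lemma \ref{B_1} gives that $P_1, \ldots, P_L$ are linearly independent, so $F$ and $H$ have unique mixture proportions: I would write $F = \vec{\alpha}^T \vec{P}$ and $H = \vec{\beta}^T \vec{P}$ with $\vec{\alpha}, \vec{\beta} \in \si_L$, noting $\vec{\alpha} \neq \vec{\beta}$ because $F \neq H$.

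First I would construct the candidate decomposition entirely at the level of mixture proportions. Set $\gamma := \min_{j : \beta_j > 0} \alpha_j / \beta_j$, attained at some index $j^*$ with $\beta_{j^*} > 0$, and define $\vec{g} := (\vec{\alpha} - \gamma \vec{\beta})/(1 - \gamma)$ (this is just the two-sample residue construction at the level of mixture proportions, consistent with Proposition \ref{equiv_opt}). A short check shows $\gamma \in [0,1)$: nonnegativity is clear, and if $\gamma \geq 1$ then $\alpha_j \geq \beta_j$ for every $j$, which together with $\sum_j \alpha_j = \sum_j \beta_j = 1$ would force $\vec{\alpha} = \vec{\beta}$, contradicting $F \neq H$. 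One then verifies $\vec{g} \in \si_L$ (its entries are nonnegative by the choice of $\gamma$ and sum to $1$), so that $G := \vec{g}^T \vec{P}$ is a genuine distribution, $F = (1-\gamma) G + \gamma H$, and, crucially, $g_{j^*} = 0$ while $\beta_{j^*} > 0$.

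It remains to convert the coordinate condition $g_{j^*} = 0$, $\beta_{j^*} > 0$ into the support condition $\supp(H) \not\subseteq \supp(G)$, and this measure-theoretic step is where I expect the only real care to be needed. By \textbf{(A$''$)} there is a point $x_0 \in \supp(P_{j^*}) \setminus \bigcup_{i \neq j^*} \supp(P_i)$; since the finite union $\bigcup_{i \neq j^*} \supp(P_i)$ is closed, there is an open neighborhood $U$ of $x_0$ disjoint from it. Then $H(U) \geq \beta_{j^*} P_{j^*}(U) > 0$ (as $x_0 \in \supp(P_{j^*})$ forces $P_{j^*}(U) > 0$), so $U$ meets $\supp(H)$; on the other hand $G = \sum_{i \neq j^*} g_i P_i$ gives $G(U) = 0$, because each $P_i$ with $i \neq j^*$ satisfies $P_i(U) = 0$. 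Any point of $U \cap \supp(H)$ therefore lies in $\supp(H)$ but, possessing the $G$-null open neighborhood $U$, lies outside $\supp(G)$. Hence $\supp(H) \not\subseteq \supp(G)$, and together with $\gamma \in [0,1)$ and $F = (1-\gamma) G + \gamma H$ this shows that $F$ satisfies \textbf{(SC)} with respect to $H$. The main obstacle is thus purely the topological bookkeeping of this last step; the algebraic part is routine once the correct $\gamma$ and the vanishing coordinate $j^*$ have been identified.
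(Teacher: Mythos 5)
Your proof is correct, and it takes a genuinely different route from the paper. The paper first passes to a minimal subset $A \subseteq \set{P_1,\ldots,P_L}$ whose convex hull contains $F$ and $H$, then splits into cases ($F$ on the boundary of $\co(A)$, where it takes $G = F$ and $\gamma = 0$, versus $F$ in the relative interior, where it takes $G = \text{Residue}(F \, | \, H)$ and invokes statement \emph{2} of Lemma \ref{facts} together with Proposition \ref{equiv_opt} to place $G$ on a proper face), and in each case concludes $\supp(H) \not\subseteq \supp(G)$ from \textbf{(A$''$)} in a single sentence. You instead compute the residue in closed form at the level of mixture proportions, $\gamma = \min_{j:\beta_j>0} \alpha_j/\beta_j$ with $\vec{g} = (\vec{\alpha}-\gamma\vec{\beta})/(1-\gamma)$, which is exactly $\kappa^*(\vec{\alpha}\,|\,\vec{\beta})$ under the characterization in Proposition \ref{label_noise_bin_case}; this handles both of the paper's cases uniformly (when $\gamma = 0$ your $G$ equals $F$, recovering the paper's boundary case), and it isolates the one index $j^*$ with $g_{j^*}=0 < \beta_{j^*}$ that actually drives the conclusion. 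You also spell out the topological step the paper compresses: the private region of $P_{j^*}$ from \textbf{(A$''$)}, the closedness of the finite union $\cup_{i\neq j^*}\supp(P_i)$, and nullity of the complement of a support (valid in the paper's setting $\sX = \bbR^d$, where \textbf{(A$''$)} is used). What each approach buys: the paper's version situates $G$ geometrically within its face-based machinery, reused throughout the identification arguments; yours is more elementary and self-contained, needing neither the minimal-subset device nor Lemma \ref{facts} — and as a side benefit it sidesteps a slightly shaky inference in the paper's first case (minimality of $A$ does not by itself force $H$ into $\co(A)^\circ$, since $F$ and $H$ could lie on two different proper faces of $\co(A)$; your argument covers that configuration without comment, since $\gamma$ and $j^*$ are defined the same way regardless). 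The only cosmetic nit: in checking $\gamma < 1$, the inequality $\alpha_j \geq \beta_j$ follows from $\gamma \geq 1$ only for $j$ with $\beta_j > 0$, with the remaining indices trivial since $\beta_j = 0$ — worth stating, but it does not affect correctness.
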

\begin{proof}
Let $A = \argmin(|B| : B \subseteq \set{P_1, \ldots, P_L}, F,H \in \co(B))$. Without loss of generality, suppose that $A = \set{P_1, \ldots, P_K}$. $F$ either lies on the boundary of $\co(P_1, \ldots,P_K)$ or doesn't. If $F$ lies on the boundary of $\co(P_1, \ldots,P_K)$, then $H \in \co(P_1, \ldots, P_K)^\circ$ by minimality of $A$. Then, we pick $G = F$ and $\gamma = 0$ to obtain $F = (1-\gamma) F + \gamma H$. Since $P_1, \ldots, P_L$ satisfy \textbf{(A$''$)}, $\supp(H) \not \subseteq \supp(F)$. 

Now, suppose that $F \in \co(P_1, \ldots, P_K)^\circ$. Let $G \longleftarrow \text{Residue}(F \, | \, H)$; we can write $F = (1-\gamma)G + \gamma H$ for $\gamma \in [0,1)$ since $F \neq H$. Then, by Statement \emph{2} of Lemma \ref{facts} and statement \emph{3} of Proposition \ref{equiv_opt}, $G$ is on the boundary of $\co(P_1, \ldots, P_K)$. Without loss of generality, suppose that $G \in \co(P_1, \ldots, P_{K-1})$. Since $F = (1- \gamma) G + \gamma H \in \co(P_1, \ldots, P_K)^\circ$, and $G \in \co(P_1, \ldots, P_{K-1})$, $H \not \in \co(P_1, \ldots, P_{K-1})$. Since $P_1, \ldots, P_L$ satisfy \textbf{(A$''$)}, $\supp(H) \not \subseteq \supp(G)$. This completes the proof.
\end{proof}
Lemma \ref{vc_type_ineq} gives sufficient conditions under which an estimator $\est{G}$ satisfies a \textbf{(UDI)}.
\begin{lemma}
\label{vc_type_ineq}
Let
\begin{enumerate}
\item $F$ and $H$ be distributions such that $F \neq H$,

\item $G \longleftarrow \text{Residue}(F \, | \, H)$, and

\item $\est{G} \longleftarrow \text{ResidueHat}(\est{F} \, | \, \est{H})$.
\end{enumerate}
If $\est{\kappa}(\est{F} \, | \, \est{H})$ satisfies a \textbf{(RC)}, $\est{H}$ satisfies a \textbf{(UDI)}, and $\est{F}$ satisfies a \textbf{(UDI)}, then $\est{G}$ satisfies a \textbf{(UDI)}. 
\end{lemma}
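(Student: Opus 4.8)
The plan is to bound $\abs{\est{G}(S) - G(S)}$ pointwise in $S$ by a perturbation analysis of the map underlying the ResidueHat operator, and then feed in the three hypothesized deviation bounds together with a union bound. Write $\kappa := \kappa^*(F \mid H)$ and $\est{\kappa} := \est{\kappa}(\est{F} \mid \est{H})$, so that for every $S \in \sS$
\begin{align*}
G(S) = \frac{F(S) - \kappa H(S)}{1-\kappa}, \qquad \est{G}(S) = \frac{\est{F}(S) - \est{\kappa}\, \est{H}(S)}{1 - \est{\kappa}}.
\end{align*}
Since $F \neq H$, Proposition \ref{label_noise_bin_case} gives $\kappa < 1$, so $c := 1 - \kappa > 0$ is a constant depending only on $F$ and $H$, hence on $\text{ancestors}(\est{G}) = \text{ancestors}(\est{F}) \cup \text{ancestors}(\est{H})$.

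The main obstacle is that the denominator $1 - \est{\kappa}$ could in principle be tiny (or $\est{\kappa}$ could exceed $1$), making the ratio unstable. I would control this using the \textbf{(RC)} hypothesis: applying it with a fixed auxiliary tolerance $\epsilon' \le c/4$ yields constants and a sample-size threshold so that, once $\vec{n}$ is large enough to force the $\vc$-contribution below $c/4$, we have $\abs{\est{\kappa} - \kappa} \le c/2$ on an event of probability at least $1 - A'\sum_i 1/n_i$. On this event $1 - \est{\kappa} \ge c/2 > 0$ (so $\est{\kappa} < 1$ and $\est{G}$ is well defined) and $\est{\kappa} \in [0,1]$.

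Given this lower bound on the denominator, abbreviate $a = F(S)$, $\hat a = \est{F}(S)$, $b = H(S)$, $\hat b = \est{H}(S)$ and split
\begin{align*}
\est{G}(S) - G(S) = \underbrace{\frac{(\hat a - a) - \est{\kappa}(\hat b - b)}{1-\est{\kappa}}}_{T_1} + \underbrace{\left(\frac{a - \est{\kappa} b}{1-\est{\kappa}} - \frac{a - \kappa b}{1-\kappa}\right)}_{T_2}.
\end{align*}
For $T_1$, using $\est{\kappa} \in [0,1]$ and the two \textbf{(UDI)} hypotheses, $\abs{T_1} \le \frac{2}{c}\big(\abs{\hat a - a} + \abs{\hat b - b}\big)$. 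For $T_2$, I would view $g(t) = \frac{a - t b}{1-t}$, whose derivative is $g'(t) = \frac{a-b}{(1-t)^2}$; since $a,b \in [0,1]$ and $1-t \ge c/2$ for every $t$ between $\kappa$ and $\est{\kappa}$, the mean value theorem gives $\abs{T_2} \le \frac{4}{c^2}\abs{\est{\kappa} - \kappa}$. Combining, on the intersection of the three events and uniformly over $S \in \sS$,
\begin{align*}
\abs{\est{G}(S) - G(S)} \le \frac{2}{c}\big(\abs{\est{F}(S) - F(S)} + \abs{\est{H}(S) - H(S)}\big) + \frac{4}{c^2}\abs{\est{\kappa} - \kappa}.
\end{align*}

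Finally I would close the bookkeeping. Given the target $\epsilon > 0$, apply \textbf{(UDI)} for $\est{F}$ and $\est{H}$ and \textbf{(RC)} for $\est{\kappa}$ with auxiliary tolerances $\epsilon_1', \epsilon_2', \epsilon_3'$ small enough that $\frac{2}{c}(\epsilon_1' + \epsilon_2') + \frac{4}{c^2}\epsilon_3' \le \epsilon$ (and $\epsilon_3' \le c/4$, as needed above). Substituting the bounds $\abs{\est{F}(S)-F(S)} \le A^F \vc + \epsilon_1'$ and similarly for $\est{H}$ and $\est{\kappa}$ collects all $\vc$-terms into a single ancestor-dependent constant $A_{1,\epsilon}$ times $\vc$, plus the $\epsilon$-budget, giving $\abs{\est{G}(S) - G(S)} < A_{1,\epsilon}\vc + \epsilon$. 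A union bound over the three high-probability events yields the required probability $1 - A_{2,\epsilon}\sum_i 1/n_i$, and $\vec{N}$ is the maximum of the three thresholds together with the one forcing the $\vc$-contribution in the denominator estimate below $c/4$. Since $c$ and all the invoked constants depend only on $\text{ancestors}(\est{F}) \cup \text{ancestors}(\est{H})$, so do $A_{1,\epsilon}$, $A_{2,\epsilon}$, and $\vec{N}$, which is exactly what \textbf{(UDI)} for $\est{G}$ demands.
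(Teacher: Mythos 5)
Your proof is correct and takes essentially the same route as the paper's: the paper reparametrizes via $\alpha^* = (1-\kappa^*)^{-1}$ and first derives a rate for $\est{\alpha} = (1-\est{\kappa})^{-1}$ from \textbf{(RC)} and $\kappa^* < 1$ (which is exactly your denominator-control plus mean-value step), then concludes with the same triangle-inequality decomposition of the affine combination, the two \textbf{(UDI)} hypotheses, and a union bound with suitably allocated tolerances. Your write-up is, if anything, slightly more explicit than the paper's about the lower bound $1-\est{\kappa} \geq c/2$ and the uniformity over $S \in \sS$, which the paper compresses into the statement that $|\est{\alpha}|$ and $|1-\est{\alpha}|$ are bounded in probability.
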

\begin{proof}
For the sake of abbreviation, let $\est{\kappa} = \est{\kappa}(\est{F} \, | \, \est{H})$, $\kappa^* = \kappa^*(F \, | \, H)$, $\est{\alpha} = \frac{1}{1- \est{\kappa}}$ and $\alpha^* = \frac{1}{1-\kappa^*}$. Let $\epsilon > 0$. We claim that there are constants $A_{1,\epsilon}, A_{2,\epsilon} > 0$ such that for sufficiently large $\vec{n}$,
\begin{align}
\Pr(|\est{\alpha}  - \alpha^* | < A_{1,\epsilon}\vc+\epsilon) & \geq 1-  A_{2,\epsilon} \sum_{i \in [L]} \frac{1}{n_i}. \label{alpha_rate}
\end{align}
\noindent Let $\delta = \frac{\epsilon (1-\kappa^*)^2}{2}$. Since $\est{\kappa}$ satisfies a \textbf{(RC)}, there exists constants $A_{1,\delta},A_{2,\delta} > 0$ such that for large enough $\vec{n}$,
\begin{align*}
|\est{\kappa} - \kappa^*| \leq A_{1,\delta} \vc + \delta
\end{align*}
\noindent with probability at least $1 - A_{2,\delta} \sum_{i \in [L]} \frac{1}{n_i}$. Since $F \neq H$, $\kappa^* < 1$ by Proposition \ref{label_noise_bin_case}, so we can let $\vec{n}$ large enough so that
\begin{align*}
\frac{1}{(1-\kappa^*)(1-\est{\kappa})} \leq 2 \frac{1}{(1-\kappa^*)^2}
\end{align*} 
with high probability. Then, on this same event, for large enough $\vec{n}$,
\begin{align*}
|\frac{1}{1-\kappa^*} - \frac{1}{1-\est{\kappa}}| & \leq  \frac{ A_{1,\delta} \vc + \delta}{(1-\kappa^*)(1-\est{\kappa})} \\
& \leq  2 \frac{A_{1,\delta} \vc+\delta}{(1-\kappa^*)^2} \\ 
& \leq 2 \frac{A_{1,\delta} \vc}{(1-\kappa^*)^2} + \epsilon.
\end{align*}
Thus, we obtain the claim. 

We can write $G= \alpha F + (1-\alpha) H$ with $\alpha \geq 1$. Then, by the triangle inequality, 
\begin{align*}
|\est{G} - G| & = | \est{\alpha} \est{F} + (1-\est{\alpha}) \est{H} -  \alpha F - (1-\alpha) H | \\
& \leq |\est{\alpha} \est{F} - \alpha F | + |(1-\est{\alpha}) \est{H} - (1-\alpha) H| \\
& = |\est{\alpha} \est{F} -\est{\alpha} F + \est{\alpha} F - \alpha F | + |(1-\est{\alpha}) \est{H} - (1-\est{\alpha}) H  + (1-\est{\alpha})H - (1-\alpha) H| \\
& \leq |\est{\alpha}| |\est{F} - F| +  |\est{\alpha} - \alpha | + |1-\est{\alpha}||\est{H} -  H | + |\est{\alpha}- \alpha|.
\end{align*}
\noindent Since $\est{F}$ satisfies a \textbf{(UDI)}, $\est{H}$ satisfies a \textbf{(UDI)}, inequality (\ref{alpha_rate}) holds, and $|\est{\alpha}|$ and $|1-\est{\alpha}|$ are bounded in probability, the result follows by an application of a union bound and picking the $\epsilon$s in the uniform deviation inequalities appropriately for each term. 
\end{proof}

Lemma \ref{rate} gives sufficient conditions under which $\est{\kappa}$ satisfies \textbf{(RC)}.
\begin{lemma}
\label{rate}
Let $F$ and $H$ be distributions such that $F \neq H$. If
\begin{itemize}
\item $F$ satisfies \textbf{(SC)} with respect to $H$,

\item $\est{F}$ satisfies \textbf{(UDI)}, and

\item $\est{H}$ satisfies \textbf{(UDI)}, 

\end{itemize}
then $\est{\kappa}(\est{F} \, | \, \est{H})$ satisfies \textbf{(RC)}.
\end{lemma}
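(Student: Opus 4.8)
The plan is to sandwich $\est{\kappa}(\est{F}\mid\est{H})$ between $\kappa^*(F\mid H)\pm(A_1\vc+\epsilon)$, working throughout on the event on which \emph{both} input UDIs hold; a union bound makes this event have probability at least $1-A_2\sum_i \tfrac{1}{n_i}$, and the UDI tolerances will be taken as functions of the target tolerance $\epsilon$ in \textbf{(RC)}. On this event I write, for the common deviation $e=A\vc+\epsilon$, that $|\est{F}(S)-F(S)|<e$ and $|\est{H}(S)-H(S)|<e$ simultaneously for all $S\in\sS$. First I would unpack \textbf{(SC)}: write $F=(1-\gamma)G+\gamma H$ with $\gamma\in[0,1)$ and $\supp(H)\not\subseteq\supp(G)$. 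The support condition yields a set on which $H$ is positive but $G$ vanishes, so $\kappa^*(G\mid H)=0$; hence $G$ is the residue and $\gamma=\kappa^*(F\mid H)$ by Proposition \ref{label_noise_bin_case}. This also provides an \emph{anchor set} $S_0\in\sS$ on which $H(S_0)\ge h_0$ for a fixed constant $h_0>0$ while $G(S_0)$ is arbitrarily small (approximating the Borel witness set within $\sS$, which generates $\sC$), so that $F(S_0)\le\kappa^* H(S_0)+(1-\kappa^*)\eta$ with $\eta$ as small as desired.

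For the upper bound I would plug $S_0$ into $\est{\kappa}$. Since $H(S_0)\ge h_0$, for $\vec{n}$ large the denominator $\est{H}(S_0)-\vc\ge H(S_0)-e-\vc\ge h_0/2$, and the two UDIs give
\[
\est{\kappa}(\est{F}\mid\est{H})\le\frac{\est{F}(S_0)+\vc}{\est{H}(S_0)-\vc}\le\frac{\kappa^* H(S_0)+(1-\kappa^*)\eta+e+\vc}{H(S_0)-e-\vc},
\]
whose difference from $\kappa^*$ is $O\!\big((e+\vc+\eta)/h_0\big)$. Choosing the UDI tolerance and $\eta$ small relative to $\epsilon$ and $h_0$ turns this into $\kappa^*+A_1\vc+\epsilon$.

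For the lower bound I would fix any $S$ with $\est{H}(S)>\vc$. Combining $\est{F}(S)\ge F(S)-e\ge\kappa^*H(S)-e$ with $H(S)\ge\est{H}(S)-e$ and rearranging gives the per-set inequality
\[
\frac{\est{F}(S)+\vc}{\est{H}(S)-\vc}\ge\kappa^*-\frac{(\kappa^*+1)\,c}{\est{H}(S)-\vc},\qquad c:=e-\vc .
\]
When $\est{H}(S)-\vc$ is bounded below this is already the desired bound; when it is tiny the ratio is instead large because the numerator is at least $\vc$. The delicate regime is intermediate denominators, where I would retain the full decomposition $F(S)=(1-\kappa^*)G(S)+\kappa^* H(S)$: any set driving the ratio below $\kappa^*-\mathrm{err}$ must satisfy $(1-\kappa^*)G(S)+\mathrm{err}\cdot H(S)<(\kappa^*+1)c$, i.e.\ it has small mass under both $G$ and $H$, which is precisely the regime the $\vc$ offset in the numerator is designed to control.

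The main obstacle is exactly this uniform lower bound over $\sS$. For order-$0$ (empirical) estimators the deviation \emph{equals} $\vc$, so the $\pm\vc$ corrections cancel it and the estimated ratio never falls below the true ratio $\ge\kappa^*$; the whole difficulty is that a ResidueHat estimator only satisfies a UDI with deviation $e=A\vc+\epsilon$ strictly larger than $\vc$, so the correction under-compensates by $c>0$. Bounding the loss this creates on small-measure sets — where neither the $\kappa^*-(\kappa^*+1)c/(\est{H}(S)-\vc)$ estimate nor the ``numerator $\ge\vc$'' estimate suffices in isolation — is the crux of the argument, and it is what couples the additive slack $\epsilon$ in \textbf{(RC)} to the additive slacks in the two hypothesized UDIs.
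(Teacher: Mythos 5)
Your upper-bound direction ($\est{\kappa} \le \kappa^* + A_1 \vc + \epsilon$) is essentially the paper's: \textbf{(SC)} gives $F = (1-\gamma)G + \gamma H$ with $\supp(H) \not\subseteq \supp(G)$, hence $\kappa^*(F\,|\,H) = \gamma$ by Proposition \ref{label_noise_bin_case}, and the support gap supplies an anchor set; the paper uses the fact that $\sS$ contains a class generating the Borel $\sigma$-algebra to produce $S \in \sS$ with $G(S) = 0$ and $H(S) > 0$ \emph{exactly}, so that $F(S)/H(S) = \kappa^*$, and then follows the computation of Theorem 2 of \citet{scott2015}; your variant with $G(S_0) \le \eta$ and $H(S_0)\ge h_0$ is an acceptable approximation of the same idea. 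The genuine gap is in the other direction. You derive the correct per-set inequality, correctly observe that the $\pm\vc$ offsets exactly cancel the deviation only for order-$0$ (empirical) inputs and under-compensate by $c = e - \vc$ for true ResidueHat inputs, and then you stop: your final paragraph names the intermediate-denominator regime as ``the crux'' without supplying an argument for it. As written, the proposal does not establish \textbf{(RC)} --- the lower bound $\est{\kappa} \ge \kappa^* - A_1\vc - \epsilon$ is argued only outside the regime you yourself flag as problematic, so the proof is incomplete.

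The paper never attempts the uniform-over-$\sS$ bound you were fighting with. It argues at a single, data-dependent set: it first fixes $R \in \sS$ with $H(R) > 0$ so that, on the UDI event (tolerance $\delta$, constant $A_{2,\delta}$) and for $\vec{n}$ large, $\est{H}(R) - \vc > 0$ and hence $\est{\kappa}$ is finite; it then takes $S$ nearly achieving the infimum defining $\est{\kappa}$, for which necessarily $\est{H}(S) > \vc$ and $H(S) > 0$, and chains
\[
\tfrac{\epsilon}{2} + \est{\kappa} \;\ge\; \frac{F(S) - (A_{2,\delta}-1)\vc - \delta}{H(S) + (A_{2,\delta}-1)\vc + \delta} \;\ge\; \kappa^* - 2\,\frac{(A_{2,\delta}-1)\vc}{H(S)} - 2\,\frac{\delta}{H(S)},
\]
using $F \ge \kappa^* H$ setwise (from the decomposition $F = (1-\kappa^*)G + \kappa^* H$) and the elementary fact that $\frac{a}{b+c} \ge \frac{a}{b} - \frac{c}{b}$ for $a \le b$, finally tuning $\delta$ proportional to $H(S)\epsilon$. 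The point is that all error terms are normalized by $H(S)$ at that one near-infimizing set, so no control of arbitrary small-measure sets is required; informally, a set with $\est{H}(S) - \vc$ too small cannot nearly attain the infimum, because the $+\vc$ offset in the numerator forces the ratio up there. Your instinct that the small-$H(S)$ regime is the delicate point is fair --- the paper itself is terse about why $H(S)$ for the near-infimizer stays away from $0$ --- but the paper commits to a concrete route around the uniform bound, and without some such localization your proposal does not close.
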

\begin{proof}
For abbreviation, let $\kappa^* = \kappa(F \, | \, H)$ and $\est{\kappa} = \est{\kappa}(\est{F} \, | \, \est{H})$. 

We first prove the upper bound. $F$ satisfies \textbf{(SC)} with respect to $H$, so there exists a distribution $G$ such that $F = (1-\gamma) G + \gamma H$ for some $\gamma \in [0,1)$ and $\supp(H) \not \subseteq \supp(G)$. Therefore, we have that $G$ is irreducible with respect to $H$ and, by Proposition \ref{label_noise_bin_case}, $\kappa^* = \gamma$. 

Let $\delta > 0$ (to be chosen later). Since by hypothesis $\est{F}$ and $\est{H}$ satisfy \textbf{(UDI)}, there exist constants $A_{1,\delta}, A_{2,\delta} > 0$ such that for large enough $\vec{n}$, with probability at least $1 - A_{1,\delta}[\sum_{i \in [L]} \frac{1}{n_i}]$, for all $E \in \sE$,
\begin{align}
|\est{F}(E) - F(E) | & < A_{2,\delta} \vc + \delta \\
|\est{H}(E) - H(E) | & < A_{2,\delta} \vc + \delta. \label{udi_rc_proof}
\end{align}
\noindent Without loss of generality, let $A_{1,\delta}, A_{2,\delta} > 1$. 

Pick $R \in \sE$ such that $H(R) > 0$. By inequality (\ref{udi_rc_proof}), there exists $\vec{N}_1$ such that $\vec{n} \geq \vec{N}_1$ implies that $\est{H}(R) - \vc > 0$ with high probability. This implies that for $\vec{n} \geq \vec{N}_1$, $\est{\kappa}$ is finite. Let $\epsilon > 0$. By definition of $\est{\kappa}$, there exists $E \in \sE$ such that
\begin{align*}
\frac{\epsilon}{2} + \est{\kappa} & \geq \frac{\est{F}(E) + \vc}{(\est{H}(E)- \vc)_+}.
\end{align*}
Since $\est{\kappa}$ is finite, we have that $\est{H}(E) > \vc$ and $H(E) > 0$. Then,
\begin{align*}
\frac{\epsilon}{2} + \est{\kappa} & \geq \frac{\est{F}(E) + \vc}{\est{H}(E)- \vc} \\
& \geq \frac{F(E) - (A_{2,\delta}-1)\vc-\delta}{H(E)+ (A_{2,\delta}-1)\vc+\delta} \\
& \geq \frac{ \gamma H(E) }{H(E)+ (A_{2,\delta}-1)\vc + \delta} - \frac{ (A_{2,\delta}-1)\vc}{H(E)+ (A_{2,\delta}-1)\vc + \delta} - \frac{ \delta}{H(E)+ (A_{2,\delta}-1)\vc + \delta}   \\
& \geq \frac{\gamma H(E)}{H(E)} - \frac{(A_{2,\delta} - 1) \vc+\delta}{H(E)} - \frac{(A_{2,\delta} -1) \vc}{H(E) + (A_{2,\delta}-1)\vc+\delta} -\frac{\delta}{H(E) + (A_{2,\delta}-1)\vc+\delta}\\
& \geq \kappa^* - 2\frac{(A_{2,\delta} - 1) \vc}{H(E)} -2 \frac{\delta}{H(E)}\\
\end{align*} 
where in the second to last inequality we used the elementary fact that if $a,b,c > 0$ and $a \leq b$, then $\frac{a}{b+c} \geq \frac{a}{b} - \frac{c}{b}$. Picking $\delta = \frac{H(E) \epsilon}{4}$, we obtain the upper bound.

The proof of the other direction of the inequality is very similar to the proof of Theorem 2 in \citet{scott2015}. By hypothesis, $F$ satisfies \textbf{(SC)} with respect to $H$, so there exists a distribution $G$ such that $F = (1-\gamma) G + \gamma H$ for some $\gamma \in [0,1)$ and $\supp(H) \not \subseteq \supp(G)$. Therefore, we have that $G$ is irreducible with respect to $H$ and, by Proposition \ref{label_noise_bin_case}, $\kappa^*(F \, | \, H) = \gamma$. For abbreviation, let $\kappa^* = \kappa^*(F \, | \, H)$ and $\est{\kappa} = \est{\kappa} (\est{F} \, | \, \est{H})$. Since $\supp(H) \not \subseteq \supp(G)$, there exists an open set $O$ such that 
\begin{align*}
\frac{F(O)}{H(O)} & = ( 1- \gamma) \frac{G(O)}{H(O)} + \gamma = \kappa^*.
\end{align*}
\noindent Then, since $\sE$ contains a generating set for the standard topology on $\bbR^d$, there exists $E \in \sE$ such that
\begin{align*}
\frac{F(E)}{H(E)} = \kappa^*.
\end{align*}
Let $\delta>0$ such that $\delta \leq \frac{1}{4}H(E)$. Since by hypothesis $\est{F}$ and $\est{H}$ satisfy \textbf{(UDI)}, there exist constants $A_{3,\delta}, A_{4,\delta} > 0$ such that for large enough $\vec{n}$, with probability at least $1 - A_{3,\delta}[\sum_{i \in [L]} \frac{1}{n_i}]$,
\begin{align*}
\est{\kappa} & \leq \frac{F(E) + A_{4,\delta}\vc+\delta }{(H(E) - A_{4, \delta}\vc -\delta)_+} \\
& \leq \frac{F(E) + \epsilon}{(H(E) - \epsilon)_+}
\end{align*}
\noindent where $\epsilon = 2A_{4,\delta} \vc +\delta$. The rest of the proof is identical to the proof of Theorem 2 from \citet{scott2015} and, therefore, we omit it.
\end{proof}
The following theorem gives sufficient conditions under which a ResidueHat estimator satisfies \textbf{(UDI)}. It is the basis of Proposition \ref{uniform_convergence}.
\begin{lemma}
\label{main_est_ineq}
If $P_1, \ldots, P_L$  satisfy \textbf{(A$''$)} and $\est{G}$ is a ResidueHat estimator of order $k$ of a distribution $G \in \co(P_1, \ldots, P_L)$, then $\est{G}$ satisfies \textbf{(UDI)}.
\end{lemma}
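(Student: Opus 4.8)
The plan is to prove the statement by induction on the order $k$ of the ResidueHat estimator $\est{G}$, with Lemmas \ref{sup_cond}, \ref{rate}, and \ref{vc_type_ineq} doing the analytic work and the VC inequality seeding the base case. The recursive definition of a ResidueHat estimator guarantees at every node that $\est{G} = \text{ResidueHat}(\est{F} \, | \, \est{H})$ with $F, H \in \co(P_1, \ldots, P_L)$ and $F \neq H$ (condition (i) plus the requirement $F \neq H$ needed for $\text{Residue}(F \, | \, H)$ to be defined), and the estimated distribution $G = \text{Residue}(F \, | \, H)$ is itself in $\co(P_1, \ldots, P_L)$, since by Statement \emph{2} of Lemma \ref{facts} together with Statement \emph{3} of Proposition \ref{equiv_opt} its mixture proportion lies in $\si_L$. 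Thus the hypotheses required to invoke the three lemmas are preserved throughout the recursion tree, and the inductive hypothesis can be applied to each constituent estimator.

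Before the induction I would record the base fact that any empirical distribution $\ed{F}$ of a distribution $F$ satisfies \textbf{(UDI)}. This is immediate from the VC inequality: choosing $\delta = \frac{1}{n_i}$ gives $\sup_{S \in \sS} |\ed{F}(S) - F(S)| \leq \epsilon_i(\frac{1}{n_i}) \leq \vc$ with probability at least $1 - \frac{1}{n_i} \geq 1 - \sum_{i \in [L]} \frac{1}{n_i}$, so \textbf{(UDI)} holds with $A_{1,\epsilon} = A_{2,\epsilon} = 1$ for every $\epsilon > 0$.

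For the base case $k = 0$, the estimators $\est{F}, \est{H}$ are empirical, hence satisfy \textbf{(UDI)} by the base fact, and $F, H \in \co(P_1, \ldots, P_L)$ with $F \neq H$, so Lemma \ref{sup_cond} gives that $F$ satisfies \textbf{(SC)} with respect to $H$; Lemma \ref{rate} then yields that $\est{\kappa}(\est{F} \, | \, \est{H})$ satisfies \textbf{(RC)}, and Lemma \ref{vc_type_ineq} concludes that $\est{G}$ satisfies \textbf{(UDI)}. For the inductive step, assume the claim for all ResidueHat estimators of order $< k$. Writing $\est{G} = \text{ResidueHat}(\est{F} \, | \, \est{H})$, each of $\est{F}, \est{H}$ is either an empirical distribution (satisfying \textbf{(UDI)} by the base fact) or a ResidueHat estimator of order $\leq k-1$ (satisfying \textbf{(UDI)} by the inductive hypothesis, since its estimated distribution lies in $\co(P_1, \ldots, P_L)$). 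Applying the same chain -- Lemma \ref{sup_cond}, then Lemma \ref{rate}, then Lemma \ref{vc_type_ineq} -- shows $\est{G}$ satisfies \textbf{(UDI)}, closing the induction.

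The main obstacle is not any single estimate but the bookkeeping that makes the recursion airtight: one must confirm that the structural hypotheses ($F, H \in \co(P_1, \ldots, P_L)$ and $F \neq H$) are genuinely inherited at each node so that Lemma \ref{sup_cond} is always applicable, and that the constants $A_{1,\epsilon}, A_{2,\epsilon}$ and the threshold $\vec{N}$, which depend on $\text{ancestors}(\est{G}) = \text{ancestors}(\est{F}) \cup \text{ancestors}(\est{H})$, compose correctly through the union bound implicit in Lemma \ref{vc_type_ineq}. Since the induction mirrors the recursive structure of the Demix/DemixHat algorithm exactly, Proposition \ref{uniform_convergence} follows at once from this lemma together with the observation that any ResidueHat estimate targets a distribution in $\co(P_1, \ldots, P_L)$.
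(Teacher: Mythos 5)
Your proof is correct and follows essentially the same route as the paper's: induction on the order $k$, seeding the base case with the VC inequality for empirical distributions and closing each step via the chain Lemma \ref{sup_cond} (giving \textbf{(SC)}) $\Rightarrow$ Lemma \ref{rate} (giving \textbf{(RC)}) $\Rightarrow$ Lemma \ref{vc_type_ineq} (giving \textbf{(UDI)}). Your added bookkeeping---the explicit constants for the empirical base fact, the strong-induction handling of mixed orders $\leq k-1$, and the justification that $G \in \co(P_1,\ldots,P_L)$ via Lemma \ref{facts} and Proposition \ref{equiv_opt}---only makes explicit details the paper leaves implicit.
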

\begin{proof}
\sloppy Let $\est{G} \longleftarrow \text{ResidueHat}(\est{F} \, | \, \est{H})$ where $G \longleftarrow \text{ResidueHat}(F \, | \, H)$, $F \neq H$, $F, H \in \co(P_1, \ldots, P_L)$ and $\est{F}, \est{H}$ are ResidueHat estimators of $F$ and $H$ respectively. We use induction on $k$. Suppose $k=0$. Then, $\est{F}$ and $\est{H}$ are empirical distributions. Therefore, the VC inequality applies to $\est{F}$ and $\est{H}$. Consequently, $\est{F}$ and $\est{H}$ satisfy \textbf{(UDI)}. Since $P_1, \ldots, P_L$  satisfy \textbf{(A$''$)}, $F, H \in \co(P_1, \ldots, P_L)$ and $F \neq H$, by Lemma \ref{sup_cond}, $F$ satisfies \textbf{(SC)} with respect to $H$. Then, by Lemma \ref{rate}, $\est{\kappa}(\est{F} \, | \, \est{H})$ satisfies \textbf{(RC)}. Then, all of the assumptions of Lemma \ref{vc_type_ineq} are satisfied, so $\est{G}$ satisfies \textbf{(UDI)}. Note that $G \in \co(P_1, \ldots, P_L)$ by Proposition \ref{equiv_opt}. 

The inductive step $(k > 0)$ follows by similar reasoning. The difference is that instead of applying the VC inequality to $\widehat{F}$ and $\widehat{H}$, we use the fact that $\widehat{F}$ and $\widehat{H}$ are ResidueHat estimators of order $k-1$ and, therefore, satisfy \textbf{(UDI)} by the inductive hypothesis.
\end{proof}
\begin{proof}[Proof of Proposition \ref{uniform_convergence}]
Let $0 < \delta < \epsilon$.	By Lemma \ref{main_est_ineq}, $\est{G}$ satisfies \textbf{(UDI)}. Consequently, there exist constants $A_{1,\delta}, A_{2,\delta} > 0$ such that for large enough $\vec{n}$ with probability at least $1 - A_{1,\delta} \sum_{i \in [L]} \frac{1}{n_i}$, $\est{G}$ satisfies for every $E \in \sE$,
\begin{align*}
|\est{G}(E) - G(E)| & \leq A_{2,\delta} \vc + \delta =  A_{2,\delta} \sum_{i \in [L]}  \epsilon_i(\frac{1}{n_i}) + \delta \longrightarrow \delta < \epsilon.
\end{align*}
\end{proof}
\subsection{Demixing Mixed Membership Models}
\label{demixing_mixed_membership_models_estimation_section}

In this section, we prove our main estimation result for demixing mixed membership models, i.e., Theorem \ref{demix_estim}. First, in Section \ref{facetesthat_algorithm_section}, we present an important lemma for FaceTestHat. Second, in Section \ref{demixhat_algorithm_section}, we present an empirical version of Demix and prove Theorem \ref{demix_estim}.

\subsubsection{The FaceTestHat Algorithm}
\label{facetesthat_algorithm_section}
%The FaceTestHat algorithm (see Algorithm \ref{face_test_hat_alg}) differs from the FaceTest algorithm in that it requires the specification of a constant $\epsilon \in (0,1)$. As will become clear in the following section, this difference introduces similar changes in DemixHat. We emphasize that Theorem \ref{demix_estim} holds for any value $\epsilon \in (0,1)$.

The following establishes that FaceTestHat behaves as desired.

\begin{lemma}
\label{face_test_hat}
Let $\epsilon \in (0,1)$. For all $j \in [K]$, let $Q_j = \vec{\eta}^T_j \vec{P}$ and $\vec{\eta}_j \in \si_K$ such that every $\vec{\eta}_j$ lies in the relative interior of the same face of $\si_K$. Let $P_1,\ldots, P_K$ satisfy \textbf{(A$''$)}, and $Q_1, \ldots, Q_K \in \co(P_1, \ldots, P_K)$ be distinct. Let $\est{Q}_i$ be a ResidueHat estimate of $Q_i$ $\forall i \in [K]$. 
\begin{enumerate}
\item With probability tending to $1$ as $\vec{n} \longrightarrow \infty$, if FaceTestHat($\est{Q}_1,\cdots,\est{Q}_K \, | \, \epsilon$) returns $1$, then $\vec{\eta}_1, \ldots, \vec{\eta}_K$  are in the relative interior of the same face. 

\item Let $\kappa^*_{i,j} = \kappa^*(Q_i \, | \, Q_j)$. If $\vec{\eta}_1, \ldots, \vec{\eta}_K$  are in the relative interior of the same face and $\min_{i,j} \kappa^*_{i,j} > \epsilon$, then with probability tending to $1$ as $\vec{n} \longrightarrow \infty$, FaceTestHat($\est{Q}_1,\cdots,\est{Q}_K \, | \, \epsilon$) returns $1$.
\end{enumerate}

\end{lemma}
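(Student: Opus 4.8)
The plan is to reduce both statements to a single analytic fact — that under \textbf{(A$''$)} the estimator $\est{\kappa}(\est{Q}_i \mid \est{Q}_j)$ converges in probability to $\kappa^*(Q_i \mid Q_j)$ — combined with statement \emph{3} of Lemma \ref{facts}, which converts the value of $\kappa^*$ into a statement about support sets: $\kappa^*(Q_i \mid Q_j) > 0$ if and only if $\sS(\vec{\eta}_j) \subseteq \sS(\vec{\eta}_i)$. Since each $\vec{\eta}_j$ lies in the relative interior of a face, its face is determined by $\sS(\vec{\eta}_j)$; hence $\vec{\eta}_1, \ldots, \vec{\eta}_K$ lie in the relative interior of a common face precisely when $\sS(\vec{\eta}_1) = \cdots = \sS(\vec{\eta}_K)$, which in turn holds precisely when $\kappa^*(Q_i \mid Q_j) > 0$ for every pair $i \neq j$ (because $\sS(\vec{\eta}_j) \subseteq \sS(\vec{\eta}_i)$ for all ordered pairs forces equality by symmetry).

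First I would establish the convergence. Fix $i \neq j$. Distinctness of $Q_1, \ldots, Q_K$ gives $Q_i \neq Q_j$; since $Q_i, Q_j \in \co(P_1, \ldots, P_K)$ and $P_1, \ldots, P_K$ satisfy \textbf{(A$''$)}, Lemma \ref{sup_cond} yields that $Q_i$ satisfies \textbf{(SC)} with respect to $Q_j$; and since $\est{Q}_i, \est{Q}_j$ are ResidueHat estimators of distributions in $\co(P_1, \ldots, P_K)$, Lemma \ref{main_est_ineq} gives that each satisfies a \textbf{(UDI)}. The three hypotheses of Lemma \ref{rate} thus hold, so $\est{\kappa}(\est{Q}_i \mid \est{Q}_j)$ satisfies a \textbf{(RC)}: for every $\epsilon' > 0$ there are constants such that $\abs{\est{\kappa}(\est{Q}_i \mid \est{Q}_j) - \kappa^*(Q_i \mid Q_j)} \leq A_{1,\epsilon'}\vc + \epsilon'$ with probability at least $1 - A_{2,\epsilon'}\sum_{l \in [L]} \frac{1}{n_l}$. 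As $\vec{n} \to \infty$ both $\vc \to 0$ and $\sum_{l \in [L]} \frac{1}{n_l} \to 0$, so $\est{\kappa}(\est{Q}_i \mid \est{Q}_j) \to \kappa^*(Q_i \mid Q_j)$ in probability; a union bound over the finitely many pairs makes this simultaneous.

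For statement \emph{1}, I would work on the event $\mathcal{E}$ on which $\est{\kappa}(\est{Q}_i \mid \est{Q}_j) \leq \epsilon$ for every pair with $\kappa^*(Q_i \mid Q_j) = 0$; since $\epsilon > 0$ is fixed and $\est{\kappa} \to \kappa^*$ in probability, $\Pr(\mathcal{E}) \to 1$. On $\mathcal{E}$, if FaceTestHat returns $1$ then $\est{\kappa}(\est{Q}_i \mid \est{Q}_j) > \epsilon$ for all $i \neq j$, which excludes $\kappa^*(Q_i \mid Q_j) = 0$ and hence forces $\kappa^*(Q_i \mid Q_j) > 0$ for every pair. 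By the support-set reduction above this gives $\sS(\vec{\eta}_1) = \cdots = \sS(\vec{\eta}_K)$, so the $\vec{\eta}_j$ share a face. For statement \emph{2}, the hypothesis $\min_{i \neq j}\kappa^*_{i,j} > \epsilon$ supplies a fixed positive margin, so the convergence gives $\est{\kappa}(\est{Q}_i \mid \est{Q}_j) > \epsilon$ simultaneously for all $i \neq j$ with probability tending to $1$, whereupon the off-diagonal of $\vec{Z}$ has no zero entry and FaceTestHat returns $1$.

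The only delicate point is bookkeeping at the threshold: the test uses the strict inequality $\est{\kappa} > \epsilon$, so in statement \emph{1} I must convert $\kappa^* = 0$ into $\est{\kappa} \leq \epsilon$ (not merely into $\est{\kappa}$ being small), which is exactly where the fixed, strictly positive cutoff $\epsilon$ is used; and in statement \emph{2} the extra margin assumption $\min_{i \neq j}\kappa^*_{i,j} > \epsilon$ is precisely what rules out the near-boundary case $\kappa^* \approx \epsilon$ from spoiling the conclusion. Everything else is a direct application of the \textbf{(RC)} machinery together with Lemma \ref{facts}, and I anticipate no substantial obstacle beyond verifying the \textbf{(RC)} hypotheses pairwise.
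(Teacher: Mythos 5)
Your proof is correct and follows essentially the same route as the paper's: both establish the \textbf{(RC)} for each pairwise $\est{\kappa}(\est{Q}_i \, | \, \est{Q}_j)$ via Lemmas \ref{sup_cond}, \ref{main_est_ineq}, and \ref{rate}, and then translate strict positivity of $\kappa^*$ into shared-face membership (you directly through statement \emph{3} of Lemma \ref{facts}, the paper through Proposition \ref{face_test}, which rests on that same statement). Your argument for statement \emph{1} is simply the paper's contrapositive rephrased forward on an explicit high-probability event, and your argument for statement \emph{2} is identical to the paper's.
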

\begin{proof}
Let $\epsilon > 0$, $\kappa^*_{i,j} = \kappa^*(Q_i \, | \, Q_j )$ and $\est{\kappa}_{i,j} = \est{\kappa}( \est{Q}_i \, | \, \est{Q}_j)$. Since $P_1,\ldots, P_K$ satisfy \textbf{(A$''$)} and $Q_i \neq Q_j$, by Lemma \ref{sup_cond}, $Q_i$ satisfies \textbf{(SC)} wrt $Q_j$. Since $\est{Q}_i$ and $\est{Q_j}$ are ResidueHat estimators, $\est{Q}_i$ and $\est{Q}_j$ satisfy \textbf{(UDI)} (Lemma \ref{main_est_ineq}). Then, by Lemma \ref{rate} $\est{\kappa}_{i,j}$ satisfies \textbf{(RC)}.
\begin{enumerate}

\item We prove the contrapositive. Suppose that $Q_1, \ldots, Q_K$ are not in the relative interior of the same face. Then, by Proposition \ref{face_test}, $\text{FaceTest}(Q_1,\ldots,Q_K)$ returns $0$, which occurs if and only if there exist $i \neq j$ such that $\kappa^*_{i,j} = 0$. Since $\est{\kappa}_{i,j}$ satisfies \textbf{(RC)}, as $\vec{n} \longrightarrow \infty$, with probability tending to 1, $\est{\kappa}_{i,j} \longrightarrow 0$. This completes the proof.

\item If $\min_{i,j} \kappa^*_{i,j} > \epsilon$, then as $\vec{n} \longrightarrow \infty$, with probability tending to 1, $\min_{i,j} \est{\kappa}_{i,j} > \epsilon$.
\end{enumerate}
\end{proof}

\subsubsection{The DemixHat Algorithm}
\label{demixhat_algorithm_section}

The DemixHat algorithm (see Algorithm \ref{demix_hat_alg}) differs from the Demix algorithm in that \emph{(i)} it requires the specification of a constant $\epsilon \in (0,1)$ and \emph{(ii)} it only uses the two-sample $\kappa^*$ operator. In the interest of clarity, we state the population version of the algorithm DemixHat, which we call Demix2. The only difference between Demix and Demix2 is that line \ref{multi_sample_kappa_demix} in Demix has been replaced with lines \ref{last_loop_start_demix2}-\ref{last_loop_end_demix2} in Demix2.

\begin{algorithm}[t]
\caption{Demix2($S_1, \ldots, S_K$)}
\textbf{Input: }$S_1, \ldots, S_K$ are distributions
\label{demix2_alg}
\begin{algorithmic}[1]
\IF{$K = 2$}
%\STATE $Q_1 \longleftarrow$ Residue($S_1 \, | \, S_2$) \label{demix_res_1}
%\STATE $Q_2 \longleftarrow$ Residue($S_2 \, | \, S_1$) \label{demix_res_2}
\RETURN $(\text{Residue}(S_1 \, | \, S_2),  \text{Residue}(S_2 \, | \, S_1))^T$ \label{demix2_res_1}
\ELSE
\STATE $(R_2, \ldots, R_K)^T \longleftarrow \text{FindFace}(S_1, \ldots,S_K)$
\STATE $(Q_1, \ldots, Q_{K-1})^T \longleftarrow \text{Demix2}(R_2, \ldots, R_K)$
\FOR{$i = 1, \ldots, K-1$} \label{last_loop_start_demix2}
\STATE $Q_K \longleftarrow \text{Residue}(Q_K \, | \, Q_i)$ \label{last_residue_compute_demix2}
\ENDFOR \label{last_loop_end_demix2}
\RETURN $(Q_1, \ldots, Q_K)^T$
\ENDIF
\end{algorithmic}
\end{algorithm}

Lemma \ref{mod_ident_lemma} establishes that it is possible to replace line \ref{multi_sample_kappa_demix} of the Algorithm \ref{demix_alg} with the sequence of applications of the two-sample $\kappa^*$ in lines \ref{last_loop_start_demix2}-\ref{last_loop_end_demix2} of Algorithm \ref{demix2_alg}, without changing the conclusion of Theorem \ref{demix_identification}.
\begin{lemma}
\label{mod_ident_lemma}
\sloppy Let $\set{i_1, \ldots, i_K} \subset [L]$ be distinct indices. Let $P_1, \ldots,P_L$ be jointly irreducible, $(Q_1, \ldots, Q_{K-1})$ be a permutation of $(P_{i_1}, \ldots, P_{i_{K-1}})$ and $Q^1_K \in \co(P_{i_1}, \ldots, P_{i_K})^\circ$. Define the sequence 
\begin{align*}
Q_K^i & \longleftarrow \text{Residue}(Q_K^{i-1} \, | \, Q_{i-1});
\end{align*}
\noindent then, $Q_K^K = P_{i_K}$.
\end{lemma}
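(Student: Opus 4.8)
The plan is to push everything through Proposition~\ref{equiv_opt} down to the simplex of mixture proportions and then simply track how the support of the proportion vector shrinks under each residue. After relabeling I may assume $i_j = j$ for $j \in [K]$ and restrict attention to $P_1,\ldots,P_K$, which is again a jointly irreducible family (if $\sum_{i=1}^K \gamma_i P_i$ is a distribution, then by linear independence of $P_1,\ldots,P_L$ from Lemma~\ref{B_1} this is the unique representation in $\spa(P_1,\ldots,P_L)$, so characterization (b) of joint irreducibility forces every $\gamma_i \geq 0$). Writing $Q_j = \be_{\sigma(j)}^T \bP$ for a permutation $\sigma$ of $[K-1]$ and $Q_K^1 = (\bmeta^{(1)})^T \bP$ with $\bmeta^{(1)} \in \co(\be_1,\ldots,\be_K)^\circ$, the claim $Q_K^K = P_{i_K}$ becomes the statement that the mixture proportion $\bmeta^{(K)}$ of $Q_K^K$ equals $\be_K$.

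The crux is one elementary computation: the residue of a mixture proportion against a vertex. For $\bmeta \in \si_K$ and $\be_m$ with $0 < \eta_m < 1$, Proposition~\ref{label_noise_bin_case} gives $\kappa^*(\bmeta \mid \be_m) = \eta_m$, so the (unique) two-sample residue is $\frac{\bmeta - \eta_m \be_m}{1-\eta_m}$, namely the map that zeros the $m$-th coordinate and renormalizes the rest. I would then prove by induction on $i$ that $\bmeta^{(i)}$ is supported exactly on $[K]\setminus\{\sigma(1),\ldots,\sigma(i-1)\}$ and is strictly positive there. In the inductive step the two distributions combined are $Q_K^{i-1}$ and $Q_{i-1}$; since $\bmeta^{(i-1)}$ still has a positive $K$-th coordinate it lies outside $\spa(\be_{\sigma(1)},\ldots,\be_{\sigma(K-1)})$, so the family $\{\be_{\sigma(1)},\ldots,\be_{\sigma(K-1)},\bmeta^{(i-1)}\}$ is linearly independent. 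This legitimizes invoking Proposition~\ref{equiv_opt} (applied to $P_1,\ldots,P_K$, with the numerator index being that of $Q_K^{i-1}$ and $A$ the singleton index of $Q_{i-1}$), which yields $\kappa^*(Q_K^{i-1}\mid Q_{i-1}) = \kappa^*(\bmeta^{(i-1)}\mid \be_{\sigma(i-1)}) < 1$ and identifies the residue in distribution space as $(\bmeta^{(i)})^T\bP$ with $\bmeta^{(i)} = \mathrm{Residue}(\bmeta^{(i-1)}\mid \be_{\sigma(i-1)})$. Applying the vertex formula with $m = \sigma(i-1)$ removes coordinate $\sigma(i-1)$ and advances the induction; well-definedness is automatic since $\bmeta^{(i-1)} \neq \be_{\sigma(i-1)}$ (distinct supports) forces $Q_K^{i-1}\neq Q_{i-1}$, and Proposition~\ref{equiv_opt} supplies $\kappa^* < 1$.

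After the $K-1$ steps the deleted coordinates $\sigma(1),\ldots,\sigma(K-1)$ exhaust $[K-1]$, so $\bmeta^{(K)}$ is supported on $\{K\}$ and, being a probability vector, equals $\be_K$; hence $Q_K^K = P_K = P_{i_K}$. I expect the only real obstacle to be bookkeeping rather than conceptual: correctly matching indices when instantiating Proposition~\ref{equiv_opt} at each iteration, and in particular verifying the linear-independence hypothesis of the proportion family at every step, since that is exactly what transfers the residue computation from distribution space to the simplex. The arithmetic of each residue is immediate once $\kappa^*(\bmeta \mid \be_m) = \eta_m$ is established, so no heavy computation is involved.
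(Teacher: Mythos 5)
Your proposal is correct and follows essentially the same route as the paper's proof: both transfer the computation to the simplex via Proposition~\ref{equiv_opt} and induct over the successive residues against the vertices $\be_{\sigma(i-1)}$, your ``zero the $\sigma(i-1)$-th coordinate and renormalize'' formula being exactly the paper's explicit expression $\bmu_k = \sum_{i \geq k} \alpha_i \be_i / \sum_{i \geq k} \alpha_i$ obtained there via statement \emph{1} of Lemma~\ref{facts}. If anything, your write-up is slightly more careful than the paper's on two points it leaves implicit: that the subfamily $P_{i_1},\ldots,P_{i_K}$ is again jointly irreducible, and that the linear-independence hypothesis of Proposition~\ref{equiv_opt} holds at each iteration because $\bmeta^{(i-1)}$ retains a positive $K$-th coordinate.
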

\begin{proof}
Relabel the distributions so that $Q_j = P_j$. Let $\vec{\mu}_i$ denote the mixture proportion of $Q_K^i$ and $\vec{e}_j$ the mixture proportion of $P_j$. Write $\bmu_1 = \sum_{i=1}^K \alpha_i \be_i$. We claim that $\bmu_k = \frac{\sum_{i \geq k} \alpha_i \be_i}{\sum_{i \geq k} \alpha_i}$ for all $k \leq K$. We prove this inductively. The base case $k=1$ follows since $\sum_{i \geq 1} \alpha_i = 1$. Next, we prove the inductive step. Suppose that $\bmu_{k-1} = \frac{\sum_{i \geq k-1} \alpha_i \be_i}{\sum_{i \geq k-1} \alpha_i}$. By Proposition \ref{equiv_opt}, the mixture proportion of $Q_K^k$, $\bmu_k$, is the residue of $\bmu_{k-1}$ with respect to $\be_{k-1}$. By statement 1 of Lemma \ref{facts}, we can write
\begin{align*}
\bmu_k & = \be_{k-1} + \alpha^* (\bmu_{k-1} - \be_{k-1}) \\
& = \frac{[\sum_{i \geq k-1} \alpha_i (1-\alpha^*) + \alpha^* \alpha_{k-1}] \be_{k-1} + \alpha^* \sum_{i \geq k} \alpha_i \be_i }{\sum_{i \geq k-1} \alpha_i}
\end{align*}
where
\begin{align*}
\alpha^* = \frac{1}{1-\kappa^*(\bmu_{k-1} \, | \, \be_{k-1})}
\end{align*} 
and we have used the inductive hypothesis $\bmu_{k-1} = \frac{\sum_{i \geq k-1} \alpha_i \be_i }{\sum_{i \geq k-1} \alpha_i}$. $\alpha^*$ is the value of the following optimization problem (in statement 1 of Lemma \ref{facts}):
\begin{align*}
\max(\alpha \geq 1 \, | \, \exists G, G = \bmu_{k-1} + \alpha(\be_{k-1} - \bmu_{k-1})).
\end{align*}
Inspection of the above optimization problem reveals that $\alpha^* = \frac{\sum_{i \geq k-1} \alpha_i}{\sum_{i \geq k} \alpha_i}$. Plugging this into the above equation gives $\bmu_k = \frac{\sum_{i \geq k} \alpha_i \be_i}{\sum_{i \geq k} \alpha_i}$. This establishes the claim.

Setting $k = K$, it follows that $\mu_K = \frac{\alpha_K \be_K}{\alpha_K} = \be_K$.
\end{proof}

\begin{cor}
\label{demix2_identification}
Let $P_1, \ldots, P_L$ be jointly irreducible and $\vec{\Pi}$ have full column rank. Then, with probability $1$, Demix2$(\tilde{\vec{P}})$ returns a permutation of  $\vec{P}$.
\end{cor}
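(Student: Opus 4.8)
The plan is to reuse the proof of Theorem \ref{demix_identification} almost verbatim, since Demix2 differs from Demix only in replacing the single multi-sample operation on line \ref{multi_sample_kappa_demix} of Algorithm \ref{demix_alg} with the cascade of two-sample residues on lines \ref{last_loop_start_demix2}--\ref{last_loop_end_demix2} of Algorithm \ref{demix2_alg}. Lemma \ref{mod_ident_lemma} is precisely the statement that this cascade produces the same output, so the only part of the argument that needs rechecking is the final step of the induction; everything else transfers unchanged.

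Concretely, I would prove by induction on $K$ the same strengthened claim used for Demix: for any $\set{i_1, \ldots, i_K} \subset [L]$ and $\set{S_1, \ldots, S_K} \subset \co(P_{i_1}, \ldots, P_{i_K})$, if $P_1, \ldots, P_L$ are jointly irreducible and the matrix $\vec{\Gamma}$ of mixture proportions of $S_1, \ldots, S_K$ has full row rank, then with probability $1$ Demix2$(S_1, \ldots, S_K)$ returns a permutation of $(P_{i_1}, \ldots, P_{i_K})$. Taking $K = L$ and $S_i = \p{i}$ yields the corollary: in the square case $M = L$, full column rank of $\vec{\Pi}$ means it is invertible, hence its rows (the mixture proportions of $\p{1}, \ldots, \p{L}$) are linearly independent, so $\vec{\Gamma} = \vec{\Pi}$ has full row rank.

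The base case $K = 2$ is literally identical to that of Theorem \ref{demix_identification}, because Demix and Demix2 coincide when $K = 2$. For the inductive step, Steps 1 and 2 of the proof of Theorem \ref{demix_identification} carry over without change, since they concern only the uniform sampling of $Q$, the FaceTest loop, and the fact that for large enough $n$ the residues $R_2^{(n)}, \ldots, R_K^{(n)}$ lie in the relative interior of a common face with linearly independent mixture proportions. Applying the inductive hypothesis to Demix2 then gives that $(Q_1, \ldots, Q_{K-1}) \longleftarrow \text{Demix2}(R_2^{(n)}, \ldots, R_K^{(n)})$ is a permutation of $K-1$ of the base distributions, say $(P_{i_1}, \ldots, P_{i_{K-1}})$. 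The only genuinely new work is the final step: I would verify that $Q_K^1 := \frac{1}{K}\sum_{i=1}^K S_i$ lies in $\co(P_1, \ldots, P_K)^\circ$, which holds because $\vec{\Gamma}$ has full rank (so $\frac{1}{K}\sum_i S_i$ is a convex combination of $P_1, \ldots, P_K$ with strictly positive weight on each), and that $(Q_1, \ldots, Q_{K-1})$ is the required permutation, which is exactly the inductive conclusion. Lemma \ref{mod_ident_lemma} then guarantees that the sequence $Q_K^i \longleftarrow \text{Residue}(Q_K^{i-1} \mid Q_{i-1})$ terminates at $Q_K^K = P_{i_K}$, the remaining base distribution, completing the induction.

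I expect no serious obstacle, since the heavy lifting is inherited from Theorem \ref{demix_identification} (Steps 1--3 modulo the final operation) and from Lemma \ref{mod_ident_lemma} (equivalence of the two-sample cascade with the multi-sample residue). The only points requiring care are confirming that the strict-interiority hypothesis $Q_K^1 \in \co(P_1, \ldots, P_K)^\circ$ of Lemma \ref{mod_ident_lemma} is met and keeping the index bookkeeping consistent, so that the cascade peels off exactly $P_{i_1}, \ldots, P_{i_{K-1}}$ in the order in which Demix2 produced them.
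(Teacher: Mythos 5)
Your proposal is correct and matches the paper's intended argument exactly: the paper states Corollary \ref{demix2_identification} without a separate proof precisely because Lemma \ref{mod_ident_lemma} is designed to substitute for the final step of the proof of Theorem \ref{demix_identification}, with Steps 1 and 2 carrying over unchanged. Your verification of the one nontrivial hypothesis, $\frac{1}{K}\sum_{i=1}^K S_i \in \co(P_1,\ldots,P_K)^\circ$ via full rank of $\vec{\Gamma}$, is the same observation the paper makes in Step 3 of the proof of Theorem \ref{demix_identification}.
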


\begin{proof}[Proof of Theorem \ref{demix_estim}]
Note that every estimator of a distribution in the DemixHat algorithm is a ResidueHat estimator since \emph{(i)} the Demix2 algorithm \ref{demix2_alg} only considers distributions that are in $\co(P_1, \ldots, P_L)$ and \emph{(ii)} only computes Residue($F \, | \, H$) if $F \neq H$. To see why \emph{(ii)} is true, consider: the Demix2 algorithm computes Residue($\cdot \, | \, \cdot$) at lines  \ref{demix2_res_1} and \ref{last_residue_compute_demix2} in Demix2 and line \ref{findface_res_line} in FindFace. In the proof of Theorem \ref{demix_identification}, we showed that $S_1, \ldots, S_K$ are always linearly independent and therefore distinct. This implies that in lines \ref{demix2_res_1} and \ref{last_residue_compute_demix2} in Demix2 and line \ref{findface_res_line} in FindFace, the residue function is called on distinct distributions. Thus, every estimator of a distribution of the DemixHat algorithm satisfies the assumptions of Lemma \ref{main_est_ineq}. 

First, we argue that the order of the ResidueHat estimators is bounded; this implies that the constants in the uniform deviation inequalities associated with the ResidueHat estimators are bounded. We give a very loose bound. DemixHat calls itself at most $L-1$ times and in each call recurses on at most $L-1$ ResidueHat estimators and calculates at most $L-1$ more ResidueHat estimators. Therefore, each ResidueHat estimator has order at most $(L-1)^3$.

Second, let $A_i$ denote the event that DemixHat recurses on $i$ distributions lying in the relative interior of an $i$-face in the $(L-i)$th recursive call. We show that the event $\cap_{i =2}^{L-1} A_i$ occurs with probability tending to $1$ as $\vec{n} \longrightarrow \infty$. Consider $A_{L-1}$. Let $\est{R}_i^{(n)}$ denote the estimate of the $i$th distribution in line \ref{R_hat_demixhat} in the $n$th iteration of the for loop in Algorithm \ref{findface_hat_alg} and let $R_i^{(n)}$ denote the corresponding distribution. Let $\kappa^*_{i,j,n} = \kappa^*(R^{(n)}_i \, | \, R^{(n)}_j)$. From the proof of Theorem \ref{demix_identification}, there exists an integer $N_1 \geq 0$ such that for $n \geq N_1$, $R_i^{(n)}$ lies in the relative interior of the same face for all $i = 2, \ldots, {L}$. Further, using the notation from the proof of Theorem \ref{demix_identification}, we have that the mixture proportions of the $R_i^{(n)}$s, i.e., the $\bmu_i^{(n)}$s, converge to a common $\blambda$ on this face, i.e., for all $i =2,\ldots, L$, $\norm{\bmu_i^{(n)} - \blambda} \longrightarrow 0$. Thus, by statement 3 of Lemma \ref{continuity}, for all $i \neq j \in [L] \setminus \set{1}$ $\kappa^*(\bmu_i^{(n)} \, | \, \bmu_j^{(n)}) \longrightarrow 1$. Hence, there exists $N_2 \geq N_1$ such that $\kappa^*(\bmu_i^{(N_2)} \, | \, \bmu_j^{(N_2)}) > \epsilon$ for all $i \neq j$. By statement 1 of Lemma \ref{face_test_hat} and a union bound argument, with probability increasing to $1$, $\text{FaceTestHat}(\est{R}_2^{(n)}, \ldots, \est{R}_L^{(n)} \, | \, \epsilon)$ returns $0$ for all $n < N_1$ since $R_2^{(n)}, \ldots, R_L^{(n)}$ are not on the relative interior of the same face. Thus, with probability tending to $1$, FaceTest does not make the mistake to return $1$ before the distributions $R_2^{(n)}, \ldots, R_L^{(n)}$ are on the relative interior of the same face. By statement 2 of Lemma \ref{face_test_hat}, with probability tending to $1$ as $\bn \longrightarrow \infty$, $\text{FaceTestHat}(\est{R}_2^{(N_2)}, \ldots, \est{R}_L^{(N_2)} \, | \, \epsilon)$ returns $1$. Hence, with probability increasing to $1$, the event $A_{L-1}$ occurs. Applying the same argument to $A_i$ for $i < L-1$ and taking the union bound shows that $\cap_{i =2}^{L-1} A_i$ occurs with probability tending to $1$ as $\bn \longrightarrow \infty$. 

Now, we can complete the proof. Under the assumptions of Theorem \ref{demix_identification}, there is a permutation $\sigma$ such that for each distribution $Q_i$ estimated by $\est{Q}_i$, $P_{\sigma(i)} = Q_i$. By Proposition \ref{uniform_convergence}, as $\vec{n} \longrightarrow \infty$, $\est{Q}_i$ converges uniformly to $Q_i$. The result follows.
\end{proof}

\subsection{Classification with Partial Labels}
\label{partial_label_estimation}

In this section, we prove Theorem \ref{partial_label_hat}. To begin, we briefly sketch an argument that one can reduce any instance of a partial label model satisfying \textbf{(B3)} and \textbf{(A)} to an instance of a partial label model that also satisfies \textbf{(D)}. Let $J = \set{i : \bPi^+_{i,:} = \vec{e}_j^T \text{ for some } j \in [L]} = \set{j_1, \ldots, j_k}$, the set of indices of contaminated distributions that are equal to some base distribution. Compute $\text{Residue}(\p{i} \, | \, \p{{j_1}})$ for $i \in [L] \setminus J$ if there is $l$ such that $\bPi^+_{i,l} = \bPi^+_{j_1, l} = 1$. Replace $\p{i}$ with $\text{Residue}(\p{i} \, | \, \p{{j_1}})$ (and call it $\p{i}$ for simplicity of presentation). Update $\bPi^+$ and remove $j_1$ from $J$. Repeat this procedure until $J$ is empty. Then, there will be $(L - |J|)$ $\p{i}$ lying in a $(L - |J|)$-face of $\Delta_L$ that are not equal to any of the base distributions and the other contaminated distributions will be equal to base distributions. Then, it suffices to solve the instance of the partial label model on the $(L - |J|)$-face, which satisfies \textbf{(D)}.

Next, we introduce VertexTestHat (Algorithm \ref{vertex_test_hat_alg}), an empirical version of VertexTest, and prove that it satisfies a useful consistency property.

\begin{algorithm}[t]
\caption{$\text{VertexTestHat}(\bPi^+, (\ed{\tilde{P}}_1, \ldots, \ed{\tilde{P}}_M)^T, (\est{Q}_1, \ldots, \est{Q}_L)^T)$}
\begin{algorithmic}[1]
\label{vertex_test_hat_alg}
\STATE Form the matrix $\est{\vec{M}}_{i,j} \coloneq \est{\kappa}(\ed{\tilde{P}}_i \, | \, \est{Q}_j)$
\STATE Let $|\bPi^+|$ denote the number of nonzero entries in $\bPi^+$
\STATE Form the matrix $\est{\vec{Z}}$ by making the $|\bPi^+|$ largest entries of $\est{\vec{M}}$ equal to $1$ and the rest of its entries equal to $0$
\STATE Use any algorithm that finds a permutation matrix $\vec{C}$ such that $\widehat{\vec{Z}} \vec{C} = \bPi^+$ (if it exists)
\IF{such a permutation matrix $\vec{C}$ exists}
\RETURN $(1, \vec{C}^T)$
\ELSE
\RETURN $(0, \vec{0})$
\ENDIF
\end{algorithmic}
\end{algorithm}

\begin{lemma}
\label{vertex_hat_lemma}
\sloppy Suppose that $P_1, \ldots, P_L$ satisfy \textbf{(A$''$)}, $\vec{\Pi}$ has full column rank, the columns of $\bPi^+$ are unique and $\bPi^+$ satisfies \textbf{(D)}. Let $\est{Q}_1, \ldots, \est{Q}_L$ be ResidueHat estimators of $Q_1, \ldots, Q_L$, respectively. Suppose that  $(Q_1, \ldots, Q_L)$ is a permutation of $(P_1, \ldots, P_L)$. Then, with probability tending to $1$ as $\vec{n} \longrightarrow \infty$, $\text{VertexTestHat}(\bPi^+, (\ed{\tilde{P}}_1, \ldots, \ed{\tilde{P}}_M)^T, (\est{Q}_1, \ldots, \est{Q}_L)^T)$ returns a permutation matrix $\vec{C}$ such that $\forall i$, $\vec{C}_{i,:} (\est{Q}_1, \ldots, \est{Q}_L)^T$ is a ResidueHat estimator of $P_i$.
\end{lemma}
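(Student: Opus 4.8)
The plan is to show that, with probability tending to $1$, the empirical threshold matrix $\est{\vec{Z}}$ formed inside VertexTestHat coincides exactly with the population matrix $\vec{Z}$ with entries $Z_{i,j} = \ind{\kappa^*(\p{i} \, | \, Q_j) > 0}$ that VertexTest (Algorithm \ref{vertex_test_alg}) would construct on the same inputs. Once $\est{\vec{Z}} = \vec{Z}$, the permutation-finding step of VertexTestHat behaves identically to that of VertexTest, and the conclusion follows from Lemma \ref{vertex_test}.

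First I would record the population picture. Since $(Q_1, \ldots, Q_L)$ is a permutation of $(P_1, \ldots, P_L)$, write $Q_j = P_{\sigma(j)}$ for a permutation $\sigma$. Because $\p{i} = \bpi_i^T \bP$ and $Q_j = \be_{\sigma(j)}^T \bP$, statement \emph{3} of Lemma \ref{facts} gives $\kappa^*(\p{i} \, | \, Q_j) > 0$ iff $\sS(\be_{\sigma(j)}) \subseteq \sS(\bpi_i)$, i.e.\ iff $\pi_{i,\sigma(j)} > 0$, i.e.\ iff $S_{i,\sigma(j)} = 1$. Hence $Z_{i,j} = S_{i,\sigma(j)}$, so $\vec{Z}$ is a column permutation of $\vec{S}$ and in particular has exactly $|\vec{S}|$ nonzero entries, matching the number of entries VertexTestHat sets to $1$.

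The heart of the argument --- and the step I expect to be the main obstacle --- is controlling the empirical matrix $\est{\vec{M}}_{i,j} = \est{\kappa}(\ed{\tilde{P}}_i \, | \, \est{Q}_j)$. I would argue that each entry satisfies \textbf{(RC)} via Lemma \ref{rate}: $\ed{\tilde{P}}_i$ is an empirical distribution and so satisfies \textbf{(UDI)} by the VC inequality, $\est{Q}_j$ is a ResidueHat estimator and so satisfies \textbf{(UDI)} by Lemma \ref{main_est_ineq}, and $\p{i}$ satisfies \textbf{(SC)} with respect to $Q_j$ by Lemma \ref{sup_cond}. The hypotheses of Lemma \ref{sup_cond} require $\p{i}, Q_j \in \co(P_1,\ldots,P_L)$ --- immediate since $\bpi_i, \be_{\sigma(j)} \in \si_L$ --- and $\p{i} \neq Q_j$; the latter is precisely where assumption \textbf{(D)} enters, as \textbf{(D)} forbids any contaminated distribution from coinciding with a base distribution. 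With \textbf{(RC)} in hand, $\est{M}_{i,j} \to \kappa^*(\p{i} \, | \, Q_j)$ in probability for every $(i,j)$.

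Finally I would run the thresholding argument. Let $\kappa_{\min} = \min\{\kappa^*(\p{i} \, | \, Q_j) : Z_{i,j} = 1\} > 0$, a minimum over finitely many strictly positive values. By \textbf{(RC)} and a union bound over the finitely many entries, with probability tending to $1$ every entry with $Z_{i,j}=1$ exceeds $\kappa_{\min}/2$ while every entry with $Z_{i,j}=0$ (where $\kappa^* = 0$) lies below $\kappa_{\min}/2$. Consequently the $|\vec{S}|$ largest entries of $\est{\vec{M}}$ are precisely the positions of the nonzero entries of $\vec{Z}$, so $\est{\vec{Z}} = \vec{Z}$ on this event. On this event VertexTestHat seeks a permutation matrix $\vec{C}$ with $\vec{Z}\vec{C} = \vec{S}$, which exists and is unique by the uniqueness of the columns of $\vec{S}$ (exactly as in the proof of Lemma \ref{vertex_test}); by Lemma \ref{vertex_test} the returned matrix satisfies the population identity $\vec{C}^T \vec{Q} = \vec{P}$, so its $i$th row selects the unique $j$ with $\sigma(j) = i$, namely $\est{Q}_j$, which is a ResidueHat estimator of $Q_j = P_i$. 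This yields the claim.
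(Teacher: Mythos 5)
Your proposal is correct and follows essentially the same route as the paper's proof: establish \textbf{(RC)} for each $\est{\kappa}(\ed{\tilde{P}}_i \, | \, \est{Q}_j)$ via Lemma \ref{sup_cond} (with \textbf{(D)} guaranteeing $\p{i} \neq Q_j$), Lemma \ref{main_est_ineq}, and Lemma \ref{rate}; note that $\vec{Z}$ is a column permutation of $\vec{S}$ with exactly $|\vec{S}|$ positive entries; use a union bound to conclude that the $|\vec{S}|$ largest entries of $\est{\vec{M}}$ occupy precisely the positions where $\kappa^*_{i,j}>0$; and then invoke Lemma \ref{vertex_test}. Your write-up is in fact slightly more explicit than the paper's in two places — the $\kappa_{\min}/2$ separation threshold and the verification via statement \emph{3} of Lemma \ref{facts} that $Z_{i,j}=S_{i,\sigma(j)}$ — but these only spell out what the paper leaves implicit.
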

\begin{proof}
Define $\est{\kappa}_{i,j} \coloneq \est{\kappa}(\ed{\tilde{P}}_i \, | \, \est{Q}_j)$ and $\kappa^*_{i,j} \coloneq \kappa^*(\p{i} \, | Q_j)$. We claim that $\est{\kappa}_{i,j}$ satisfies a \textbf{(RC)}. Since $P_1, \ldots, P_L$ satisfy \textbf{(A$''$)} and by assumption \textbf{(D)} $Q_j \neq \p{i}$, by Lemma \ref{sup_cond}, $\p{i}$ satisfies \textbf{(SC)} wrt $Q_j$. Since $\ed{\tilde{P}}_i$ is an empirical distribution, $\ed{\tilde{P}}_i$  satisfies a \textbf{(UDI)}. Since $\est{Q}_j$ is a ResidueHat estimator, $\est{Q}_j$ satisfies a \textbf{(UDI)} by Lemma \ref{main_est_ineq}. Therefore, the hypotheses of Lemma \ref{rate} are satisfied and $\est{\kappa}_{i,j}$ satisfies a \textbf{(RC)}. 

Form the matrix $Z_{i,j} = \ind{\kappa^*(\p{i} \, | \, Q_j) > 0}$ as in Algorithm \ref{vertex_test_alg}. Since $Q_1, \ldots, Q_L$ are a permutation of $P_1, \ldots, P_L$, $\vec{Z}$ is formed by permuting the columns of $\bPi^+$ appropriately. Thus, there are $|\bPi^+|$ $(i,j)$ pairs such that $\kappa_{i,j} > 0$ and the rest are such that $\kappa_{i,j}=0$. Then, using Lemma \ref{rate} and a union bound, with probability tending to $1$ as $\vec{n} \longrightarrow \infty$, $\est{\kappa}_{i,j}$ is among the $|S|$ largest values in the matrix $\est{M}$ if and only if $\kappa_{i,j} > 0$. On this event, $\text{VertexTestHat}(\bPi^+, (\ed{\tilde{P}}_1, \ldots, \ed{\tilde{P}}_M)^T, (\est{Q}_1, \ldots, \est{Q}_L)^T)$ and $\text{VertexTest}(\bPi^+, (\p{1}, \ldots, \p{M})^T, (Q_1, \ldots, Q_L)^T)$ return the same output. Since $(Q_1, \ldots, Q_L)$ is a permutation of $(P_1, \ldots, P_L)$ by hypothesis, by Lemma \ref{vertex_test} $\text{VertexTest}(\bPi^+, (\p{1}, \ldots, \p{M})^T, (Q_1, \ldots, Q_L)^T)$ returns $(1, \vec{C}^T)$ such that $\vec{C}^T (Q_1, \ldots, Q_L)^T = \vec{P}$. The result follows.
\end{proof}

\begin{proof}[Proof of Theorem \ref{partial_label_hat}]
Let $(\est{Q}_1, \ldots, \est{Q}_L) \longleftarrow \text{DemixHat}(\ed{\tilde{P}}_1, \ldots, \ed{\tilde{P}}_M \, | \, \epsilon)$. By Theorem \ref{demix_estim}, w.p. tending towards $1$ as $\bn \longrightarrow \infty$, there exists a permutation $\sigma: [L] \longrightarrow [L]$ such that for every $i \in [L]$,
\begin{align*}
\sup_{E \in \sE} |\est{Q}_i(E) - P_{\sigma(i)}(E)| < \delta.
\end{align*}
From the proof of Theorem \ref{demix_estim}, each $\est{Q}_i$ is a ResidueHat estimator. The assumptions of Lemma \ref{vertex_hat_lemma} are satisfied. The result follows immediately from  Lemma \ref{vertex_hat_lemma}.
\end{proof}

\section{Previous Results}
\label{prev_app}

\begin{lemma}[Lemma A.1 \citep{blanchard2014}]
\label{A_1}
The maximum operation in the definition of $\kappa^*$ and $\widehat{\kappa}$ (lines (\ref{multisample_kappa}) and (\ref{multisample_kappa_estimator}), respectively) is well-defined, that is, the outside supremum is attained at at least one point. 
\end{lemma}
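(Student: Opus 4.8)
The plan is to treat both statements as instances of the extreme value theorem: each is the maximization of an objective over a parameter ranging in a compact subset of Euclidean space, so it suffices to exhibit the right compactness of the feasible set together with (upper semi)continuity of the objective into the extended reals. The only genuine work is choosing reformulations that turn the awkward ``there exists a distribution $G$'' constraint (for $\kappa^*$) and the division convention together with the inner infimum (for $\widehat{\kappa}$) into manifestly closed, respectively continuous, conditions.

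For $\kappa^*(F_0 \mid F_1,\ldots,F_K)$, first I would eliminate $G$ from the feasibility description. Writing $s = \sum_{i=1}^K \nu_i$, a vector $\vec{\nu}$ with $\nu_i \ge 0$ and $s \le 1$ is feasible if and only if the signed measure $F_0 - \sum_{i=1}^K \nu_i F_i$ is nonnegative: when $s<1$ one recovers the distribution $G = (F_0 - \sum_i \nu_i F_i)/(1-s)$, which has total mass $1$ automatically and is a genuine probability measure exactly when its numerator is a nonnegative measure; when $s=1$, nonnegativity together with total mass $0$ forces $F_0 = \sum_i \nu_i F_i$, the degenerate case. Nonnegativity of a signed measure is equivalent to $F_0(S) - \sum_i \nu_i F_i(S) \ge 0$ for every $S \in \sC$, and for each fixed $S$ this is a closed half-space in $\vec{\nu}$. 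Hence the feasible set is the intersection of the compact region $\{\nu_i \ge 0,\ \sum_i \nu_i \le 1\}$ with closed half-spaces, so it is a closed subset of a compact set, hence compact; it is nonempty since $\vec{\nu} = \vec{0}$ (with $G = F_0$) is feasible. The objective $\vec{\nu} \mapsto \sum_i \nu_i$ is linear, hence continuous, and therefore attains its maximum.

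For $\widehat{\kappa}$, the key observation is finiteness of the range of the empirical set-functions. Since $\ed{F}_0, \ldots, \ed{F}_M$ are empirical measures supported on a finite pool of at most $n_0 + \cdots + n_M$ sample points, the vector $(\ed{F}_0(S), \ldots, \ed{F}_M(S))$ takes only finitely many distinct values, say $v^{(1)},\ldots,v^{(R)} \in [0,1]^{M+1}$, as $S$ ranges over $\sC$ (a fortiori over $\sS$). Consequently, for fixed $\vec{\mu}$ the inner infimum over $S \in \sS$ is in fact a minimum over the finitely many achievable value-vectors, and the objective becomes $g(\vec{\mu}) = \min_{r} h_r(\vec{\mu})$, where, abbreviating the positive constants $\epsilon_i(\tfrac{1}{n_i})$ by $\epsilon_i$, we have $h_r(\vec{\mu}) = \frac{v_0^{(r)} + \epsilon_0}{(\sum_{i} \mu_i(v_i^{(r)} - \epsilon_i))_+}$, a positive constant over an affine denominator. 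I would then verify that each $h_r$, viewed as a map $\Delta_M \to [0,\infty]$, is continuous: where the denominator is positive this is immediate, while where it vanishes or is negative one has $h_r \equiv +\infty$, and $h_r(\vec{\mu}) \to +\infty$ as the denominator shrinks to $0$ (invoking the stated convention that the ratio is $\infty$ when the denominator is zero). A finite minimum of continuous $[0,\infty]$-valued functions is continuous, and $\Delta_M$ is compact, so $g$ attains its maximum (possibly $+\infty$, which the convention permits).

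I expect the main obstacle to be presentational rather than mathematical. For $\kappa^*$ one must argue cleanly that the existential feasibility constraint is equivalent to a closed condition, which is precisely the content of the signed-measure reformulation; for $\widehat{\kappa}$ the delicate point is handling the $(\cdot)_+$ and the $\infty$ convention carefully when checking continuity at the boundary where the denominator vanishes. Notably, neither argument uses the VC property: that hypothesis is reserved for the statistical guarantees elsewhere, whereas here only the finiteness of the empirical range and the compactness of the parameter domain are needed.
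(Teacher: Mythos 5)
Your proof is correct. Note first that this paper does not actually prove the lemma: it is imported verbatim from the appendix of \citet{blanchard2014}, so the comparison is against that cited argument. For the $\kappa^*$ part you follow essentially the same route as the cited proof: eliminate the existential quantifier over $G$ by observing that feasibility of $\vec{\nu}$ is equivalent to nonnegativity of the signed measure $F_0 - \sum_i \nu_i F_i$, i.e.\ to the family of closed half-space constraints $F_0(S) \geq \sum_i \nu_i F_i(S)$ for all $S \in \sC$; the feasible set is then a closed subset of the compact region $\{\nu_i \geq 0,\ \sum_i \nu_i \leq 1\}$, nonempty since $\vec{\nu} = \vec{0}$ is feasible, and the linear objective attains its maximum. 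Your handling of the degenerate case $\sum_i \nu_i = 1$ (nonnegativity plus total mass zero forces $F_0 = \sum_i \nu_i F_i$) is the right bookkeeping. For the $\widehat{\kappa}$ part you take a genuinely different and somewhat more elementary route: the standard argument notes that each map $\vec{\mu} \mapsto \frac{\ed{F}_0(S) + \epsilon_0}{(\sum_i \mu_i \ed{F}_i(S) - \sum_i \mu_i \epsilon_i)_+}$ is continuous into $(0,\infty]$ under the stated convention, so the infimum over $S \in \sS$ is upper semicontinuous on the compact simplex $\Delta_M$, and an upper semicontinuous extended-real function attains its maximum there. You instead exploit the empirical nature of $\ed{F}_0, \ldots, \ed{F}_M$: the value vectors $(\ed{F}_0(S), \ldots, \ed{F}_M(S))$ range over a finite set, so the inner infimum is a minimum of finitely many continuous $[0,\infty]$-valued functions, hence the objective is genuinely continuous, and the extreme value theorem applies directly. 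Both arguments are valid; the upper-semicontinuity route is more robust in that it never uses finiteness of the range and so extends to arbitrary plug-in estimates, whereas your finiteness trick is self-contained, avoids semicontinuity machinery altogether, and suffices exactly because the lemma as stated only concerns empirical distributions. Your closing remark that the VC hypothesis plays no role here is also accurate.
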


\begin{lemma}[Lemma B.1 \citep{blanchard2014}]
\label{B_1}
If $\Pi$ satisfies \textbf{(B1)}, then $\bpi_1, \ldots, \bpi_L$ are linearly independent. If $P_1, \ldots, P_L$ are jointly irreducible, then they are linearly independent. If $\bpi_1, \ldots, \bpi_L$ are linearly independent and $P_1, \ldots, P_L$ are linearly independent, then $\p{1}, \ldots, \p{L}$ are linearly independent.
\end{lemma}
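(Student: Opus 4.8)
The plan is to prove the three implications separately, since each collapses to a short linear-algebra argument once the appropriate characterization is invoked. None of the three is deep; the only place where something beyond bookkeeping is needed is the middle implication, where joint irreducibility must be converted into linear independence.

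For the first claim, I would simply note that \textbf{(B1)} asserts that $\vec{\Pi}$ is invertible. The rows of $\vec{\Pi}$ are exactly $\bpi_1^T, \ldots, \bpi_L^T$, and invertibility of a square matrix is equivalent to linear independence of its rows; hence $\bpi_1, \ldots, \bpi_L$ are linearly independent. This uses nothing beyond the definition of invertibility.

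For the second claim, I would argue the contrapositive using characterization \emph{(b)} of joint irreducibility. Suppose $P_1, \ldots, P_L$ are linearly dependent as finite signed measures, so there are scalars $c_1, \ldots, c_L$, not all zero, with $\sum_{i} c_i P_i = 0$. Evaluating on the whole space gives $\sum_i c_i = 0$; since the $c_i$ are not all zero, this forces at least two nonzero coefficients, and in particular some index $j \neq 1$ with $c_j \neq 0$. For any real $t$ set $\gamma_i = \ind{i = 1} + t c_i$. Then $\sum_i \gamma_i = 1 + t\sum_i c_i = 1$ and $\sum_i \gamma_i P_i = P_1 + t \sum_i c_i P_i = P_1$, which is a genuine distribution for every $t$. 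Choosing $t$ with sign opposite to that of $c_j$ makes $\gamma_j = t c_j < 0$, exhibiting a distribution $P_1 = \sum_i \gamma_i P_i$ in the span of the $P_i$ with a negative coefficient, contradicting \emph{(b)}. The hard part, such as it is, is precisely this construction: perturbing $P_1$ by a multiple of the dependence relation keeps the measure equal to a true distribution while driving one coefficient negative, which is exactly the configuration that joint irreducibility forbids.

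For the third claim, I would work directly from $\tilde{\vec{P}} = \vec{\Pi}\vec{P}$. Suppose $\sum_i a_i \p{i} = 0$ with $\vec{a} = (a_1, \ldots, a_L)^T$. Expanding $\p{i} = \sum_j \pi_{i,j} P_j$ and collecting terms gives $\sum_j \bigl(\sum_i a_i \pi_{i,j}\bigr) P_j = 0$. Linear independence of $P_1, \ldots, P_L$ then forces $\sum_i a_i \pi_{i,j} = 0$ for every $j$, i.e. $\vec{\Pi}^T \vec{a} = \vec{0}$. Since $\bpi_1, \ldots, \bpi_L$ are linearly independent, $\vec{\Pi}$ and hence $\vec{\Pi}^T$ is invertible, so $\vec{a} = \vec{0}$, proving the $\p{i}$ are linearly independent. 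No genuine obstacle arises here beyond keeping the index contraction straight.
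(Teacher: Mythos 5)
Your three arguments are all correct, and the lemma does hold exactly as you prove it. Note, however, that this paper does not actually prove Lemma \ref{B_1}: it is imported verbatim from the cited conference paper \citep{blanchard2014} and placed in the ``Previous Results'' appendix, so there is no in-paper proof to match. Your first and third claims (invertibility of $\vec{\Pi}$ gives independent rows; $\vec{\Pi}^T\vec{a}=\vec{0}$ plus invertibility kills any dependence among $\p{1},\ldots,\p{L}$) coincide with the only reasonable arguments and with what the cited source does. For the middle claim your route differs from the standard one in \citep{blanchard2014}: there, a nontrivial relation $\sum_i c_i P_i = 0$ is split into its positive- and negative-coefficient parts, both sides are shown to have equal positive total mass, and normalizing produces two equal convex combinations over disjoint index sets, contradicting characterization \emph{(a)} of joint irreducibility (with $\alpha = 1$). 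You instead invoke characterization \emph{(b)}: perturbing $P_1$ along the dependence relation, $\gamma_i = \ind{i=1} + t c_i$, keeps $\sum_i \gamma_i P_i = P_1$ a genuine distribution with $\sum_i \gamma_i = 1$ while driving $\gamma_j = t c_j$ negative for a suitable sign of $t$ --- a direct violation of \emph{(b)}, which quantifies over coefficient vectors rather than distributions. Your version is arguably cleaner, since it avoids the case analysis needed to show both index sets in the splitting are nonempty and carry equal mass; the cost is that it leans on the equivalence of \emph{(a)} and \emph{(b)}, which this paper asserts but also only cites from \citep{blanchard2014}, whereas the splitting argument works from \emph{(a)} alone. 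Either way, your proof is complete and sound.
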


\section{Experiments}
\label{experiments_section}

In this Section, we perform experiments that suggest that joint irreducibility of $P_1, \ldots, P_L$ is a reasonable assumption. In particular, our experiments suggest that on the datasets in question,  \textbf{(A$''$)} holds (which is a strictly stronger condition than joint irreducibility). We consider three datasets: classes 1, 2, and 3 of MNIST \citep{lecun1998}, the Iris dataset \citep{fisher1936}, and the Breast Cancer Wisconsin (Diagnostic) Data Set \citep{Dua:2017}. We use the Spectral Support Estimation algorithm \citep{vito2010, rudi2014} to estimate the support of each class in each dataset. We split each dataset into training, validation, and test sets, applying the algorithm to the training set, using the validation set to pick the hyperparameters, and evaluating the performance on the test set. We average our results over $60$ trials where in each trial we randomly permute the dataset, thus altering the training, validation, and test sets. Let $\widehat{S}_i$ denote an estimate of the support of class $i$. Tables \ref{cancer_sup_results}, \ref{iris_sup_results}, and \ref{mnist_sup_results} display an estimate of the probability that a point sampled from $P_i$ belongs to the estimate of the support $\widehat{S}_i$. They indicate that the Spectral Support Estimation has reasonably good performance in producing $\widehat{S}_i$s containing the support of the associated class. Tables  \ref{cancer_sep_results}, \ref{iris_sep_results}, and \ref{mnist_sep_results} use the $\widehat{S}_i$ to estimate the quantity $\Pr_{\bx \sim P_i}(\bx \in \cup_{j \neq i} \supp(P_j))$, which must be strictly less than $1$ for \textbf{(A$''$)}  to hold. We find that our estimates are considerably less than $1$, which suggests that joint irreducibility holds on these datasets.

\begin{table}[H]
    \begin{minipage}{.5\linewidth}
      \centering
\scalebox{0.75}{\begin{tabular}{lrrr}
\hline
         &   $i= 1$ &   $i = 2$  \\
\hline
$\widehat{\Pr}_{\bx \sim P_i}(\bx \in \widehat{S}_i)$ &      0.87 &      0.89 \\
\hline
\end{tabular}}
	\caption{Cancer Support Results.}
	\label{cancer_sup_results}
    \end{minipage}%
    \begin{minipage}{.5\linewidth}
      \centering
\scalebox{0.75}{
\begin{tabular}{lrrr}
\hline
         &   $i= 1$ &   $i = 2$  \\
\hline
$\widehat{\Pr}_{\bx \sim P_i}(\bx \in \cup_{j \neq i}  \widehat{S}_j)$ &       0.18 &      0.38 \\
\hline
\end{tabular}}
	\caption{Cancer Separability Results.}
	\label{cancer_sep_results}
    \end{minipage} 
\end{table}

\begin{table}[H]
    \begin{minipage}{.5\linewidth}
      \centering
\scalebox{0.75}{\begin{tabular}{lrrr}
\hline
         &   $i= 1$ &   $i = 2$  & $i = 3$ \\
\hline
$\widehat{\Pr}_{\bx \sim P_i}(\bx \in \widehat{S}_i)$ &       0.86 &       0.84 & 0.84 \\
\hline
\end{tabular}}
	\caption{Iris Support Results.}
	\label{iris_sup_results}
    \end{minipage} 
    \begin{minipage}{.5\linewidth}
      \centering
\scalebox{0.75}{\begin{tabular}{lrrr}
\hline
         &   $i= 1$ &   $i = 2$  & $i = 3$ \\
\hline
$\widehat{\Pr}_{\bx \sim P_i}(\bx \in \cup_{j \neq i} \widehat{S}_j)$ &       0.0 &       0.17 & 0.19 \\
\hline
\end{tabular}}
	\caption{Iris Separability Results.}
	\label{iris_sep_results}
    \end{minipage}%
\end{table}

\begin{table}[H]
    \begin{minipage}{.5\linewidth}
      \centering
\scalebox{0.75}{
\begin{tabular}{lrrr}
\hline
         &   $i= 1$ &   $i = 2$  & $i = 3$ \\
\hline
$\widehat{\Pr}_{\bx \sim P_i}(\bx \in  \widehat{S}_i)$ &       0.98 &         0.87 & 0.83 \\
\hline
\end{tabular}}
	\caption{MNIST Support Results.}
	\label{mnist_sup_results}
    \end{minipage}%
    \begin{minipage}{.5\linewidth}
      \centering
\scalebox{0.75}{\begin{tabular}{lrrr}
\hline
         &   $i= 1$ &   $i = 2$  & $i = 3$ \\
\hline
$\widehat{\Pr}_{\bx \sim P_i}(\bx \in \cup_{j \neq i} \widehat{S}_j)$ &      0.08 &       0.17 & 0.14  \\
\hline
\end{tabular}}
	\caption{MNIST Separability Results.}
	\label{mnist_sep_results}
    \end{minipage} 
\end{table}

\bibliography{references}

\end{document}